
\documentclass{article}

\usepackage{microtype}
\usepackage{graphicx}
\usepackage{subfigure}
\usepackage{booktabs} 

\usepackage[backref=page]{hyperref}



\usepackage[accepted]{icml2023}

\usepackage{amsmath}
\usepackage{amssymb}
\usepackage{mathtools}
\usepackage{amsthm}

\usepackage[capitalize,noabbrev]{cleveref}

\usepackage{titletoc}
\usepackage{wustyle}

\theoremstyle{plain}
\newtheorem{theorem}{Theorem}[section]
\newtheorem{proposition}[theorem]{Proposition}
\newtheorem{lemma}[theorem]{Lemma}

\theoremstyle{definition}
\newtheorem{definition}[theorem]{Definition}
\newtheorem{assumption}[theorem]{Assumption}
\theoremstyle{remark}
\newtheorem{remark}[theorem]{Remark}

\usepackage[textsize=tiny]{todonotes}

\icmltitlerunning{The Implicit Regularization of Dynamical Stability in Stochastic Gradient Descent}

\begin{document}


\twocolumn[
\icmltitle{The Implicit Regularization of Dynamical Stability in Stochastic\\ Gradient Descent}



\icmlsetsymbol{equal}{*}

\begin{icmlauthorlist}
\icmlauthor{Lei Wu}{pku,cmlr}
\icmlauthor{Weijie J. Su}{uppen}
\end{icmlauthorlist}

\icmlaffiliation{pku}{School of Mathematical Sciences, Peking University, Beijing, China}
\icmlaffiliation{cmlr}{Center for Machine Learning Research, Peking University, Beijing, China}
\icmlaffiliation{uppen}{Wharton Statistics and Data Science Department, University of Pennsylvania,  Philadelphia, USA}

\icmlcorrespondingauthor{Lei Wu}{leiwu@math.pku.edu.cn}
\icmlcorrespondingauthor{Weijie J. Su}{suw@wharton.upenn.edu}

\icmlkeywords{Implicit regularization, dynamical stability, Stochastic gradient descent}

\vskip 0.3in
]



\printAffiliationsAndNotice{}  

\begin{abstract}
In this paper, we study the implicit regularization of stochastic gradient descent (SGD) through the lens of {\em dynamical stability} \citep{wu2018sgd}. We start by revising existing stability analyses of SGD, showing how the Frobenius norm and trace of Hessian relate to  different notions of stability. 
Notably, if a global minimum is linearly stable for SGD, then the trace of Hessian must be less than or equal to $2/\eta$, where $\eta$ denotes the learning rate. 
By contrast, for gradient descent (GD), the stability imposes a similar constraint but only on  the largest eigenvalue of Hessian. We then turn to analyze the generalization properties of these stable minima, focusing specifically on   two-layer ReLU networks and diagonal linear networks. Notably, we establish the {\em equivalence} between these metrics of sharpness and certain parameter norms for the two models, which allows us to  show that  the stable minima of SGD provably generalize well. By contrast, the stability-induced regularization of GD  is provably too weak to ensure satisfactory generalization. This discrepancy  provides an explanation of why SGD often generalizes better than GD.
Note that the learning rate (LR) plays a pivotal role in the strength of stability-induced regularization. As the LR increases, the regularization effect becomes more pronounced, elucidating why SGD with a larger LR consistently demonstrates superior generalization capabilities. Additionally, numerical experiments are provided to support our theoretical findings.
\end{abstract}

\section{Introduction}
In modern machine learning, models are often  over-parameterized in the sense that they can easily interpolate all training data. Therefore, one may be concerned that algorithms may pick up  solutions that generalize badly on test data \cite{wu2017towards}. Fortunately, it has been  found that  simple SGD and its variants always converge to solutions that generalize well, even without empolying any explicit regularizations \cite{zhang2017understanding}. Furthermore,  SGD often  generalizes better than GD \cite{keskar2016large}.  Hence, there must exist certain ``implicit regularization'' mechanisms at work  \cite{neyshabur2014search}. As practitioners increasingly rely on implicit regularization to mitigate overfitting, it becomes imperative to understand the underlying mechanisms.

The most popular explanation is the {\em flat-minima hypothesis}: SGD tends to select flat minima \cite{keskar2016large} and flat minima generalize well \cite{hochreiter1994simplifying,hochreiter1997flat}. 
This  hypothesis has been widely  adopted in practice to tune the hyperparameters of SGD \cite{keskar2016large,jastrzkebski2017three,wu2020noisy} and to design new optimizers \cite{izmailov2018averaging,foret2020sharpness,wu2020adversarial} for better generalization. Despite its widespread use, the theoretical understanding  is still largely lacking: 1) Why does SGD favor flat minima? 2) Why do flat minima generalize?

In this paper, we aim to address these questions by adopting the perspective of dynamical stability \cite{wu2018sgd,wu2022alignment}. For over-parameterized models, all global minima are fixed points of SGD but their  stability can be different. Notably, when confronted with a small perturbation, SGD steers away from unstable minima, while stable minima tend to be more resilient, allowing SGD to persist and even reconverge after initial perturbations. This intriguing behavior suggests that SGD exhibits a preference for stable minima. The remaining puzzle lies in understanding the relationship between the stability of a minimum, its sharpness, and its generalization properties.

It is well-known that the stability condition for GD is  $\|H(\theta)\|_2\leq 2/\eta$ \cite{wu2018sgd}, where $H(\cdot)$ denotes the Hessian matrix. This implies that  GD tends to select minima whose sharpness, as measured by the spectral norm of Hessian, is bounded independently of the model size and sample size. 
\citet{mulayoff2021implicit,nacson2022implicit} showed that for univariate two-layer ReLU networks and diagonal linear networks, this sharpness can control the model capacity under some data assumption. Therefore, the stability ensures that GD selects flat minima that generalize well  for these models.

Then a natural question is: {\em Can we establish a similar understanding of SGD?} \citet{ma2021linear} showed that if a global minimum is {\em linearly stable} for SGD, then the trace of Hessian  $\tr(H(\theta))$ must be bounded. Meanwhile, it was also proved that for  ReLU networks, $\tr(H(\theta))$ can control the Sobolev seminorm of the functions implemented. These  together provide insight into  how dynamical stability can act as a form of regularization in SGD. See \citet{wu2017towards} for a similar argument. However, it is important to note that this smoothness-based generalization cannot explain the superiority of neural networks in high dimensions \citep{barron1993universal} as the resulting generalization error bound suffers from the curse of dimensionality. 
The major reason is that the upper bound of $\tr(H(\theta))$ obtained in \citet{ma2021linear} grows linearly with the number of parameters. 
 In contrast,  by introducing a new notion of stability, \citet{wu2022alignment} showed that the stability imposes a size-independent control on the Frobenius norm of Hessian: $\|H(\theta)\|_F$ but \citet{wu2022alignment} did not discuss the corresponding generalization properties. In a word, understanding the stability-induced regularization is still incomplete for SGD and in particular, the following critical questions remain to be answered:
\begin{itemize}
    \vspace*{-.5em}
\item Can we show that the stable minima of SGD generalize well in high dimensions?
\item Can we explain why SGD generalizes better than GD?
\end{itemize}

\paragraph*{Our contributions.} 

We begin by presenting an improved stability analysis for SGD, demonstrating that stability imposes a size-independent control on either the Frobenius norm or the trace of the Hessian matrix, depending on the notion of stability used. 
Specifically, if a global minimum $\theta$ is {\em linearly stable} for SGD, then $\tr(H(\theta))\leq 2/\eta$; if it satisfies a loss stability, then $\|H(\theta)\|_F= O(1/\eta)$. In contrast, the stability of GD only controls the largest eigenvalue of Hessian: $\|H(\theta)\|_2\leq 2/\eta$.
We then examine the implications of  these stability conditions  for generalization, and  our main findings are summarized as follows.
\begin{itemize} 
    \vspace*{-.2em}
\item We first consider two-layer ReLU networks. It is proved that all  three  aformentioned measures of sharpness can effectively bound the path norm \cite{neyshabur2015norm,ma2019priori}, thus controlling the generalization gap. As a result, for both SGD and GD, the stable minima are guaranteed to generalize well, which stems from the stability conditions that impose constraints ensuring that the path norms remain bounded by $O(1/\eta)$, irrespective of the model's size. Thus, the size-independent nature of sharpness control strengthens the assurance of favorable generalization properties for the stable minima.

\item We next delve into the analysis of diagonal linear networks, which are essentially over-parameterized linear models. We prove that the spectral norm, Frobenius norm, and trace of Hessian are roughly equivalent to the $\ell_\infty$, $\ell_2$ and $\ell_1$ norm of the effective coefficients, respectively. 
The stability of GD only guarantees a size-independent control on the spectral norm of Hessian, thereby the $\ell_\infty$ norm of effective coefficients. 
 Consequently, stable minima of GD may not generalize well since the $\ell_\infty$ norm cannot yield an effective capacity control for linear models. In stark contrast, the stability of SGD imposes  size-independent controls on the trace or Frobenius norm of Hessian, thereby  the $\ell_1$ or $\ell_2$ norm of effective coefficients. As a result, the stable minima of SGD must generalize well. 

 This comparison between SGD and GD effectively demonstrates that in the case of diagonal linear networks, the stability of SGD imparts a substantially stronger regularization effect than that of GD. This provides an explanation for the superior generalization performance consistently observed in SGD over GD.
\end{itemize}

It is important to note that the strength of stability-induced regularization  crucially depends on the size of LR. A larger LR imposes a stricter constraint on the sharpness of stable minima, thereby enforcing SGD/GD to select flatter minima. This explains why SGD with a large LR often generalizes better.
To support our theoretical findings, systematic numerical experiments are provided  and in particular, we examine in detail the impact of varying LRs.

\subsection{Related works}
\vspace*{-.4em}
\paragraph*{Implicit regularization of SGD.}
  
In SGD, there exist multiple mechanisms that contribute to the implicit regularization \cite{su2021neurashed,he2019local,vardi2022implicit}.  One is the specific dynamical process, along with small initialization, aiding SGD in finding solutions that generalize well \cite{zhang2017understanding,woodworth2020kernel,blanc2020implicit,chizat2020implicit,pesme2021implicit,ma2020quenching,xu2021towards}. This type of implicit regularization heavily relies on the initialization size. In contrast,   the stability-induced regularization \cite{wu2018sgd} is independent of the initialization and can explain why using a large LR and small batch size is more favorable \cite{wu2022alignment,ma2021linear}. In the experiments of the current work,  
we intentionally exclude the small initialization-induced regularization by using a large initialization, as our focus is understanding the stability-induced regularization.

\citet{barrett2020implicit,smith2020origin} explained the benefit of using a large LR through a modified equation analysis. However, the analysis is  only  validated for a finite time and hence, cannot explain why SGD favors certain minima,  as the latter is a long-time property. In contrast, our stability analysis does not have this limitation.

\vspace*{-.4em}
\paragraph*{Generalization of flat minima.} To explain why flat minima generalize well, many works rely on the PAC-Bayesian  argument \citep{mcallester1999some}. This argument established the connection between  certain sharpness and the average generalization error of perturbed solutions, which, however, is not for the original one \citep{neyshabur2017,tsuzuku2020normalized}. Furthermore, Bayesian arguments  tend to ignore the specific parametrization of neural networks. \cite{mulayoff2021implicit,ma2021linear,nacson2022implicit} established sharpness-based generalization bounds for some neural networks but they are limited to either linear or low-dimensional  cases. In contrast, our sharpness-based generalization bound of two-layer ReLU networks is effective in high dimensions.

In addition, some works argue that sharpness itself can not effectively control model capacity since  ReLU neural nets are invariant to node-wise rescaling, whereas the sharpness is not. Consequently,  sharp minima can generalize well  \citep{dinh2017sharp}. To overcome this issue, various rescaling-invariant sharpness has been proposed, e.g., the Fisher-Rao metric \citep{pmlr-v89-liang19a}, normalized flatness \citep{tsuzuku2020normalized}, relative flatness \citep{petzka2021relative}. However, our analysis suggests that flatness can be {\em sufficient} for good generalization. 

\vspace*{-.5em}
\paragraph*{Notation.}
For an integer $k$, let $[k]=\{1,2,\dots,k\}$.
For a vector $v$, let $\|v\|_{p}=(\sum_i v_i^p)^{1/p}$, $\|\cdot\|=\|\cdot\|_2$, and  $\hv=v/\|v\|_2$.
For a matrix $A$, denote by $\|A\|_2$ and $\|A\|_F$ the spectral norm and the Frobenius norm, respectively and let $\{\lambda_i(A)\}_{i\geq 1}$ be the eigenvalues of $A$ in a decreasing order. Let $\SS^{d-1}=\{x\in\RR^d\,|\,\|x\|=1\}$ and $r\SS^{d-1}=\{x\in \RR^d \,|\, \|x\|=r\}$. For a distribution $\mu$, let $\|f\|^2_{L_2(\mu)}=\EE_{x\sim\mu}[f^2(x)]$.
We will use $C$ to denote an absolute constant, whose value may change from line to line. For notation simplicity, we write $X\lesssim Y$ if $X\leq CY$ and  $X\gtrsim Y$ if $X\geq CY$. Analogously, we write $X\sim Y$ if  $X\lesssim Y$ and $X\gtrsim Y$ hold simultaneously. 

\vspace*{-.2em}
\section{Preliminaries}
\vspace*{-.1em}
\label{sec: pre}

Let $S=\{(x_i, y_i=f^*(x_i))\}_{i=1}^n$  be the training set, where $x_1,\dots,x_n$ are \iid samples drawn from the input distribution $\rho$ and $f^*:\RR^d\mapsto\RR$ be the target function. Our task is to recover $f^*$ from $S$.
Let $f(\cdot;\theta):\RR^d\mapsto\RR$ be our model parameterized by $\theta\in\RR^p$, where $d$ and $p$ denote the input dimension and the model size (i.e., the number of parameters), respectively. 
The empirical and population risk are  given by 
\vspace*{-1em}
\begin{equation}
\begin{aligned}
\erisk(\theta) &= \frac{1}{2n} \sumin (f(x_i;\theta)-y_i)^2\\
\risk(\theta) &= \half\EE_{x,y}[(f(x;\theta)-y)^2],
\end{aligned}
\vspace*{-1em}
\end{equation}

where the square loss is used.
Throughout this paper, we make the following   over-parameterization assumption.
\begin{assumption}[Over-parameterization]
$\min_{\theta} \erisk(\theta)=0$
\end{assumption}
Let $g_i(\theta)=\nabla f(x_i;\theta)$ and $e_i(\theta) = f(x_i;\theta)-y_i$. Then the Hessian matrix is given by 

\vspace*{-1.2em}
\begin{equation}\label{eqn: hessian-expression}
  H(\theta) = \fn \sumin g_i(\theta) g_i(\theta)^T + \fn \sumin e_i(\theta) \nabla^2 f(x_i;\theta).
\end{equation}
Let $G(\theta)=\fn \sumin g_i(\theta) g_i(\theta)^T$ be the associate empirical Fisher matrix. 
Then, \eqref{eqn: hessian-expression} implies that when fitting errors are small, we have $H(\theta)\approx G(\theta)$ and in particular,  $H(\theta)=G(\theta)$ if $\theta$ is a global minimum. 
Note that $G(\theta)$ is always positive semi-definite but $H(\theta)$ is not. In our analysis of the dynamical stability, we shall focus on the region with small empirical risk and hence,  we do not distinguish the Fisher matrix and Hessian matrix too much since they are close to each other.

\paragraph*{Gradient clipping.} In our experiments, we will use large initialization to exclude the implicit regularization induced by small initialization. This choice will make it very often that SGD and GD with a large LR 
diverge initially although there exist stable minima on landscape. To resolve this issue, we shall apply gradient clipping \cite{pascanu2013difficulty,mikolov2012statistical} to stabilize the training. Specifically,  we use the following clipped (stochastic) gradient for SGD/GD update:
\vspace*{-.5em}
\[
    \nabla \erisk_{clip}(\theta) = \min\{\|\nabla \erisk(\theta)\|, \delta \} \frac{\nabla \erisk(\theta)}{\|\nabla \erisk(\theta)\|},
\vspace*{-.2em}
\]
where $\delta$ denotes the clipping threshold. In all our experiments, we find that gradient clipping is activated only during the early and intermediate training stages, and will be  automatically switched off when SGD/GD nearly converges since the gradient norm there is lower than the clipping threshold. Therefore, gradient clipping does not change the dynamical stability of SGD/GD at global minima.

\vspace*{-.2em}
\section{The dynamical stability of SGD}

\label{sec: stability}
In this section, we  consider three measures of sharpness: $\|G(\theta)\|_2$, $\|G(\theta)\|_F$, and  $\tr(G(\theta))$ and study how they are related to the stability of  SGD and GD

It is well-known that  if $\theta$ is a linearly stable for GD, then
$
\|H(\theta)\|_2\leq 2/\eta 
$ \citep{wu2018sgd,mulayoff2021implicit}, which implies  $\|G(\theta)\|_2\leq 2/\eta$ if  $\theta$ is a global minima. Next, we will show that similar size-independent controls  hold for SGD but on different norms of Fisher matrix.


\vspace*{-.2em}
\subsection{Linear stability} 
\label{sec: linear-stability}

Consider the mini-batch SGD:
\begin{equation}\label{eqn: mini-batch-sgd}
\theta_{t+1} = \theta_t - \eta (f(x_{i_t};\theta_t)-y_{i_t})\nabla f(x_{i_t};\theta_t),
\end{equation}
where $i_t\stackrel{iid}{\sim} \unif([n])$. Throughout this paper, we assume the batch size to be $1$ 
 for simplicity.

Suppose that  $\theta_t$ converges to a global minimum $\theta^*$. Let $\delta_t=\theta_t-\theta^*$ be the deviation. When $\|\delta_t\|$ is small,  $f(x;\theta_t)=f(x;\theta^*)+\nabla f(x;\theta^*)^T\delta_t + o(\|\delta_t\|)$. Substituting it into \eqref{eqn: mini-batch-sgd} and noticing $y_i = f(x_i;\theta^*)$, we obtain the linearized SGD: 
\vspace*{-.5em}
\begin{align}\label{eqn: linear-sgd}
\delta_{t+1} = \delta_t - \eta \nabla f(x_i;\theta^*) \nabla f(x_i;\theta^*)^T \delta_t,
\end{align}
where the high-order term is neglected.
This linearized SGD characterizes how $\delta_t$ evolves when $\theta_t$ is close to $\theta^*$.

\begin{definition}[Linear stability]\label{def: linear-stability}
Let $(\delta_t)_{t\in\NN}$ be the solution of the linearized SGD \eqref{eqn: linear-sgd}. 
A global minimum $\theta^*$ is said to be linearly stable if  $\|\EE[\delta_t\delta_t^T]\|_F\leq \|\EE[\delta_0\delta_0^T]\|_F$ for any $t\in \NN$ and initial distribution over $\delta_0$.
\end{definition}

The linear stability defined above measures the instability by using the second-order moment of deviations.
If $\theta^*$ is not linearly stable, it is unlikely that $\theta_t$ converges to $\theta^*$.
The following provides a necessary condition of linear stability, whose proof can be found in Appendix \ref{sec: proof-stability-1}.
\begin{proposition}\label{pro: stability-1}
If a global minimum $\theta^*$ is linearly stable, then 
$
    \tr(G(\theta^*))\leq 2/\eta.
$
\end{proposition}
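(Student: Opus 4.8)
The plan is to read the bound off a single step of the linearized dynamics, feeding in the isotropic second moment $\EE[\delta_0\delta_0^T]=I$ as the test input. Write $g_i=\nabla f(x_i;\theta^*)$ and $M_i=I-\eta g_ig_i^T$, so that \eqref{eqn: linear-sgd} reads $\delta_{t+1}=M_{i_t}\delta_t$ with $i_t\sim\unif([n])$ drawn independently of $\delta_t$; hence the second-moment matrix $\Sigma_t:=\EE[\delta_t\delta_t^T]$ obeys $\Sigma_{t+1}=\EE_i[M_i\Sigma_tM_i]$. Picking any distribution for $\delta_0$ with $\Sigma_0=I$ (e.g.\ $\delta_0\sim\mathcal N(0,I)$), one step gives
\[
\Sigma_1=\EE_i M_i^2 = I - 2\eta\,G(\theta^*) + \eta^2\,\EE_i\!\big[\|g_i\|^2\, g_ig_i^T\big].
\]

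Next I would take the trace of this identity. Since $\EE_i\|g_i\|^2=\tr(G(\theta^*))$, $\tr(\|g_i\|^2 g_ig_i^T)=\|g_i\|^4$, and $p$ is the number of parameters, this yields $\tr(\Sigma_1)=p-2\eta\,\tr(G(\theta^*))+\eta^2\,\EE_i\|g_i\|^4$. Then I would invoke linear stability at $t=1$ with this initialization: $\|\Sigma_1\|_F\le\|\Sigma_0\|_F=\sqrt p$. Combined with the elementary bound $\tr(A)\le\sqrt p\,\|A\|_F$ for any $p\times p$ symmetric $A$ (Cauchy--Schwarz on its eigenvalues), this gives $\tr(\Sigma_1)\le p$; substituting the formula for $\tr(\Sigma_1)$ and cancelling $p$ leaves
\[
\eta^2\,\EE_i\|g_i\|^4 \le 2\eta\,\tr(G(\theta^*)).
\]
Finally, Jensen's inequality gives $\EE_i\|g_i\|^4\ge(\EE_i\|g_i\|^2)^2=\tr(G(\theta^*))^2$, so $\eta\,\tr(G(\theta^*))^2\le 2\,\tr(G(\theta^*))$; dividing by $\tr(G(\theta^*))>0$ (the case $\tr(G(\theta^*))=0$ being trivial) yields $\tr(G(\theta^*))\le 2/\eta$.

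The step that needs the most thought is the choice of test input. A rank-one input $\Sigma_0=vv^T$ only reproduces the GD-type bound $v^TG(\theta^*)v\le 2/\eta$, so it is essential to use the isotropic $\Sigma_0=I$, which turns the trace of one SGD step into the informative quantity $-2\eta\,\tr(G)+\eta^2\EE_i\|g_i\|^4$; it is then important (and slightly delicate) that the model-size factor $p$ entering through $\|I\|_F^2=\tr(I)=p$ cancels exactly, so that the final bound is independent of both $p$ and the sample size $n$. One should also check that the definition of linear stability permits the initialization $\Sigma_0=I$ and that the minibatch index $i_0$ is independent of $\delta_0$; both hold here. (Equivalently, one may work with the self-adjoint, cone-preserving operator $\mathcal A(X)=\EE_i[M_iXM_i]$ on symmetric matrices—linear stability amounts to its spectral radius being at most $1$—and evaluate its Rayleigh quotient at $X=I$; the one-step argument above is the streamlined version of this.)
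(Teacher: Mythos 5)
Your proof is correct, and it reaches the same key inequality $\eta\,\EE_i\|g_i\|^4\le 2\,\tr(G)$ and the same Jensen step $\EE_i\|g_i\|^4\ge\tr(G)^2$ as the paper, but the route to that inequality is genuinely different in its technical execution. The paper's proof passes to the self-adjoint operator $T_\eta A=(HA+AH)-\eta\,\EE[H_\xi A H_\xi]$ on positive semi-definite matrices, argues that linear stability forces $T_\eta\succeq 0$, derives the variational inequality $2\tr(HA^2)-\eta\,\tr^2(AH)\ge 0$ for all PSD $A$, and then specializes to $A=I$ (after first stating the more general diagonal-$A$ condition, which, as the authors remark, is strictly stronger than the trace bound in the statement). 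You instead iterate a single step from $\Sigma_0=I$, compute $\tr(\Sigma_1)=p-2\eta\tr(G)+\eta^2\EE_i\|g_i\|^4$ explicitly, and convert the Frobenius-norm stability $\|\Sigma_1\|_F\le\sqrt p$ into a trace bound via the elementary $\tr(A)\le\sqrt p\,\|A\|_F$, with the dimension $p$ cancelling exactly. Your route is more self-contained and avoids the operator formalism, at the price of a small loss of information at the $\tr\le\sqrt p\|\cdot\|_F$ step and of not yielding the paper's stronger diagonal-$A$ inequality; the paper's route, by proving $T_\eta\succeq 0$, cleanly captures the full $t\to\infty$ stability requirement in one stroke. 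Both are valid; your streamlined one-step version is a perfectly good alternative proof of the stated proposition.
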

 This proposition implies that SGD tends to select minima, where the sharpness--as measured by the trace of Hessian--is bounded by $2/\eta$, independently of the model size and sample size.  This size-independence means that stability imposes an effective sharpness control no matter how over-paramterized the model is and this in turn 
 will yield an effective control on the model capacity as demonstrated in our subsequent generalization analysis. In contrast, for GD, the linear stability imposes a much weaker control: Only the largest eigenvalue of Hessian is bounded by $2/\eta$. 

\paragraph*{Comparison with existing works.} \citet{defossez2015averaged} derived the same upper bound of learning rate for  least square problems and studied its impact on the convergence of SGD. In contrast, our focus is understanding the  implication for regularization. Moreover, it should be stressed that our stability condition is derived by examing the linearized SGD but relevant for nonlinear SGD. \citet{ma2021linear} also derived an upper bound of $\tr(G(\theta))$ by examing the linear stability but their  bound  grows explicitly  with the model size. Specifically, \citet[Theorem 2]{ma2021linear} gives the bound $\tr(G)\leq 2p/\eta$, where $p$ is the number of parameters.

\subsection{Loss stability}
\label{sec: loss-stability}
In this section, we revise the {\em loss stability} defined in \citet{wu2022alignment}, which is applicable to a general SGD:
\begin{equation}
\theta_{t+1} = \theta_t - \eta (\nabla \erisk(\theta_t)+\xi_t),
\end{equation}
where $\xi_t$ denotes a general gradient noise that satisfies 
\begin{align}\label{eqn: noise-assumption}
\EE[\xi_t]=0,\quad \Sigma(\theta_t):=\EE[\xi_t\xi_t^T]=2\erisk(\theta_t)S(\theta_t).
\end{align}
Here $S(\theta)$ represents the loss-scaled noise covariance matrix.
This assumption of gradient noise implies that the noise magnitude is proportional to the loss value, which is naturally satisfied by the mini-batch SGD \eqref{eqn: mini-batch-sgd} as pointed out in \citet{mori2021logarithmic,wu2022alignment,wojtowytsch2021stochastic,feng2021inverse,liu2021noise}.

\begin{lemma}[One-step update]\label{lemma: loss-update}
Suppose $\erisk\in C^3(\RR^p)$. We have $\EE[\erisk(\theta_{t+1})]\geq \eta^2 \tr[H(\theta_t)S(\theta_t)] \erisk(\theta_t) + O(\eta^3)$
\end{lemma}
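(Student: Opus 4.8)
The plan is to Taylor-expand $\widehat{\mathcal{R}}$ around $\theta_t$ along the stochastic update direction and then take the expectation over the noise $\xi_t$. Concretely, write $\theta_{t+1} = \theta_t - \eta v_t$ with $v_t = \nabla \widehat{\mathcal{R}}(\theta_t) + \xi_t$. Since $\widehat{\mathcal{R}} \in C^3$, a third-order Taylor expansion gives
\begin{equation}\label{eqn: taylor-loss}
\widehat{\mathcal{R}}(\theta_{t+1}) = \widehat{\mathcal{R}}(\theta_t) - \eta\, \nabla\widehat{\mathcal{R}}(\theta_t)^T v_t + \frac{\eta^2}{2}\, v_t^T H(\theta_t) v_t + O(\eta^3 \|v_t\|^3),
\end{equation}
where the cubic remainder is controlled by the $C^3$ assumption on a neighborhood of $\theta_t$. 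Taking expectation over $\xi_t$ and using $\mathbb{E}[\xi_t] = 0$, the first-order term becomes $-\eta\|\nabla\widehat{\mathcal{R}}(\theta_t)\|^2 \le 0$, which can be dropped when establishing a lower bound. For the quadratic term, I would expand $v_t^T H v_t = \nabla\widehat{\mathcal{R}}^T H \nabla\widehat{\mathcal{R}} + 2\nabla\widehat{\mathcal{R}}^T H \xi_t + \xi_t^T H \xi_t$; the cross term vanishes in expectation, the first piece is nonnegative (near a minimum $H \approx G \succeq 0$, or more simply it contributes a nonnegative term that only helps the lower bound), and the key contribution is $\mathbb{E}[\xi_t^T H(\theta_t) \xi_t] = \tr(H(\theta_t)\, \mathbb{E}[\xi_t\xi_t^T]) = \tr(H(\theta_t)\Sigma(\theta_t))$. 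Substituting the noise assumption \eqref{eqn: noise-assumption}, $\Sigma(\theta_t) = 2\widehat{\mathcal{R}}(\theta_t) S(\theta_t)$, yields $\mathbb{E}[\xi_t^T H \xi_t] = 2\widehat{\mathcal{R}}(\theta_t)\tr(H(\theta_t) S(\theta_t))$.

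Collecting the surviving nonnegative terms and the identity above gives
\begin{equation}\label{eqn: loss-collect}
\mathbb{E}[\widehat{\mathcal{R}}(\theta_{t+1})] \ge \widehat{\mathcal{R}}(\theta_t) + \frac{\eta^2}{2}\cdot 2\,\widehat{\mathcal{R}}(\theta_t)\,\tr(H(\theta_t) S(\theta_t)) + O(\eta^3) = \widehat{\mathcal{R}}(\theta_t)\bigl(1 + \eta^2 \tr(H(\theta_t) S(\theta_t))\bigr) + O(\eta^3),
\end{equation}
which, after discarding the nonnegative $\widehat{\mathcal{R}}(\theta_t) \ge 0$ baseline term if one wants exactly the stated form (or keeping it — the statement as written only claims the $\eta^2$-term as a lower bound), gives $\mathbb{E}[\widehat{\mathcal{R}}(\theta_{t+1})] \ge \eta^2\tr(H(\theta_t)S(\theta_t))\widehat{\mathcal{R}}(\theta_t) + O(\eta^3)$. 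The dropped terms — $\widehat{\mathcal{R}}(\theta_t)$ itself, $\eta\|\nabla\widehat{\mathcal{R}}(\theta_t)\|^2$, and $\tfrac{\eta^2}{2}\nabla\widehat{\mathcal{R}}^T H \nabla\widehat{\mathcal{R}}$ — are all nonnegative in the small-risk regime where $H \approx G \succeq 0$, so the inequality direction is preserved.

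The main obstacle, and the point that needs the most care, is the treatment of the $O(\eta^3)$ remainder: one must argue that the third-order Taylor term $\mathbb{E}[O(\eta^3\|v_t\|^3)]$ is genuinely $O(\eta^3)$ uniformly, which requires a bound on $\mathbb{E}\|\xi_t\|^3$ and hence some moment control on the gradient noise beyond the second-moment assumption \eqref{eqn: noise-assumption} — in the mini-batch setting this follows from boundedness of the per-sample gradients on the relevant region, but it should be flagged. A secondary subtlety is whether $H(\theta_t)$ is exactly the right object in $\tr(H(\theta_t)S(\theta_t))$ versus $G(\theta_t)$; since the lemma is stated for general $\widehat{\mathcal{R}} \in C^3$ and not only at a minimum, I would keep $H(\theta_t)$ throughout as in \eqref{eqn: taylor-loss}, and only later invoke $H \approx G$ when connecting to the stability condition. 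Everything else is a routine expansion.
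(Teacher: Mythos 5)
Your expansion is fine but the treatment of the first-order term is a genuine sign error. After taking expectation over $\xi_t$, the first-order contribution is $-\eta\|\nabla\widehat{\mathcal{R}}(\theta_t)\|^2 \leq 0$; discarding a \emph{nonpositive} term from the right-hand side of an equality \emph{increases} that side, so it cannot be dropped when establishing a lower bound, and your intermediate inequality in fact points the wrong way. The fix, which is exactly what the paper does, is to group the deterministic pieces before appealing to nonnegativity:
\begin{equation*}
\widehat{\mathcal{R}}(\theta_t) - \eta\|\nabla\widehat{\mathcal{R}}(\theta_t)\|^2 + \frac{\eta^2}{2}\nabla\widehat{\mathcal{R}}(\theta_t)^T H(\theta_t)\nabla\widehat{\mathcal{R}}(\theta_t) = \widehat{\mathcal{R}}\bigl(\theta_t - \eta\nabla\widehat{\mathcal{R}}(\theta_t)\bigr) + O(\eta^3) \geq O(\eta^3),
\end{equation*}
since the loss is nonnegative. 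Operationally the paper Taylor-expands $\widehat{\mathcal{R}}(\theta_{t+1})$ around the \emph{deterministic} GD iterate $\theta_t - \eta\nabla\widehat{\mathcal{R}}(\theta_t)$ rather than around $\theta_t$: the zeroth-order term is then manifestly nonnegative and can be safely discarded, the cross term vanishes in expectation because $\mathbb{E}[\xi_t]=0$, and the quadratic term yields $\tfrac{\eta^2}{2}\tr\bigl(H(\theta_t)\Sigma(\theta_t)\bigr) = \eta^2\widehat{\mathcal{R}}(\theta_t)\tr\bigl(H(\theta_t)S(\theta_t)\bigr)$ after replacing $H\bigl(\theta_t - \eta\nabla\widehat{\mathcal{R}}(\theta_t)\bigr)$ by $H(\theta_t) + O(\eta)$. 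Your two secondary caveats are both fair: the $O(\eta^3)$ remainder does implicitly require third-moment control on $\xi_t$ (the paper assumes this tacitly as well, so it is a shared limitation rather than a gap particular to your route), and keeping $H(\theta_t)$ rather than $G(\theta_t)$ throughout is indeed what the paper does, only invoking $H\approx G$ downstream in Proposition \ref{pro: stability-2}.
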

\vspace*{-1em}
\begin{proof}
By definition, we have 
\begin{align*}
\vspace*{-.2em}
\hspace*{-.7em}\erisk(\theta_{t+1}) &= \erisk(\theta_t - \eta \nabla \erisk(\theta_t) - \eta \xi_t)\\
\hspace*{-.7em}&= \erisk(\theta_t - \eta \nabla \erisk(\theta_t)) + \<\nabla \erisk(\theta_t - \eta \nabla \erisk(\theta_t)), - \eta \xi_t\> \\ 
\hspace*{-.5em}&\qquad + \frac{\eta^2}{2} \xi_t^T H(\theta_t - \eta \nabla \erisk(\theta_t))\xi_t + O(\eta^3).
\vspace*{-.5em}
\end{align*}
Taking  expectation {\em w.r.t.} $\xi_t$ and using $H(\theta_t-\eta \nabla \erisk(\theta_t)) = H(\theta_t) + O(\eta)$ and $\erisk(\theta_t - \eta \nabla \erisk(\theta_t))\geq 0$ gives 
\begin{align}\label{eqn: 1}
\notag  \EE[\erisk(\theta_{t+1})] &\geq  \frac{\eta^2}{2} \tr[H(\theta_t)\Sigma(\theta_t)]+ O(\eta^3)\\
  &=\eta^2 \erisk(\theta_t)\tr[H(\theta_t)S(\theta_t)] + O(\eta^3),
\end{align}
where the last step follows from \eqref{eqn: noise-assumption}.
\end{proof}

This lemma implies that $\tr[H(\theta_t)S(\theta_t)]$ determines the local stability if ignoring the higher-order term. Specifically, if the loss $\erisk(\theta_t)$ is sufficiently small such that $H(\theta_t)\approx G(\theta_t)$, then for $\EE[\erisk(\theta_{t+1})]\leq \erisk(\theta_t)$ to hold, a necessary condition is 
$
  \tr[G(\theta_t)S(\theta_t)]\leq 1/\eta^2.
$
This condition can be converted to a sharpness control by assuming
\vspace*{-.8em}
\begin{equation}\label{def: alignment}
\mu(\theta) := \frac{\tr(G(\theta)S(\theta))}{\|G(\theta)\|_F^2}\geq \mu_0.
\vspace*{-.5em}
\end{equation}
By treating $G(\theta)\approx H(\theta)$, $\mu(\theta)$ can be interpreted as a factor that quantifies the (loss-scaled) strength of alignment between the noise covariance and local Hessian. For mini-batch SGD \eqref{eqn: mini-batch-sgd}, \citet{wu2022alignment} has shown that there exist a size-independent constant $\mu_0>0$  such that $\mu(\theta)\geq \mu_0$ for neural networks. We refer to \citet{wu2022alignment} for more discussions on this alignment factor. 


\begin{proposition}\label{pro: stability-2}
Assume $\erisk\in C^3(\RR^p),\mu(\theta)\geq \mu_0$. Let $\cQ_{\varepsilon,\eta} = \{\theta: \erisk(\theta)\leq \varepsilon, \|G(\theta)\|_F>\sqrt{1/\mu_0}/\eta\}$. If $\{\theta_\tau\}_{\tau=0}^t\in \cQ_{\varepsilon,\eta}$, then $\EE[\erisk(\theta_{t+1})]\geq \gamma^t \erisk(\theta_0)+\frac{\gamma^t-1}{\gamma-1}O(\eta^3+\eta^2\varepsilon^{3/2})$ with $\gamma>1$.
\end{proposition}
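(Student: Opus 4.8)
The plan is to iterate the one-step estimate from Lemma \ref{lemma: loss-update}, using the hypothesis $\mu(\theta) \geq \mu_0$ to convert the quantity $\tr[H(\theta_\tau)S(\theta_\tau)]$ appearing there into a lower bound involving $\eta^2\|G(\theta_\tau)\|_F^2$, and then using the definition of $\cQ_{\varepsilon,\eta}$ to guarantee this product exceeds $1$. First I would note that on the region $\cQ_{\varepsilon,\eta}$ we have $\erisk(\theta_\tau) \leq \varepsilon$, so that $H(\theta_\tau) = G(\theta_\tau) + O(\sqrt{\varepsilon})$ by the Hessian decomposition \eqref{eqn: hessian-expression} and Cauchy--Schwarz applied to the second (error) term (the fitting errors $e_i(\theta_\tau)$ are controlled by $\sqrt{2n\,\erisk(\theta_\tau)}$). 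Consequently $\tr[H(\theta_\tau)S(\theta_\tau)] = \tr[G(\theta_\tau)S(\theta_\tau)] + O(\sqrt{\varepsilon})\,\|S(\theta_\tau)\| \geq \mu_0 \|G(\theta_\tau)\|_F^2 - O(\sqrt{\varepsilon})$, where the inequality uses \eqref{def: alignment}. Since $\theta_\tau \in \cQ_{\varepsilon,\eta}$ forces $\|G(\theta_\tau)\|_F > \sqrt{1/\mu_0}/\eta$, we get $\eta^2\,\tr[H(\theta_\tau)S(\theta_\tau)] > 1 - O(\eta^2\sqrt{\varepsilon})$; writing $\gamma$ for this lower bound gives $\gamma > 1$ once $\varepsilon$ is small, and more precisely $\eta^2 \tr[H(\theta_\tau)S(\theta_\tau)] \geq \gamma$ for a uniform $\gamma > 1$.

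Next I would feed this into Lemma \ref{lemma: loss-update}: for each $\tau$ with $\theta_\tau \in \cQ_{\varepsilon,\eta}$,
\[
\EE[\erisk(\theta_{\tau+1})] \geq \gamma\, \erisk(\theta_\tau) + O(\eta^3 + \eta^2\varepsilon^{3/2}),
\]
where I have absorbed the $O(\eta^3)$ term from the lemma together with the error from replacing $\tr[H S]\,\erisk(\theta_\tau)$ (here the extra $O(\sqrt\varepsilon)$ from the Hessian replacement multiplies $\eta^2 \erisk(\theta_\tau) = O(\eta^2\varepsilon)$, giving the $O(\eta^2\varepsilon^{3/2})$ contribution). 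Then I would take expectations and unroll this recursion $t$ times. Writing $a = \gamma$ and $b = O(\eta^3 + \eta^2\varepsilon^{3/2})$, induction gives $\EE[\erisk(\theta_{t+1})] \geq a^t \erisk(\theta_0) + b(1 + a + \cdots + a^{t-1}) = \gamma^t \erisk(\theta_0) + \frac{\gamma^t - 1}{\gamma - 1}\,O(\eta^3 + \eta^2\varepsilon^{3/2})$, which is exactly the claimed bound. One subtlety to handle carefully is that Lemma \ref{lemma: loss-update} is a statement about the conditional expectation at step $\tau$; strictly one should take the tower property / iterated expectations, using that the hypothesis $\{\theta_\tau\}_{\tau=0}^t \in \cQ_{\varepsilon,\eta}$ is assumed to hold along the trajectory so the per-step bound applies at every step. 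I would phrase the recursion in terms of $\EE[\erisk(\theta_\tau)]$ throughout.

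The main obstacle I anticipate is not the algebra of the recursion but controlling the $O(\cdot)$ terms uniformly in $t$ and in the model size $p$ — in particular making sure the higher-order Taylor remainder $O(\eta^3)$ from Lemma \ref{lemma: loss-update} and the Hessian-approximation error are genuinely size-independent, since this is the whole point of the analysis. This requires the $C^3$ bound on $\erisk$ to be understood as a bound that does not blow up with $p$, and requires $\|S(\theta)\|$ to be controlled on $\cQ_{\varepsilon,\eta}$ (which follows from the structure of the mini-batch noise as discussed around \eqref{eqn: noise-assumption} and in \citet{wu2022alignment}). A secondary point worth stating is the precise meaning of $\gamma$: I would define $\gamma := \eta^2 \inf_{\theta \in \cQ_{\varepsilon,\eta}} \tr[G(\theta)S(\theta)] \geq \eta^2 \mu_0 \inf_{\theta\in\cQ_{\varepsilon,\eta}} \|G(\theta)\|_F^2 > 1$ (up to the lower-order correction absorbed into the $O$-term), so that $\gamma > 1$ is immediate from the definition of $\cQ_{\varepsilon,\eta}$, and this is what drives the exponential growth showing that SGD cannot linger in the high-$\|G\|_F$ region.
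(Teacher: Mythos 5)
Your proposal is correct and follows essentially the same route as the paper's proof: approximate $H \approx G$ using $\erisk \leq \varepsilon$, invoke the alignment condition $\mu(\theta) \geq \mu_0$ together with the lower bound $\|G\|_F > \sqrt{1/\mu_0}/\eta$ on $\cQ_{\varepsilon,\eta}$ to get a per-step growth factor $\gamma = \eta^2\mu_0\inf_{\cQ_{\varepsilon,\eta}}\|G\|_F^2 > 1$, then unroll Lemma~\ref{lemma: loss-update} into a geometric recursion. Your extra care about the tower property and about tracking where the $O(\eta^2\varepsilon^{3/2})$ term comes from (the $O(\sqrt\varepsilon)$ Hessian error times $\eta^2\erisk(\theta_\tau)\lesssim\eta^2\varepsilon$) makes the argument slightly more explicit than the paper's, but the substance is the same.
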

The proof can be found in Appendix \ref{sec: proof-stability-2}.
This proposition shows that  SGD will escape from a low-loss region {\em exponentially fast} (measured by the loss value) if the the landscape there is too sharp in terms of the Frobenius norm of Hessian. Specifically,  SGD can only stay/travel in the region where
$
\|G(\theta)\|_F\leq \sqrt{1/\mu_0}/\eta.
$

The above analysis extends \citet{wu2022alignment} in two aspects. First, our analysis does not need $\theta_t$  to be close to a global minimum $\theta^*$, implying that the loss stability is relevant even if SGD has not converge. This is consistent with the numerical experiments in \citet{wu2022alignment}, which shows that the upper bound of Frobenius norm of Hessian matrix holds for the entire training process.
Second, our analysis is applicable to general SGD where the gradient noise  does not necessarily come from the mini-match sampling. For instance, one can consider the Langevin dynamics with $S(\theta)=G(\theta)$, which has been tested in \citet{zhu2019anisotropic} to have similar generalization properties as mini-batch SGD.  In contrast, \citet{wu2022alignment}   only considered the mini-batch SGD and required $\theta^*$ is close to $\theta_t$ in the sense that $\|\theta_t-\theta^*\|=o(1)$.

\begin{remark}
Note that different from the linear stability (Definition \ref{def: linear-stability}), the loss stability measures the stability by using the changes of loss.  In \citet{wu2022alignment}, this stability is referred to as ``linear stability''. However, based on our preceding explanations, we propose the term ``loss stability'' as a more fitting term. Furthermore, we will specifically refer to  Definition \eqref{def: linear-stability} as ``linear stability'', acknowledging that it exclusively holds for the linearized SGD.
\end{remark}

\subsection{The comparison between two types of stability}
\label{sec: stability-remark}

The linear stability  is defined by examining the linearized SGD \eqref{eqn: linear-sgd}, which is validated only if $\theta_t$ is sufficiently close to a global minimum $\theta^*$. 
In contrast, the  loss stability measures the stability by inspecting if the loss grows exponentially, which 
 is applicable even if $\theta_t$ does not converge. 
 In terms of sharpness control, linear stability and loss stability impose size-independent controls on $\tr(G(\theta))$ and $\|G(\theta)\|_F$, respectively. The former is stronger since $\|G(\theta)\|_F\leq \tr(G(\theta))$.
To summarize, linear stability yields a stronger sharpness control but requires the dynamics to be sufficiently close to a global minimum;  loss stability is generally relevant but imposes a weaker  sharpness control. 
When the loss-stability is satisfied, SGD may travel in a low-loss region without convergence and the convergence to a global minimum requires the stronger condition of linear stability to be satisfied. 

Then a natural question is: Which type of stability characterizes the actual dynamical behavior of SGD better? 
The answer will depends on the problem and training stages. 
\begin{itemize}
\item In training practical models, large-LR SGD often takes many iterations to stay in a low-loss region without reaching a global minimum. During these stages, the loss keeps nearly unchanged; and thus, the condition of loss stability must be met but the condition of linear stability is not necessarily to be satisfied. Indeed, the empirical studies by
\citet{wu2022alignment} has demonstrated the relevance of loss stability in this situation.

\item In this paper, we focus on detailed analysis of simple models: two-layer ReLU networks and diagonal linear networks, for which we empirically find that linear stability is more relevant. Specifically, the upper bound $2/\eta$ is  close to the actual trace of Hessian. While the condition of loss stability is also satisfied, the resulting bound is much looser. 
 These observations are not unexpected  as for these simple models, large-LR SGD  always converges to zero loss stably, implying the condition of linear stability must be satisfied. 
\end{itemize}

\section{Two-layer ReLU networks}
\label{sec: relu-net}

We first consider the two-layer ReLU network: $
f(x;\theta) = \sum_{j=1}^m a_j \sigma(w_j^Tx),
$
where $a_j\in\RR,w_j\in\RR^d$,  $m$ denotes the network width, and $\sigma(t)=\max(t,0)$.
In this section, we assume the input distribution to be $\rho=\unif(\sqrt{d}\SS^{d-1})$.


Define the weighted $\ell_2$ norm $\|\theta\|_{2,q}:=\sum_{j}(\|w_j\|^2+qa_j^2)$, where $q>0$ is the weight factor.
The following theorem shows that all three sharpness are equivalent to  $\ell_{2,q}$ norms of parameters and the only difference is the weight factor. The proof  can be found in Appendix \ref{sec: proof-2lnn-sharpness}

\begin{theorem}\label{thm: sharpness-relu-landscape}
 For any $\delta\in (0,1)$, let $N(d,\delta)=\inf\{n\in\NN : d\log(n/\delta)/n\leq 1\}$.
 \begin{itemize}
 \item If $n\gtrsim N(d,\delta)$, then \wp  $1-\delta$  we have 
\[
     \|G(\theta)\|_F\sim \|\theta\|_{2,\sqrt{d}},\,\, \tr(G(\theta))\sim \|\theta\|_{2,d}.
\]
\item  If $n\gtrsim dN(d,\delta)$, then \wp  $1-\delta$,   $\|G(\theta)\|_2\sim \|\theta\|_{2,1}$.
  \end{itemize} 
\end{theorem}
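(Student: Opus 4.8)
The plan is to pass first to the \emph{population} Gram/Fisher matrix $\Sigma(\theta):=\EE_{x\sim\rho}[\nabla f(x;\theta)\nabla f(x;\theta)^T]$, to prove the three equivalences for $\Sigma(\theta)$ by (almost) explicit computation, and then to transfer them to $G(\theta)$ by a uniform law of large numbers. For $\rho=\unif(\sqrt d\,\SS^{d-1})$ the relevant inputs are the four \emph{exact} identities $\EE[xx^T]=I_d$, $\EE[\sigma(w^Tx)^2]=\tfrac12\|w\|^2$, $\EE[\mathbf 1[w^Tx>0]]=\tfrac12$ and $\EE[\sigma(w^Tx)x]=\tfrac12 w$ (all from the rotational and reflection symmetry of $\rho$), together with the qualitative fact that $\EE[\sigma(\hat w_j^Tx)\sigma(\hat w_k^Tx)]=:\tfrac1{2\pi}\psi_d(\phi_{jk})$ and $\EE[\mathbf 1[w_j^Tx>0]\mathbf 1[w_k^Tx>0]]=:\kappa_d(\phi_{jk})$ depend on the angle $\phi_{jk}=\angle(w_j,w_k)$ only, are continuous in it, and are bounded below by a positive \emph{absolute} constant for $\phi_{jk}\le 2\pi/3$. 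The last point is a short $d$-uniform estimate: restrict $x$ to a fixed positive-probability event of its two-dimensional projection onto $\mathrm{span}(w_j,w_k)$ on which both neurons fire and both projected coordinates stay away from $0$, and note that this projection converges to a two-dimensional Gaussian as $d\to\infty$.

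For $\Sigma(\theta)$ the computations are then routine. The trace is exact: $\tr\Sigma(\theta)=\EE\|\nabla f(x;\theta)\|^2=\sum_j(\EE[\sigma(w_j^Tx)^2]+a_j^2\EE[\mathbf 1[w_j^Tx>0]\|x\|^2])=\tfrac12\sum_j(\|w_j\|^2+d\,a_j^2)=\tfrac12\|\theta\|_{2,d}$. For the spectral norm, write a unit parameter direction as $v=(b_j,u_j)_j$; then $v^T\nabla f(x;\theta)=\sum_j\mathbf 1[w_j^Tx>0]z_j^Tx$ with $z_j=b_jw_j+a_ju_j$, and $\|z_j\|\le r_js_j$ where $r_j:=(\|w_j\|^2+a_j^2)^{1/2}$ and $s_j:=(b_j^2+\|u_j\|^2)^{1/2}$ (so $\sum_j s_j^2=1$), with equality attainable by $z_j\parallel w_j$; a weighted Cauchy--Schwarz gives $(v^T\nabla f(x;\theta))^2\le\sum_j(z_j^Tx)^2/s_j^2$, hence $v^T\Sigma(\theta)v\le\sum_j\|z_j\|^2/s_j^2\le\sum_j r_j^2=\|\theta\|_{2,1}$, whereas the choice $z_j=(r_j/\|r\|)\hat w_j$ gives $v^T\Sigma(\theta)v=\tfrac1{2\pi\|r\|^2}\sum_{j,k}r_j^2r_k^2\psi_d(\phi_{jk})\gtrsim\|r\|^2=\|\theta\|_{2,1}$ by the coercivity lemma below. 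For the Frobenius norm, $\|\Sigma(\theta)\|_F^2=\EE_{x,x'}[(\nabla f(x;\theta)^T\nabla f(x';\theta))^2]\le 2\EE[P^2]+2\EE[Q^2]$ with $P=\sum_j\sigma(w_j^Tx)\sigma(w_j^Tx')$ and $Q=(x^Tx')\sum_j a_j^2\mathbf 1[w_j^Tx>0]\mathbf 1[w_j^Tx'>0]$; since $\EE[P^2]=\sum_{j,k}\EE[\sigma(w_j^Tx)\sigma(w_k^Tx)]^2\le\tfrac14(\sum_j\|w_j\|^2)^2$ and $\EE[Q^2]\le(\sum_j a_j^2)^2\EE[(x^Tx')^2]=d(\sum_j a_j^2)^2$, one gets $\|\Sigma(\theta)\|_F^2\lesssim(\sum_j\|w_j\|^2)^2+d(\sum_j a_j^2)^2\le\|\theta\|_{2,\sqrt d}^2$. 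The matching lower bound comes from restricting $\Sigma(\theta)$ to sub-blocks: its $a$-block $M$ ($M_{jk}=\EE[\sigma(w_j^Tx)\sigma(w_k^Tx)]$) yields $\|\Sigma(\theta)\|_F^2\ge\|M\|_F^2=\tfrac1{4\pi^2}\sum_{j,k}\|w_j\|^2\|w_k\|^2\psi_d(\phi_{jk})^2\gtrsim(\sum_j\|w_j\|^2)^2$, and its $w_j$--$w_k$ blocks $a_ja_k\,W_{jk}$, $W_{jk}=\EE[\mathbf 1[w_j^Tx>0]\mathbf 1[w_k^Tx>0]xx^T]\succeq0$, satisfy $\|W_{jk}\|_F^2\ge(\tr W_{jk})^2/d=d\,\kappa_d(\phi_{jk})^2$, so $\|\Sigma(\theta)\|_F^2\ge d\sum_{j,k}a_j^2a_k^2\kappa_d(\phi_{jk})^2\gtrsim d(\sum_j a_j^2)^2$; adding and using $A^2+B^2\ge\tfrac12(A+B)^2$ gives $\|\Sigma(\theta)\|_F^2\gtrsim\|\theta\|_{2,\sqrt d}^2$.

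Each lower bound above is an instance of the following \emph{angular coercivity} fact, which I see as the one genuinely new ingredient: for unit vectors $\hat w_1,\dots,\hat w_m$, weights $c_1,\dots,c_m\ge0$, and any continuous $K\ge0$ on $[0,\pi]$ with $K\ge\kappa_0>0$ on $[0,2\pi/3]$, one has $\sum_{j,k}c_jc_k K(\phi_{jk})\ge\tfrac12\kappa_0(\sum_j c_j)^2$. Normalizing the $c_j$ to a probability measure $\mu$ on the sphere, this reduces to showing $\mathbb P_{v,v'\sim\mu}[\langle v,v'\rangle\ge-\tfrac12]\ge\tfrac12$ for \emph{every} $\mu$. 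To see this, note that no three unit vectors can satisfy $\langle v_i,v_l\rangle<-\tfrac12$ for all three pairs (else $\|v_1+v_2+v_3\|^2<0$); hence, writing $A_v:=\{v':\langle v,v'\rangle<-\tfrac12\}$ and $h(v):=\mu(A_v)$, whenever $\langle v,v'\rangle<-\tfrac12$ the caps $A_v,A_{v'}$ are disjoint, so $h(v)+h(v')\le1$. Integrating this inequality over $\{(v,v'):\langle v,v'\rangle<-\tfrac12\}$ yields $2\,\EE_\mu[h^2]\le p$ where $p:=\mathbb P[\langle v,v'\rangle<-\tfrac12]=\EE_\mu[h]$, and since $\EE_\mu[h^2]\ge(\EE_\mu h)^2=p^2$ this forces $p\le\tfrac12$. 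This is a continuous Motzkin--Straus/Mantel estimate, and its $d$-uniformity is what keeps all the constants above absolute.

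The remaining, and in my view main, obstacle is the transfer from $\Sigma(\theta)$ to $G(\theta)$ with the stated near-optimal sample sizes. Using the positive homogeneity of $\sigma$ to normalize $\|w_j\|=1$, this reduces to uniform laws of large numbers for $\tfrac1n\sum_i\sigma(\hat w^Tx_i)^2$ and $\tfrac1n\sum_i\mathbf 1[\hat w^Tx_i>0]$ over $\hat w\in\SS^{d-1}$ (plus bookkeeping linear in the $a_j$): these function classes have VC/pseudo-dimension $O(d)$ and envelopes with bounded low-order moments, so the estimates hold once $n\gtrsim N(d,\delta)$, while a Wendel-type bound ensures that w.h.p.\ no $w$ puts all $x_i$ on one side, keeping the activation fractions away from $0$ and $1$ (this supplies the lower bounds for $\tr$ and $\|\cdot\|_F$). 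For the spectral norm one additionally needs $\sup_v v^TG(\theta)v$ close to $\sup_v v^T\Sigma(\theta)v$ uniformly in $\theta$, which requires an $\epsilon$-net over the $d$-dimensional space of per-neuron test directions and hence the extra factor $d$, i.e.\ $n\gtrsim dN(d,\delta)$. Carrying out these uniform bounds with the exact exponents, and verifying the $d$-uniformity of the angular constants from the two-dimensional Gaussian limit, is where the real work lies; the rest is the bookkeeping sketched above.
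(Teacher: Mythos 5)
Your proposal takes a genuinely different route from the paper. The paper works directly with the \emph{empirical} kernels $\hph_1,\hph_2$, establishes their uniform concentration to $\varphi_1,\varphi_2$ by covering $\SS^{d-1}\otimes\SS^{d-1}$ and Bernstein (with a smoothing trick for the discontinuous $\sigma'$, Lemma~\ref{lemma: empirical-kernel}), and obtains the coercivity $\sum_{j,k}s_j^2s_k^2\varphi_i(\hw_j,\hw_k)\gtrsim(\sum_js_j^2)^2$ from the Hermite expansion plus the Schur product theorem (Lemma~\ref{lemma: kernel-property}). You instead prove the three equivalences first for the population Fisher matrix by explicit moment computations and a self-contained \emph{angular coercivity} lemma, proved by the Mantel-type counting argument, then transfer to $G(\theta)$. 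Your coercivity lemma is an attractive elementary substitute for the Hermite/Schur-product machinery: it applies to any kernel bounded below on $[0,2\pi/3]$ and its proof is $d$-free by construction, whereas the paper's Hermite identities are stated for Gaussian inputs while $\rho$ is actually uniform on $\sqrt d\,\SS^{d-1}$, so your route arguably handles the uniform-sphere data more honestly. Your population-level bounds (exact trace; weighted Cauchy--Schwarz for $\|\Sigma\|_2$; $P$--$Q$ split for the Frobenius upper bound; sub-block lower bounds for $\|\Sigma\|_F$) are all sound, modulo one typo: in the spectral-norm lower bound the optimal test direction has $z_j=(r_j^2/\|r\|)\hat w_j$ (with $s_j=r_j/\|r\|$), not $(r_j/\|r\|)\hat w_j$.

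The gap is the transfer step, which you acknowledge but sketch too narrowly. You reduce it to a ULLN for the \emph{diagonal} empirical kernels $\tfrac1n\sum_i\sigma(\hw^Tx_i)^2$ and $\tfrac1n\sum_i\mathbf 1[\hw^Tx_i>0]$ over $\hw\in\SS^{d-1}$; but for your Frobenius and spectral-norm lower bounds you need the \emph{two-point} kernels $\hph_i(\hw,\hw')$ to concentrate uniformly over $\SS^{d-1}\otimes\SS^{d-1}$ (this is what feeds the coercivity lemma), and for $\hph_2$ the discontinuity of the indicator means a naive Lipschitz-covering bound fails; you need something like the paper's one-sided smoothing $H_\beta^\pm$ sandwich. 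Your appeal to VC dimension for $\{\mathbf 1[\hw^Tx>0]\}$ handles the one-point class but does not by itself give the sup-norm control on the two-point kernel with the constants the theorem needs. Finally, your attribution of the extra factor $d$ (so $n\gtrsim dN(d,\delta)$) to an $\epsilon$-net over per-neuron test directions is not what drives it in the paper — there it comes from summing $d$ coordinatewise estimates $\hEE[\sigma'(w_j^Tx)x_{,k}]$ in Proposition~\ref{pro: 2lnn-spectral-norm}, each with $O(r_n)$ error — and, notably, your own parameter-space test vector $v^T G(\theta)v=\sum_{j,k}c_jc_k\hph_1(\hw_j,\hw_k)$ only requires concentration of $\hph_1$, i.e.\ $n\gtrsim N(d,\delta)$ rather than $dN(d,\delta)$. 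That is a self-inconsistency in your write-up: if your argument is completed, it appears to prove the spectral-norm equivalence under a weaker sample-size condition than the theorem states, which is worth verifying and, if it holds, flagging as an improvement.
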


\begin{remark}
Sharpness is a  data-dependent quantity since it measures the local curvature of \emph{empirical landscape}. In contrast, a weighted $\ell_2$ norm of parameters is data independent. The equivalence shown in Theorem \ref{thm: sharpness-relu-landscape} is possible because we assume $\rho$  to be isotropic. A  question of more interest would be to exploit the effect of data dependence by making an anisotropic assumption on $\rho$, which we leave to the future work.
\end{remark}

For ReLU networks, it is well-known that the generalization gap can be controlled by the path norm \citep{neyshabur2015norm,weinan2021barron}
$
\|\theta\|_{\cP}:=\sum_j |a_j|\|w_j\|.
$
By the AM-GM inequality, we have 
\begin{align}\label{eqn: 321}
\notag \|\theta\|_{2,q}&=\sum_j (\|w_j\|^2 + q a_j^2) \\ 
&\geq 2\sqrt{q}\sum_j |a_j|\|w_j\| =2\sqrt{q}\|\theta\|_{\cP}. 
\end{align}

\vspace*{-1.2em}
This implies that  weight $\ell_2$ norms can bound the generalization gap although it is not rescaling invariant.



\begin{theorem}\label{thm: 2lnn-generalization-bound}
For SGD and GD with the same LR $\eta$, denote by $\htheta_{\text{sgd}}$ and $\htheta_{\text{gd}}$ the linearly stable minimum of SGD and GD, respectively.
Suppose $\sup_{x\in \cX}|f^*(x)|\leq 1$.  For any $\delta\in (0,1)$, if $n\gtrsim d N(d,\delta)$, then the following holds \wp at least $1-\delta$
\[
  \risk(\htheta_{\mathrm{sgd}})\lesssim \frac{B}{\eta^2 n},\qquad \risk(\htheta_{\mathrm{gd}})\lesssim \frac{B_{}d}{\eta^2 n},
\]
where $B=\log^3n + \log(1/\delta)$.
\vspace*{-1em}
\end{theorem}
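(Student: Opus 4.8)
The starting observation is that since $\htheta_{\mathrm{sgd}},\htheta_{\mathrm{gd}}$ are global minima, over-parameterization gives $\erisk(\htheta_{\mathrm{sgd}})=\erisk(\htheta_{\mathrm{gd}})=0$ and \eqref{eqn: hessian-expression} gives $H=G$ there, so $\risk(\htheta)$ equals the generalization gap $\risk(\htheta)-\erisk(\htheta)$. The plan is to bound this gap along the chain: stability $\Rightarrow$ sharpness of $G$ $\Rightarrow$ weighted $\ell_2$ parameter norm $\Rightarrow$ path norm $\Rightarrow$ population risk, using \cref{pro: stability-1} and the well-known GD bound $\|G(\theta)\|_2\le 2/\eta$ for the first link, \cref{thm: sharpness-relu-landscape} for the second, \eqref{eqn: 321} for the third, and an interpolation-aware generalization bound for path-norm balls for the last.

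\emph{Steps 1--3 (stability to path norm).} First I would note that linear stability of $\htheta_{\mathrm{sgd}}$ forces $\tr(G(\htheta_{\mathrm{sgd}}))\le 2/\eta$ by \cref{pro: stability-1}, and linear stability of $\htheta_{\mathrm{gd}}$ forces $\|G(\htheta_{\mathrm{gd}})\|_2\le 2/\eta$; both are deterministic given the data. On the probability-$(1-\delta/2)$ event $\mathcal E_1$ where \cref{thm: sharpness-relu-landscape} holds (its hypotheses are met since $n\gtrsim dN(d,\delta)$ implies $n\gtrsim N(d,\delta)$), this yields $\|\htheta_{\mathrm{sgd}}\|_{2,d}\lesssim 1/\eta$ and $\|\htheta_{\mathrm{gd}}\|_{2,1}\lesssim 1/\eta$, and then \eqref{eqn: 321} gives $\|\htheta_{\mathrm{sgd}}\|_{\cP}\le\frac{1}{2\sqrt d}\|\htheta_{\mathrm{sgd}}\|_{2,d}\lesssim\frac{1}{\sqrt d\,\eta}$ and $\|\htheta_{\mathrm{gd}}\|_{\cP}\le\frac12\|\htheta_{\mathrm{gd}}\|_{2,1}\lesssim\frac1\eta$. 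Writing $\Theta_Q=\{\theta:\|\theta\|_{\cP}\le Q\}$, this puts $\htheta_{\mathrm{sgd}}\in\Theta_{Q_{\mathrm s}}$ and $\htheta_{\mathrm{gd}}\in\Theta_{Q_{\mathrm g}}$ on $\mathcal E_1$, with the fixed, data-independent radii $Q_{\mathrm s}=C/(\sqrt d\,\eta)$ and $Q_{\mathrm g}=C/\eta$; in particular no stratification over the path-norm radius is needed.

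\emph{Step 4 (path norm to risk).} For $\theta\in\Theta_Q$ and $x$ in the support $\cX=\sqrt d\,\SS^{d-1}$ of $\rho$, one has $|f(x;\theta)|\le\|\theta\|_{\cP}\|x\|=Q\sqrt d$, so together with $|y|=|f^*(x)|\le 1$ the squared loss restricted to $\Theta_Q$ is nonnegative, bounded by $b:=(Q\sqrt d+1)^2$, and $2$-smooth in its first argument. I would then use the routine estimate $\mathrm{Rad}_n(\Theta_Q)\lesssim Q\sqrt d/\sqrt n$ (peel off the signs of the $a_j$, contract to remove $\sigma$, and bound $\EE_\epsilon\|\sum_i\epsilon_i x_i\|=\sqrt{\sum_i\|x_i\|^2}=\sqrt{nd}$), and feed the smoothness constant, the bound $b$, and this estimate into a smoothness-based (``optimistic-rate'') uniform generalization bound for nonnegative smooth losses \citep{srebro2010smoothness}, whose excess-risk rate is of order $\mathrm{Rad}_n(\Theta_Q)^2\log^3 n+b\log(1/\delta)/n$ once the empirical risk vanishes. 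This produces a probability-$(1-\delta/2)$ event $\mathcal E_2$ on which every $\theta\in\Theta_Q$ obeys $\risk(\theta)\lesssim\erisk(\theta)+\frac{(Q^2 d+1)(\log^3 n+\log(1/\delta))}{n}$, which I would apply once with $Q=Q_{\mathrm s}$ and once with $Q=Q_{\mathrm g}$ and then union-bound.

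\emph{Assembling and the main obstacle.} On $\mathcal E_1\cap\mathcal E_2$ (probability $\ge 1-\delta$), using $\erisk=0$ at both minima: $Q_{\mathrm s}^2 d\lesssim 1/\eta^2$ yields $\risk(\htheta_{\mathrm{sgd}})\lesssim B/(\eta^2 n)$, and $Q_{\mathrm g}^2 d\lesssim d/\eta^2$ yields $\risk(\htheta_{\mathrm{gd}})\lesssim Bd/(\eta^2 n)$, with $B=\log^3 n+\log(1/\delta)$; here I work in the regime $\eta\lesssim 1$ so the $+1$ in $Q^2 d+1$ is absorbed (otherwise the bounds read $\lesssim(1+1/\eta)^2 B/n$ and $\lesssim(1+\sqrt d/\eta)^2 B/n$). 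The factor-$d$ gap between SGD and GD is precisely the $\sqrt d$-versus-$1$ weight in the $\ell_{2,q}$ norm that $\tr(G)\le 2/\eta$ and $\|G\|_2\le 2/\eta$ respectively reach. I expect Step 4 to be the crux: the squared loss is a priori neither bounded nor Lipschitz, so one must pass to the truncated class of bounded-output networks (legitimate since on $\mathcal E_1$ the path norm already lies below a fixed radius, giving the explicit output bound $|f|\le Q\sqrt d$), propagate the $\sqrt d$ input-scaling through the Rademacher estimate, and --- most delicately --- upgrade the naive $1/\sqrt n$ rate to the $1/n$ fast rate, which is exactly where interpolation ($\erisk=0$) and the smoothness of the squared loss are indispensable and where the $\log^3 n$ factor in $B$ originates. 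A final bookkeeping point: $\mathcal E_1$ and $\mathcal E_2$ are events on the same sample, so the deterministic implication ``linearly stable $\Rightarrow$ trace/spectral bound'' composes with both high-probability statements via a single union bound.
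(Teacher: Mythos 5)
Your proposal is correct and follows essentially the same route as the paper's proof: linear stability gives the trace (resp.\ spectral-norm) bound via \cref{pro: stability-1} and the classical GD condition $\|H\|_2\le 2/\eta$, \cref{thm: sharpness-relu-landscape} and \eqref{eqn: 321} convert it to a path-norm bound of order $1/(\eta\sqrt d)$ (resp.\ $1/\eta$), and the smoothness-based fast-rate bound of \citet{srebro2010smoothness} --- which the paper isolates as \cref{pro: 2lnn-sharpness-genbound} and you re-derive inline with the same Rademacher estimate $\mathfrak{R}_n\lesssim Q\sqrt d/\sqrt n$ and output bound $|f|\le Q\sqrt d$ --- closes the argument. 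Your explicit union-bound bookkeeping and the $\eta\lesssim 1$ caveat for absorbing the additive $+1$ are both consistent with the paper, which handles the latter by the hypothesis $\gamma\ge 1$ in \cref{pro: 2lnn-sharpness-genbound}.
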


\paragraph*{Proof idea.}
The complete proof can be found in Appendix \ref{sec: proof-2lnn-gengap}. Here we provide a sketch of proof idea. For $\htheta_{\mathrm{sgd}}$, the generalization gap can be informally bounded as follows
\begin{align}
 \notag   \text{gen-gap}(\htheta_{\mathrm{sgd}}) &\stackrel{(a)}{\lesssim} \frac{d\|\htheta_{\mathrm{sgd}}\|^2_{\cP}}{n}\stackrel{(b)}{\lesssim} \frac{\|\htheta_{\mathrm{sgd}}\|^2_{2,d}}{n}\\ 
    &\stackrel{(c)}{\lesssim} \frac{\tr^2(G(\htheta_{\mathrm{sgd}}))}{n}\stackrel{(d)}{\leq} \frac{1/\eta^2}{n},
\end{align}
where $(a)$ follows from the path norm-based generalization bound (Proposition \ref{pro: 2lnn-sharpness-genbound}); $(b)$ follows from \eqref{eqn: 321}; $(c)$ follows from  Theorem \ref{thm: sharpness-relu-landscape}; $(d)$ is due to the stability condition (Proposition \ref{pro: stability-1}). 

This theorem shows  that  stable minima provably generalize, no matter how over-parameterized the model is. This suggests that the stability-induced regularization is strong enough to eliminate the potential overfitting caused by over-parameterization. In addition, with the same LR, stable minima of SGD  generalize better than that of GD. However, it is more fair to compare SGD with LR $\eta$ and GD with LR $\sqrt{d}\eta$. Thus, we will use this LR choice in our experimental analysis for a fair comparison between SGD and GD.

\vspace*{-.5em}
\paragraph*{Comparison with existing works.}
\citet{mulayoff2021implicit} conducted a similar analysis for two-layer ReLU networks, which, however, is limited to GD and the univariate case.  Another closely related work is  \citet{ma2021linear}, which established a generalization bound of linearly stable minima of SGD for ReLU networks but the bound  suffers from the curse of dimensionality. One of the reasons is that the upper bound of the trace of Hessian derived in \citet{ma2021linear} depends on the model size explicitly.  
In contrast,  our generalization bounds are effective in high dimensions and hold for both SGD and GD.

\vspace*{-.2em}
\subsection{Numerical validations}
\label{sec: 2lnn-experiment}
\vspace*{-.3em}

Consider $f^*(x)=\sum_{i=1}^{k}\sigma(v_i^Tx)$ with $v_i\stackrel{iid}{\sim} \unif(\SS^{d-1})$ and  $f(x;\theta)=\sum_{j=1}^m a_j\sigma(w_j^Tx)$. We set $k=10, d=100,m=100$, and the sample size $n=300$. With this choice, the total number of parameters is $p=(d+1)m=10100$ and thus, we are examining a highly over-parameterized case where $p\gg n$. We consider a large initialization: $a_j\sim\cN(0,1)$ and $w_j\sim \cN(0, I_d/\sqrt{d})$, with which the path norm at initialization: $\|\theta\|_{\cP}\sim m$, growing linearly with the network width. This large initialization excludes the small initialization effect and one must rely on the stability-induced regularization to select minima with small path norms. In addition, gradient clipping will be applied to stabilize the training if SGD/GD blows up initially.

\paragraph*{The effect of gradient clipping.}
Figure \ref{fig: relu-net-sgd-process} shows the dynamical process of SGD with gradient clipping, where $\eta=1/\sqrt{d}$ and the clipping threshold  $\delta=1$. One can see that the gradient clipping is automatically switched off since around $4000$ iterations. After that, SGD can stably converge to a global minimum without clipping operations. This implies that around the convergent minimum, linear stability should be satisfied and consequently, it is not surprising to observe that $\tr(G(\theta_t))\leq 2/\eta$ when $\theta_t$ nearly converge. 
Another interesting observation is that during the whole training process, $\tr(G(\theta_t))$ keeps decreasing, which in turn causes the continued decreasing of path norm. This phenomenon cannot be explained by the stability condition and one should delve into the dynamical process of SGD. We  refer to \citet{blanc2020implicit} for a potential explanation.

\begin{figure}[!h]
\centering
\includegraphics[width=0.165\textwidth]{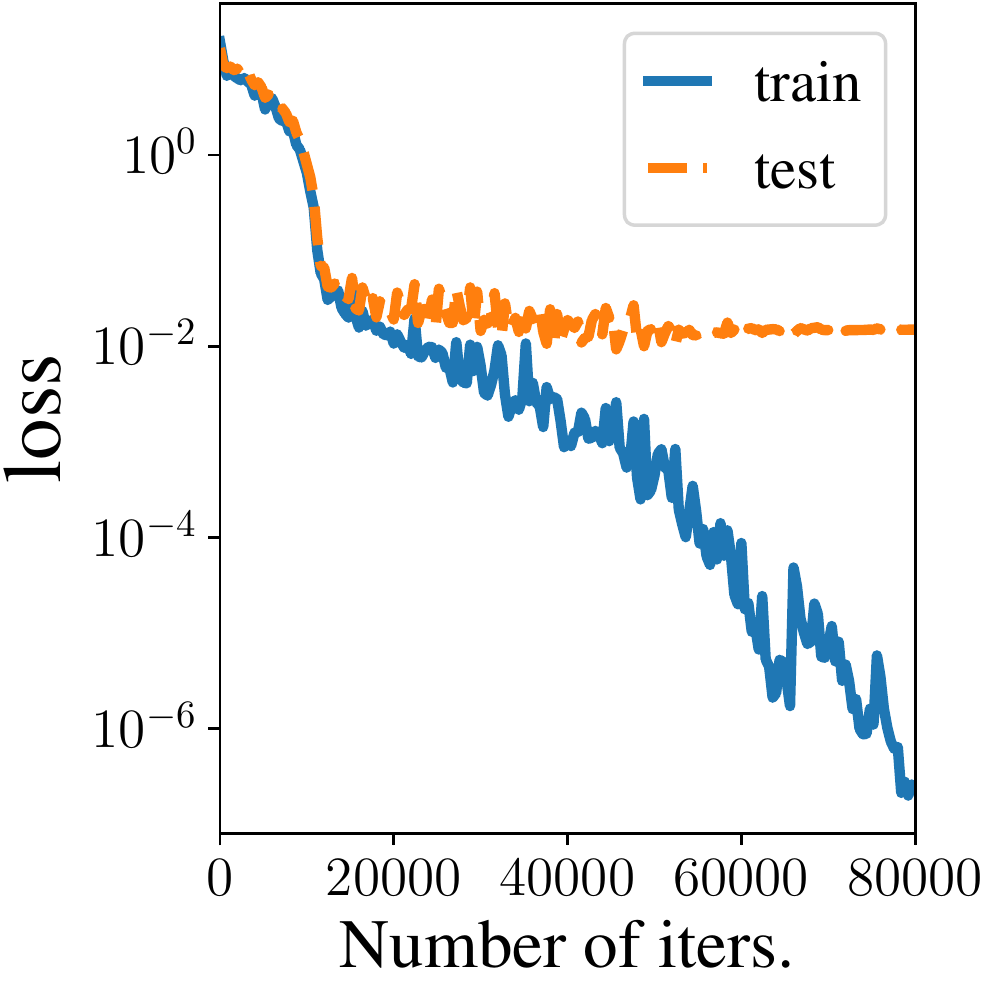}
\includegraphics[width=0.17\textwidth]{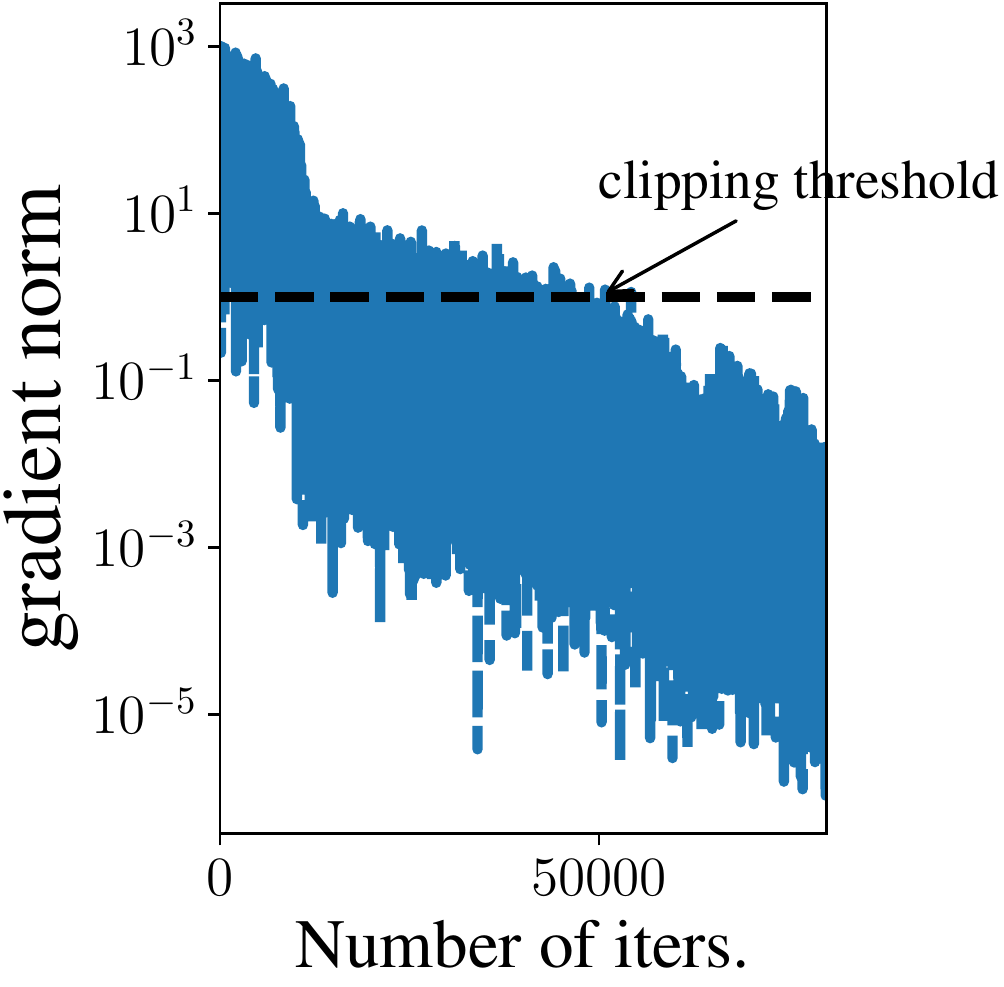}
\includegraphics[width=0.125\textwidth]{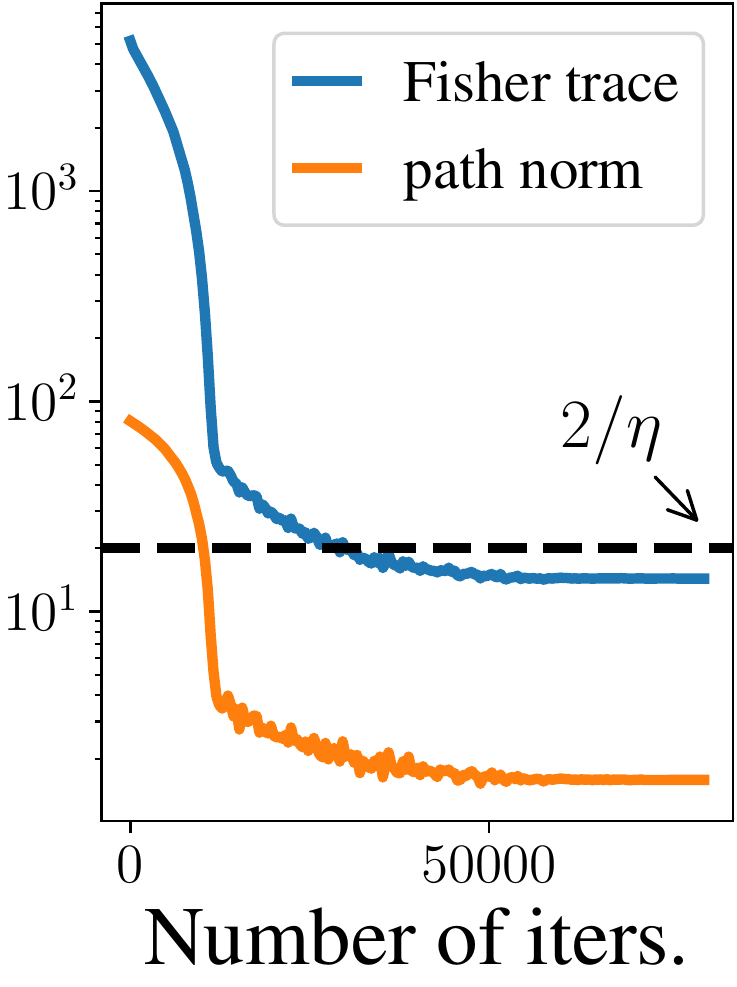}
\vspace*{-.4em}
\caption{\small The training process of SGD with gradient clipping. The gradient clipping is automatically switched off in the late phase of training. The trace of Fisher matrix keeps decreasing  until it becomes lower than $2/\eta$ and meanwhile, the path norm also keeps decreasing, which is consistent with Theorem \ref{thm: sharpness-relu-landscape}.}
\label{fig: relu-net-sgd-process}
\end{figure}

\begin{figure}[!h]
\subfigcapskip = -.4em
\centering
\subfigure[]{\label{fig: relunet-1}
\includegraphics[width=0.185\textwidth]{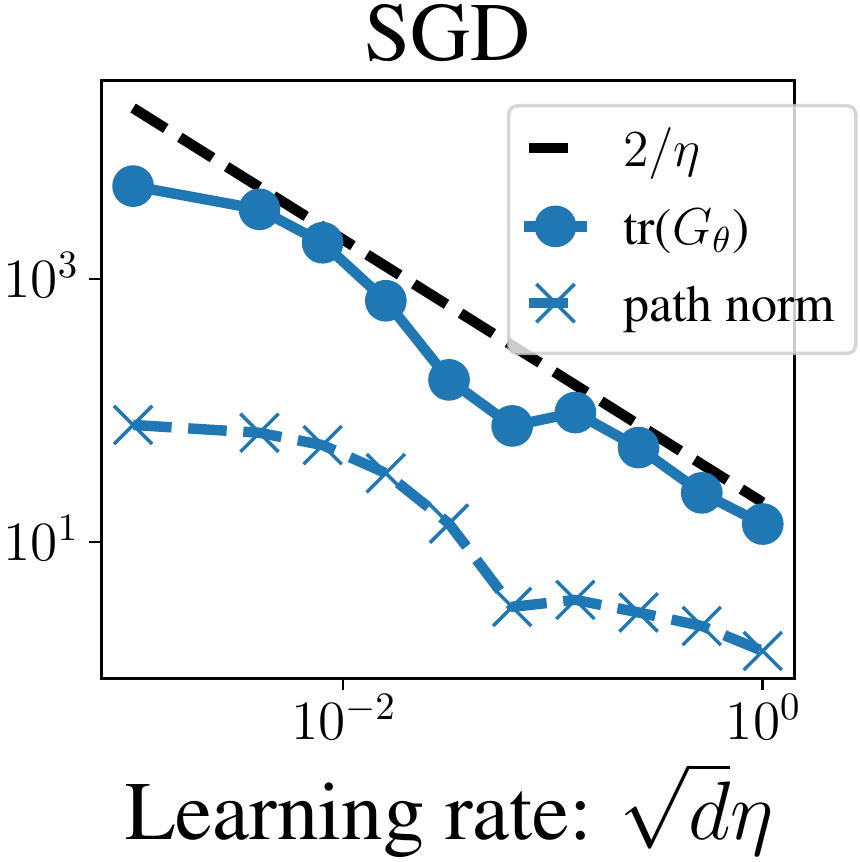}
\hspace*{1em}
\includegraphics[width=0.18\textwidth]{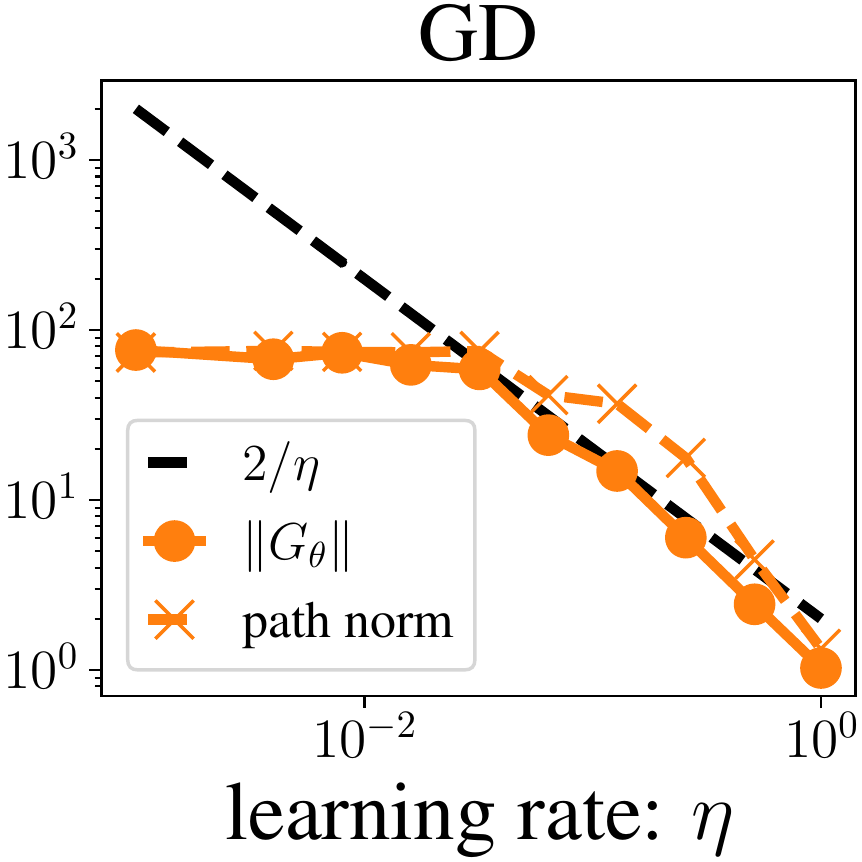}
}
\subfigure[]{\label{fig: relunet-2}
\includegraphics[width=0.205\textwidth]{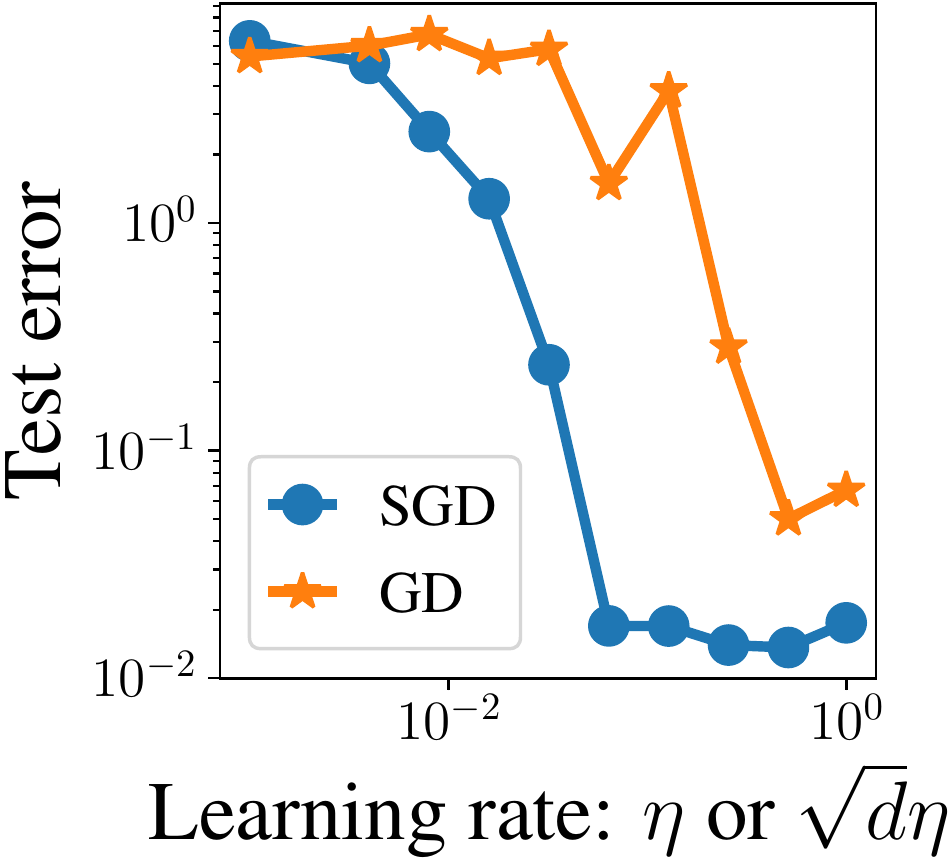}
\hspace*{.5em}
\includegraphics[width=0.2\textwidth]{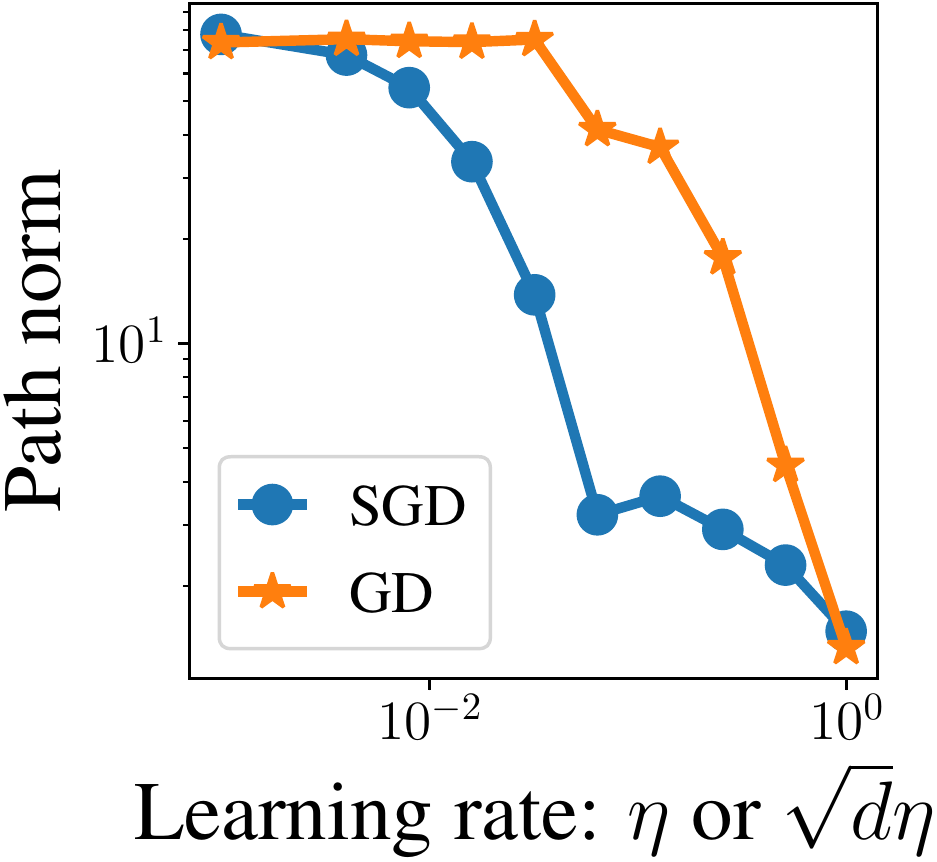}
}
\vspace*{-1em} 
\caption{\small (a) The sharpness and path norm vs. LR. (b) The  test performance vs. LR.
For a fair comparison, we compare SGD with LR  $\eta/\sqrt{d}$ and GD with LR $\eta$.
}
\label{fig: relu-net-sgd-vs-gd}
\vspace*{-1em}
\end{figure}

\paragraph*{The sharpness.}
Figure \ref{fig: relunet-1} shows how the sharpness and path norm of minima selected by SGD and GD changes with the LR. For SGD, $\tr(G(\theta))$ keeps decreasing but close to the upper bound $2/\eta$. Consequently, the path norm also keeps decreasing. This is consistent with the predictions of linear stability analysis and Theorem \ref{thm: sharpness-relu-landscape}. 
For GD, $\|G(\theta)\|_2$ keeps close to the upper bound $2/\eta$ when the LR is sufficiently large. When the LR is small, the actual sharpness is away from the upper bound. These observations suggest that the impact of stability-induced  regularization is particularly significant in the large LR regime.

\vspace*{-.8em}
\paragraph*{The test performance.}
Figure \ref{fig: relunet-2} shows that the test performance is continually improved for both SGD and GD as increasing the LR, which again confirms the prediction of Theorem \ref{thm: 2lnn-generalization-bound}.  One observation of more interest is that 
Figure \ref{fig: relunet-2} shows that with the fair choice of LR, SGD still generalizes better than GD. This beyond what Theorem \ref{thm: 2lnn-generalization-bound} can explain since the generalization bounds are the same for SGD and GD in such a case. In addition, when the LR is overly large, SGD still generalizes better although its path norm becomes larger than that of GD.  
A potential explanation is that the noise  drives SGD towards better minima  with certain mechanism beyond dynamical stability. We leave this to future work.

\section{Diagonal linear networks}
\label{sec: diagonal nets}

\vspace*{-.2em}
Consider the two-layer diagonal linear network  
\begin{equation}
\vspace*{-.5em}
f(x;\theta)=\langle a\odot b,x\rangle,
\vspace*{-.1em}
\end{equation}
where $a,b\in\RR^d$, $\theta=(a,b)\in\RR^{2d}$, and $\odot$ denotes the element-wise multiplication. Despite its simplicity, this model has been widely used in theoretical analysis to demonstrate particular properties of SGD in training neural networks \cite{woodworth2020kernel,gissin2019implicit,pesme2021implicit,nacson2022implicit}. Note that this model  can only represent linear predictors and we will use $\beta=a\odot b$  to denote the effective coefficients. In this section, we make the following assumption on $\rho$. 
\begin{assumption}\label{assumption: input-1}
Let $X\sim \rho$. Assume  $\EE[XX^T]=I_d$ and $X$ is sub-Gaussian, i.e., $\|u^TX\|_{\psi_2}\lesssim 1$ for any $u\in\SS^{d-1}$.
\end{assumption}
Here $\|\cdot\|_{\psi_2}$ denotes the sub-Gaussian norm (we refer to Appendix \ref{app: concentration} for details) and one typical example that satisfies the above assumption is $\cN(0,I_d)$ and $\mathrm{Unif}([-1,1]^d)$.

\begin{theorem}\label{thm: diagnet-sharpness}
Suppose Assumption \ref{assumption: input-1} holds. Let $\alpha = a\odot a + b\odot b$. Let  $\delta\in (0,1)$  be the failure probability.
\begin{itemize}
\item If $r_n  = \sqrt{(d+\log(1/\delta))/n}\leq 1$, then \wp  $1-\delta$ that 
\[
(1-r_n) \|\alpha\|_\infty \leq \|G(\theta)\|_2 \leq (1+r_n) \|\alpha\|_\infty.
\]
\item If $\varepsilon_n = \sqrt{\log(d/\delta)/n}\leq 1$. Then, \wp  $1-\delta$ that 
\begin{align*}
\hspace*{-1.5em} (1-\varepsilon_n)\|\alpha\|_2&\leq \|G(\theta)\|_F \leq \varepsilon_n \|\alpha\|_1+(1+2\varepsilon_n)\|\alpha\|_2\\
\hspace*{-1.5em}(1-\varepsilon_n)\|\alpha\|_1 &\leq \tr(G(\theta)) \leq (1+\varepsilon_n)\|\alpha\|_1.
\end{align*} 
\end{itemize}
\end{theorem}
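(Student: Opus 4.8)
The proof rests on a clean bilinear description of the Fisher matrix. Since $f(x;\theta)=\sum_{j=1}^{d}a_jb_jx_j$, the per-sample gradient is $g_i(\theta)=\nabla f(x_i;\theta)=(b\odot x_i,\ a\odot x_i)\in\RR^{2d}$, so a block computation gives
\[
G(\theta)=M\,\widehat\Sigma\,M^{T},\qquad M:=\begin{pmatrix}\mathrm{diag}(b)\\ \mathrm{diag}(a)\end{pmatrix},
\]
where $\widehat\Sigma:=\frac1n\sum_{i=1}^{n}x_ix_i^{T}$ is the empirical second-moment matrix, and one checks $M^{T}M=\mathrm{diag}(a\odot a+b\odot b)=\mathrm{diag}(\alpha)$ with $\alpha\ge 0$ coordinatewise. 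If $\widehat\Sigma$ were exactly $I_d$, then $\|G(\theta)\|_2=\|\alpha\|_\infty$, $\tr(G(\theta))=\|\alpha\|_1$ and $\|G(\theta)\|_F=\|\alpha\|_2$ \emph{exactly}, since the nonzero eigenvalues of $MM^{T}$ are precisely the $\alpha_j$. Hence the whole statement reduces to quantifying how far $\widehat\Sigma$ is from $I_d$ — in operator norm for the first bullet, entrywise for the second — and then propagating this perturbation through the three spectral functionals. So the plan is to first record two concentration facts: (i) under Assumption \ref{assumption: input-1} with $r_n\le 1$, $\|\widehat\Sigma-I_d\|_2\le r_n$ with probability at least $1-\delta$, by the standard covariance-estimation bound for sub-Gaussian vectors; and (ii) $\max_{j,k}|\widehat\Sigma_{jk}-(I_d)_{jk}|\le\varepsilon_n$ with probability at least $1-\delta$, by applying Bernstein's inequality to each entry (products of sub-Gaussian coordinates are sub-exponential, and $\varepsilon_n\le1$ absorbs the quadratic Bernstein term) followed by a union bound over the $d^2$ entries.

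Given (i), the spectral bound is immediate: $\widehat\Sigma\succeq 0$ gives $\lambda_{\min}(\widehat\Sigma)\,MM^{T}\preceq M\widehat\Sigma M^{T}\preceq\lambda_{\max}(\widehat\Sigma)\,MM^{T}$, and since all three matrices are PSD this yields $\lambda_{\min}(\widehat\Sigma)\|MM^{T}\|_2\le\|G(\theta)\|_2\le\lambda_{\max}(\widehat\Sigma)\|MM^{T}\|_2$; now $\|MM^{T}\|_2=\|M^{T}M\|_2=\|\alpha\|_\infty$ while $1-r_n\le\lambda_{\min}(\widehat\Sigma)\le\lambda_{\max}(\widehat\Sigma)\le 1+r_n$. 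For the trace, the cyclic property gives $\tr(G(\theta))=\tr(\widehat\Sigma M^{T}M)=\sum_{j=1}^{d}\widehat\Sigma_{jj}\alpha_j$, and since every $\alpha_j\ge 0$ and $|\widehat\Sigma_{jj}-1|\le\varepsilon_n$ by (ii), this quantity is sandwiched between $(1-\varepsilon_n)\|\alpha\|_1$ and $(1+\varepsilon_n)\|\alpha\|_1$.

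The Frobenius bound is the one place that needs more care, and I expect it to be the main obstacle. Starting from $\|G(\theta)\|_F^{2}=\tr\big((M\widehat\Sigma M^{T})^{2}\big)=\tr\big((\widehat\Sigma\,\mathrm{diag}(\alpha))^{2}\big)=\sum_{j,k}\widehat\Sigma_{jk}^{2}\,\alpha_j\alpha_k$, I would separate the diagonal: $\sum_{j}\widehat\Sigma_{jj}^{2}\alpha_j^{2}+\sum_{j\ne k}\widehat\Sigma_{jk}^{2}\alpha_j\alpha_k$. By (ii) the first sum lies in $[(1-\varepsilon_n)^{2},(1+\varepsilon_n)^{2}]\,\|\alpha\|_2^{2}$, while the second is nonnegative and at most $\varepsilon_n^{2}\sum_{j\ne k}\alpha_j\alpha_k\le\varepsilon_n^{2}\|\alpha\|_1^{2}$. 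Dropping the off-diagonal term gives the lower bound $\|G(\theta)\|_F\ge(1-\varepsilon_n)\|\alpha\|_2$; keeping it and using $\sqrt{x+y}\le\sqrt x+\sqrt y$ gives $\|G(\theta)\|_F\le(1+\varepsilon_n)\|\alpha\|_2+\varepsilon_n\|\alpha\|_1\le(1+2\varepsilon_n)\|\alpha\|_2+\varepsilon_n\|\alpha\|_1$, matching the claim (the extra slack in the stated constant leaves room for lower-order terms from the concentration step).

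The conceptual point worth flagging in the write-up is that the $\|\alpha\|_1$ term for the Frobenius norm is genuinely unavoidable: in the worst case the off-diagonal entries of $\widehat\Sigma$ are all of size $\varepsilon_n$ and aligned with the sign pattern of $\alpha_j\alpha_k$, so $\|G(\theta)\|_F$ can exceed $\|\alpha\|_2$ by order $\varepsilon_n\|\alpha\|_1$ — which is exactly why Frobenius is only \emph{roughly} equivalent to $\|\alpha\|_2$, whereas the trace equivalence is two-sided and tight. The remaining work is purely quantitative bookkeeping: making the constants in (i) and (ii) come out as precisely $r_n$ and $\varepsilon_n$ (rather than $Cr_n,\,C\varepsilon_n$), which follows from the sharp forms of the sub-Gaussian covariance and Bernstein inequalities in the regime $r_n,\varepsilon_n\le 1$; these estimates are collected in Appendix \ref{app: concentration}.
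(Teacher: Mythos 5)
Your proposal is correct and follows essentially the same approach as the paper: operator-norm concentration of $\widehat\Sigma$ for the spectral bound, entrywise (Bernstein plus union-bound) concentration for the trace and Frobenius bounds, with all three cases reducing to the same scalar expressions $\sum_j\alpha_j\widehat\Sigma_{jj}$ and $\sum_{j,k}\alpha_j\alpha_k\widehat\Sigma_{jk}^2$ that the paper derives directly in coordinates. Your factorization $G(\theta)=M\widehat\Sigma M^{T}$ with $M^{T}M=\mathrm{diag}(\alpha)$ is a tidier way to organize the identical computation, and you correctly flag that the exact prefactors $r_n,\varepsilon_n$ (rather than $Cr_n,C\varepsilon_n$) require the sharp constant in the sub-Gaussian covariance and Bernstein bounds, which the paper likewise leaves implicit.
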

This theorem establishes the equivalence between the sharpness and parameter norms, whose proof is deferred to  Appendix \ref{sec: proof-diagnet-sharpness}.  It is worth noting that the cases of Frobenius norm and trace hold in the highly over-parameterized regime: $n \sim \log(d/\delta)$.

\begin{figure*}[!ht]
\centering
\subfigcapskip = -.8em
\subfigure[]{\label{fig: diag-net-a}
\includegraphics[width=0.25\textwidth]{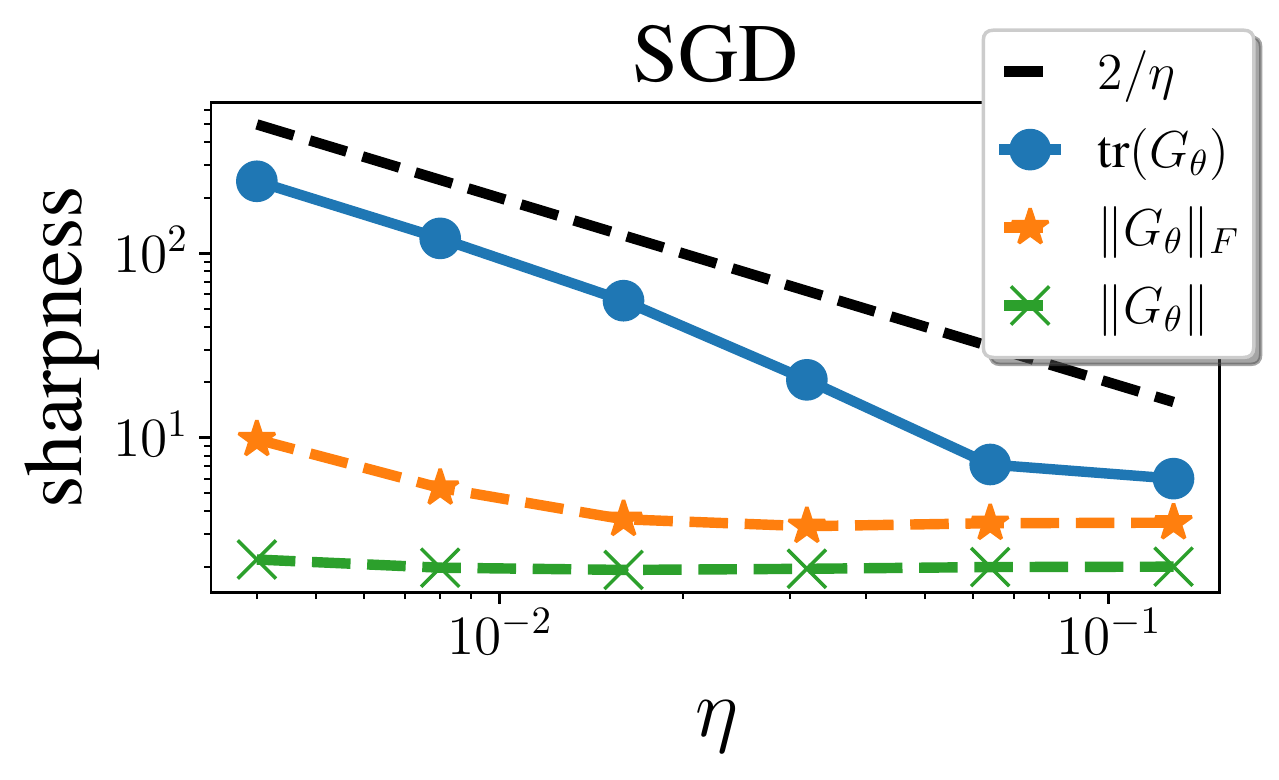}
\hspace*{-.5em}
\includegraphics[width=0.25\textwidth]{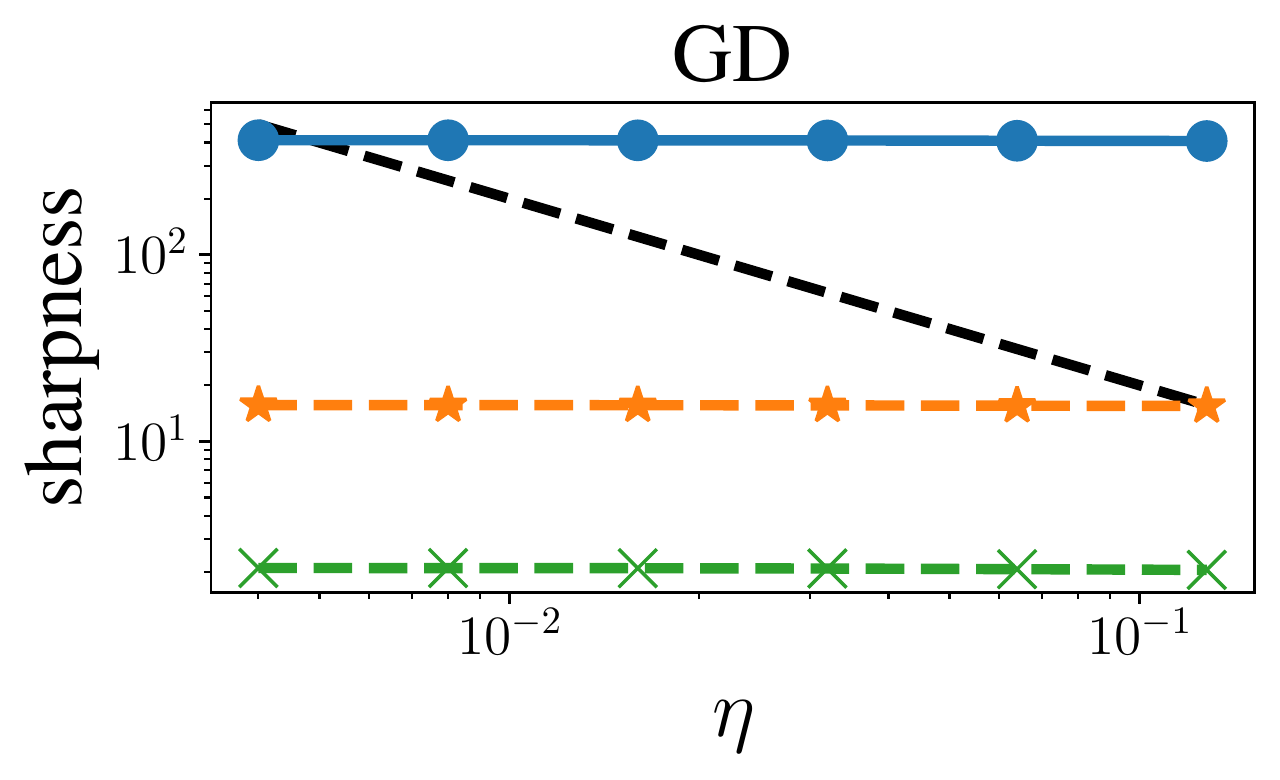}
}
\hspace*{-1em}
\subfigure[]{\label{fig: diag-net-b}
\includegraphics[width=0.235\textwidth]{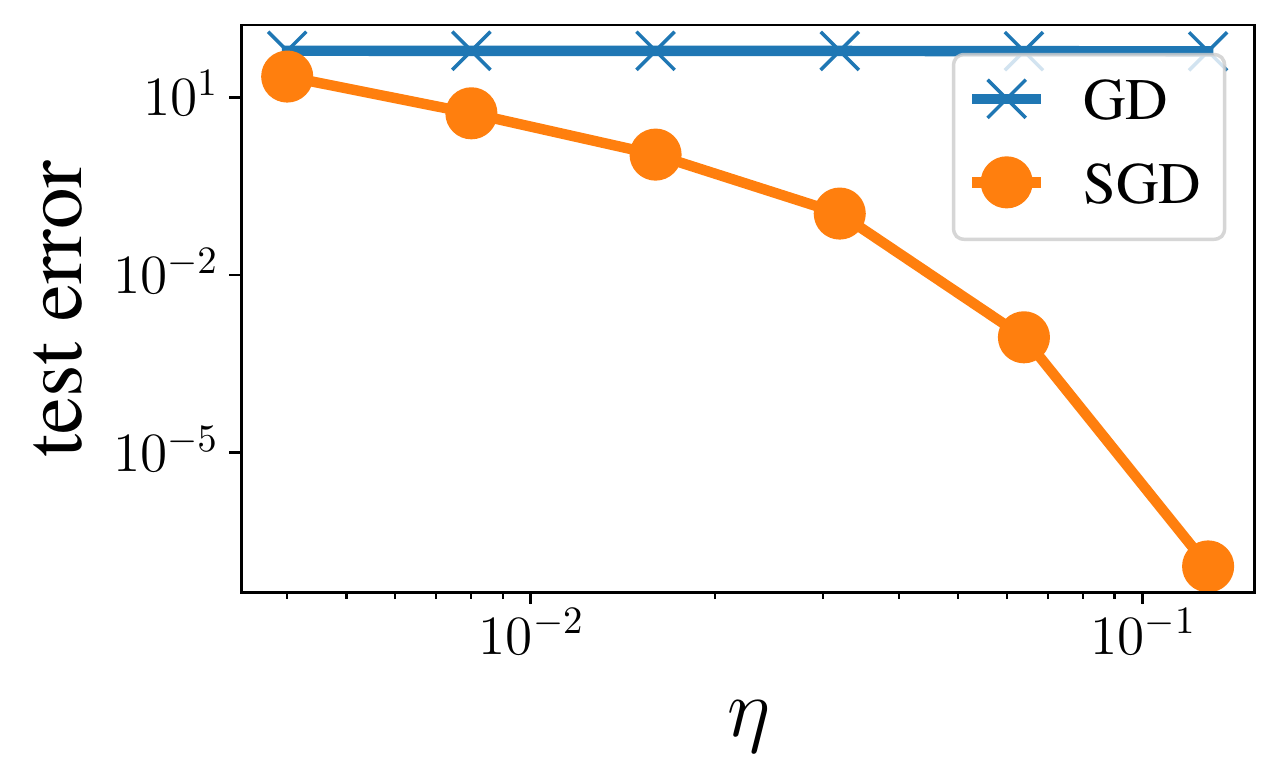}
}
\subfigure[]{\label{fig: diag-net-c}
    \includegraphics[width=0.235\textwidth]{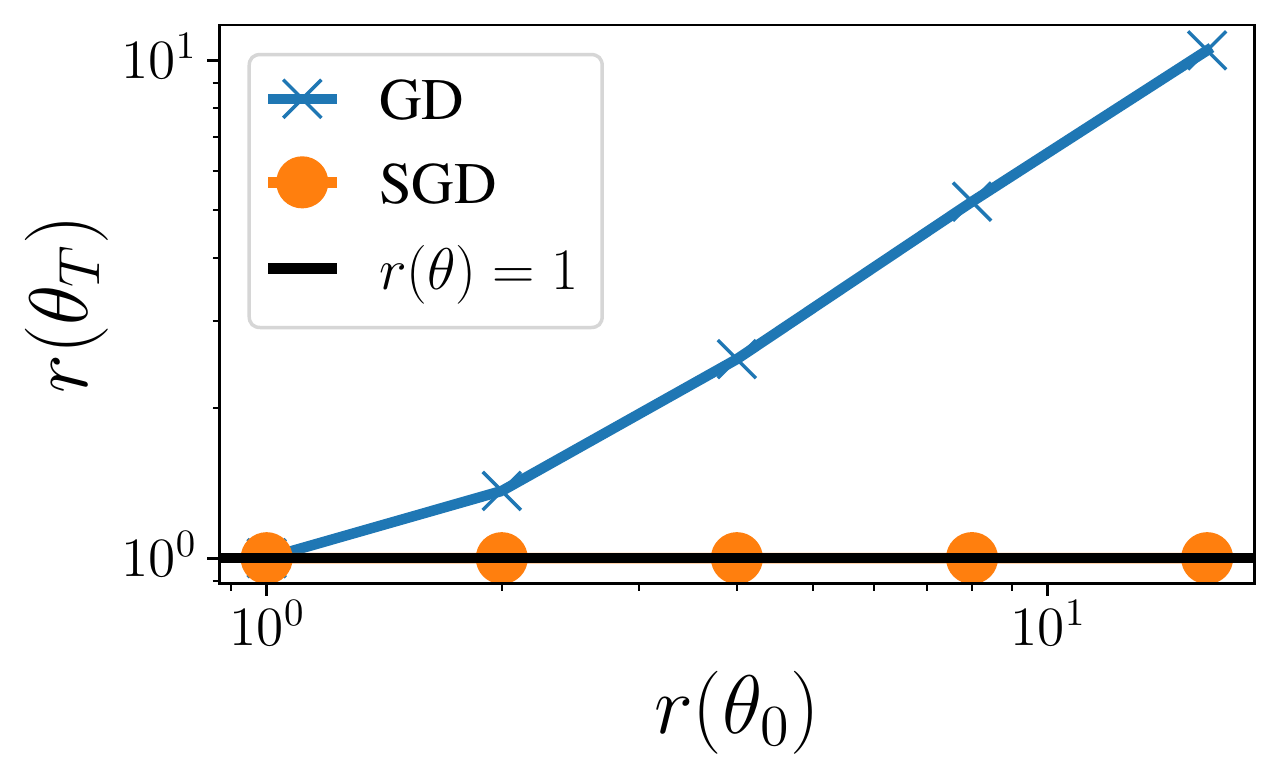}
}
\vspace*{-.8em}
\caption{\small (a) How the sharpness of minima found by SGD and GD changes with the learning rate. We see that for SGD, the upper bound $2/\eta$ provides a quite sharp  estimate of the actual trace of Fisher matrix up to a multiplicative constant.  In contrast, for GD, the sharpness barely changes as increasing the learning rate. (b) The comparison of test performance between SGD and GD for varying learning rates. (c) Demonstrate the balancing effect of SGD, where the unbalancedness is measured by $r(\theta)=0.5\|\alpha\|_2/\|\beta\|_1$. The horizontal and vertical axises correspond to the unbalance at initialization and convergence, respectively.
}
\label{fig: diag-net-1}
\vspace*{-1em}
\end{figure*}

Theorem \ref{thm: diagnet-sharpness} shows that with a high probability,  $\|G(\theta)\|_2$ is equivalent to $\max_{j}(a_j^2+b_j^2)$, which unfortunately cannot provide an  effective capacity control for  the linear predictor: $(a\odot b)^Tx$.  Furthermore, the stability of GD only imposes a size-independent control on $\|G(\theta)\|_2$. Thus, we can conclude that the stability-induced regularization of GD is not strong enough to help find generalizable minima. In contrast,  the stable minima of SGD provably generalize well, which is explained as follows. Noting
\begin{equation}\label{eqn: 111}
\begin{aligned}
\|\alpha\|_1&=\sum_j (a_j^2+b_j^2)\geq 2\sum_j |a_jb_j|=2\|\beta\|_1\\ 
\|\alpha\|_2^2&=\sum_{j} (a_j^2+b_j^2)^2\geq 4\sum_j (a_jb_j)^2 = 4\|\beta\|_2^2
\end{aligned}
\end{equation}
and applying Theorem \ref{thm: diagnet-sharpness}, we can conclude that the linear stability and loss stability can control the $\ell_1$ and $\ell_2$ norm of effective coefficients, respectively, which yield  effective capacity controls for the linear predictor.   Specifically, the following theorem formalizes this observation for the case of linear stability and the proof is deferred to Appendix \ref{sec: proof-diagnet-gen-gap}.

\begin{theorem}\label{thm: diagnet-gen-gap}
Suppose $\rho=\unif([-1,1]^d)$  and $f^*(x)=\beta_*^Tx$. Let $\htheta=(\ha,\hb)$ be a global minimum that is linearly stable for SGD \eqref{eqn: mini-batch-sgd} with LR $\eta$. Then,  for any $\delta\in (0,1)$, if $n\gtrsim \log(d/\delta)$, then \wp $1-\delta$ we have $\|\ha\odot \hb\|_{1}\lesssim 1/\eta$ and 
\[
  \risk(\htheta)\lesssim \frac{(1/\eta)^2\log^2(n)\log(d)}{n} + \frac{(\|\beta_*\|_1+\frac{1}{\eta})^2\log(1/\delta)}{n}.
\]
\end{theorem}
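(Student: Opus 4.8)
The plan is to obtain the norm bound $\|\ha\odot\hb\|_1\lesssim 1/\eta$ straight from the stability results already established, and then to deduce the risk bound by combining it with the interpolation constraint $\erisk(\htheta)=0$ and a restricted-eigenvalue estimate for the random design. For the norm bound: since $\htheta$ is a global minimum that is linearly stable, Proposition~\ref{pro: stability-1} gives $\tr(G(\htheta))\leq 2/\eta$. As $\rho=\unif([-1,1]^d)$ satisfies Assumption~\ref{assumption: input-1}, Theorem~\ref{thm: diagnet-sharpness} applies with $\varepsilon_n=\sqrt{\log(d/\delta)/n}$, and $n\gtrsim\log(d/\delta)$ lets us take $\varepsilon_n\leq 1/2$, so $\|\hat\alpha\|_1\leq\tr(G(\htheta))/(1-\varepsilon_n)\leq 4/\eta$ where $\hat\alpha:=\ha\odot\ha+\hb\odot\hb$. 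The AM--GM inequality \eqref{eqn: 111} then gives $\|\ha\odot\hb\|_1\leq\tfrac12\|\hat\alpha\|_1\leq 2/\eta$, which is the first claim; as a consequence, $\|h\|_1\leq\|\ha\odot\hb\|_1+\|\beta_*\|_1\lesssim 1/\eta+\|\beta_*\|_1$ where $h:=\ha\odot\hb-\beta_*$ is the error of the effective predictor.

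For the risk bound I first reduce to an estimate on $\|h\|_2$. The over-parameterization assumption together with ``$\htheta$ is a global minimum'' forces $\erisk(\htheta)=0$, hence $\langle\ha\odot\hb,x_i\rangle=y_i=\langle\beta_*,x_i\rangle$ for all $i\in[n]$, that is, $h\in\ker X$ for the random data matrix $X=[x_1,\dots,x_n]^T\in\RR^{n\times d}$. On the other hand $\risk(\htheta)=\tfrac12\,h^T\,\EE_{x\sim\rho}[xx^T]\,h\sim\|h\|_2^2$ by Assumption~\ref{assumption: input-1}. The key estimate I would establish is a restricted-eigenvalue / null-space bound for the sub-Gaussian design: with probability at least $1-\delta$, for every $v\in\RR^d$,
\[
\frac1n\|Xv\|_2^2\;\geq\;c_0\,\|v\|_2^2\;-\;C\,\frac{\log d\,\log^2 n+\log(1/\delta)}{n}\,\|v\|_1^2,
\]
with $c_0>0$ an absolute constant (a multiple of the least eigenvalue of $\EE[xx^T]$). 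Applying this with $v=h$ and using $\tfrac1n\|Xh\|_2^2=0$ gives $\|h\|_2^2\lesssim\frac{\log d\,\log^2 n+\log(1/\delta)}{n}\|h\|_1^2$; feeding in $\|h\|_1\lesssim 1/\eta+\|\beta_*\|_1$ already produces a bound of the claimed order. To land on the precise two-term form, one organizes the argument so that the chaining/peeling part over $\ha\odot\hb$ only sees the radius $O(1/\eta)$ guaranteed by the first claim ($\beta_*$ being fixed), which produces the $(1/\eta)^2\log d\,\log^2 n/n$ term, while the residual is handled by a single Bernstein-type step at scale $\|h\|_1\lesssim 1/\eta+\|\beta_*\|_1$ and failure probability $\delta$, which produces the $(\|\beta_*\|_1+1/\eta)^2\log(1/\delta)/n$ term. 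Multiplying $\|h\|_2^2$ by the constant from $\risk(\htheta)\sim\|h\|_2^2$ then finishes the proof.

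The main obstacle is the displayed restricted-eigenvalue inequality in the genuinely high-dimensional regime $n\ll d$ permitted by the hypothesis $n\gtrsim\log(d/\delta)$, where $\tfrac1n X^TX$ is badly rank-deficient. Plain matrix concentration is of no use here: it controls only $\|\tfrac1n X^TX-\EE[xx^T]\|_{\max}$ at the slow rate $1/\sqrt n$, and through the crude Hölder bound $|v^T(\tfrac1n X^TX-\EE[xx^T])v|\leq\|v\|_1^2\,\|\tfrac1n X^TX-\EE[xx^T]\|_{\max}$ this only yields a useless $1/\sqrt n$ risk bound. Obtaining the fast $1/n$ rate requires restricting to approximately sparse directions and running a peeling argument over the dyadic scales of the ratio $\|v\|_1/\|v\|_2$, controlling the sub-Gaussian quadratic process on each scale (by a Gaussian-width / small-ball estimate using Assumption~\ref{assumption: input-1}); this peeling is exactly what produces the $\log^2 n$ factor, and carrying the tail terms through it with total failure probability $\delta$ is what pushes the $\log(1/\delta)$ dependence into its own summand. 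Everything else --- the invocations of Proposition~\ref{pro: stability-1} and Theorem~\ref{thm: diagnet-sharpness}, the AM--GM step, and the identity $\risk(\htheta)=\tfrac12\,h^T\EE_{x\sim\rho}[xx^T]\,h$ --- is routine.
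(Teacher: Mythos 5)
Your derivation of $\|\ha\odot\hb\|_1\lesssim 1/\eta$ is exactly the paper's: Proposition~\ref{pro: stability-1} gives $\tr(G(\htheta))\le 2/\eta$, Theorem~\ref{thm: diagnet-sharpness} gives $\tr(G(\htheta))\gtrsim\|a\odot a + b\odot b\|_1$, and AM--GM finishes. The risk bound, however, follows a genuinely different route. The paper treats $\htheta$ as an interpolating minimizer over the $\ell_1$-ball $\cH_Q$ with $Q\lesssim 1/\eta$ and cites the off-the-shelf smooth-loss fast-rate bound (Theorem~\ref{thm: fast-rates-smoothloss}, Srebro--Sridharan--Tewari) together with $\mathfrak{R}_n(\cH_Q)\lesssim Q\sqrt{\log d/n}$; the polylog factors in $n$ and the $\log(1/\delta)$ term come directly from that theorem, with no need to exploit the linear structure beyond a Rademacher complexity calculation. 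You instead use the interpolation constraint structurally: $\erisk(\htheta)=0$ places $h=\ha\odot\hb-\beta_*$ in $\ker X$, and a restricted-eigenvalue estimate for the isotropic sub-Gaussian design turns $\|h\|_1\lesssim 1/\eta+\|\beta_*\|_1$ into control of $\|h\|_2^2\sim\risk(\htheta)$. This is a sound and arguably more transparent route, but note: the RE inequality you write down --- with both a $\log^2 n$ factor and a separate $\log(1/\delta)$ term --- appears engineered to mirror the paper's displayed bound rather than what the standard literature gives. For isotropic sub-Gaussian rows the usual statement (Rudelson--Zhou, Lecu\'e--Mendelson) is $\tfrac1n\|Xv\|_2^2\ge c_0\|v\|_2^2-c_1\tfrac{\log d}{n}\|v\|_1^2$ for all $v$, with failure probability $e^{-cn}$; since $n\gtrsim\log(d/\delta)$ already makes that tail at most $\delta$, your route naturally yields the cleaner bound $\risk(\htheta)\lesssim\tfrac{\log d}{n}\,(1/\eta+\|\beta_*\|_1)^2$, and the peeling you sketch to force the paper's two-term form is unnecessary work. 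The price of your route relative to the paper's one-line citation is that you must actually supply or cite that RE bound --- you flag this yourself as the main obstacle, and until it is proved the argument is a correct plan rather than a complete proof.
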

This theorem shows that SGD selects minima with the $\ell_1$ norm bounded by $1/\eta$. As long as the LR $\eta$ is sufficiently large and  $\|\beta_*\|_1=O(1)$, the minima found by SGD generalizes well.  \citet{woodworth2020kernel} showed that for this model, gradient flow converges to the minimum $\ell_1$ norm solutions when a (near-)zero initialization is used.   Nevertheless, we reveal that the linear stability of large-LR SGD has a similar effect, which is independent of the initialization scale.

\vspace*{-.4em}
\paragraph*{Comparison with \citet{nacson2022implicit}.} \citet{nacson2022implicit} obtained a similar result for  GD but it crucially relies on the non-centered data assumption: All coordinates of $\EE[X]$ are nonzero. In contrast, our analysis does not need this assumption and moreover, can explain why SGD generalizes better than GD. We also point out that under the non-centered data assumption, the stability-induced regularization might not be able to distinguish SGD and GD as both control the $\ell_1$ norm of effective coefficients.


\vspace*{-.4em}
\paragraph*{The balancing effect.}
Another interesting consequence is that SGD tends to select balanced solutions where $a_j^2\approx b_j^2$ for any $j\in [n]$. This is because minimizing the trace and Frobenius norm of Fisher matrix naturally leads to this balance according to Theorem \ref{thm: diagnet-sharpness} and \eqref{eqn: 111}. In contrast, the stability of GD only controls $\max_{j\in [d]} (a_j^2+b_j^2)$, which  does not have the  balancing effect except  for the coordinate: $k\in \argmax_j (a_j^2+b_j^2)$.

\vspace*{-.4em}
\paragraph*{Deep diagonal linear networks.} Similar to \citet{nacson2022implicit},
we can analyze the interaction between depth and stability by examining the deep model $f(x;\theta)=\langle a^D\odot b^D,x\rangle$, where $a^D=(a_1^D,\dots,a_d^D)$ and $b^D$ is defined similarly. 
Analogous to Theorem \ref{thm: diagnet-sharpness} and \eqref{eqn: 111}, one can show that $\tr(G(\theta))\gtrsim \|a^D\odot b^D\|_{p}^p$ with $p=\frac{2(D-1)}{D}$.  Thus, the stability-induced regularization  changes from the $\ell_1$ norm for $D=2$ to the $\ell_2$ norm for $D\to \infty$. Here we do not discuss this in detail since it does not reveal any new insights beyond \citet[Section 6]{nacson2022implicit}.

\subsection{Numerical validations}
Consider $f^*(x)=\beta_*^Tx$ with  $\beta_* = (1,1,1,0,\cdots,0)$, for which $\|\beta_*\|_1=3$. We set $d=1000, n=300$ and initialize the model by $a_j,b_j\stackrel{iid}{\sim} \cN(0,1)$ for $j=1,\dots,d$. This large initialization is adopted to eliminate the implicit regularization of small initialization.
The model is trained by SGD and GD with varying LRs. Gradient clipping is applied to stabilize the training for the case of large LR. The results are reported in Figure \ref{fig: diag-net-1}. 

\vspace*{-.2em}
\paragraph*{The sharpness.}
The left panel of Figure \ref{fig: diag-net-a} shows how the actual sharpness of minima selected by SGD changes as increasing the LR.  
One can see that the $\tr(G(\theta))$ keeps close to  $2/\eta$--the upper bound ensured by the linear stability; $\|G(\theta)\|_F$ also decreases with LR though the decreasing is not significant. These are in contrast to $\|G(\theta)\|_2$, which keeps almost unchanged. These observations suggest that for diagonal linear networks, the linear stability is critical in characterizing the sharpness of minima found by SGD, which is consistent with the fact that in this case, SGD converges to global minima stably.
 As a comparison, the right panel of Figure \ref{fig: diag-net-b} shows that for GD, all the three sharpness keep almost unchanged when increasing the LR. 

\vspace*{-.2em}
\paragraph*{The test performance.}
Figure \ref{fig: diag-net-b} shows the test errors of minima found by SGD and GD for varying LRs. One can see that as increasing the LR, the test error of SGD decreases significantly. This can be explained by the fact that  $\tr(G(\theta))$ and the resulting $\ell_1$ norm of $\beta$ decrease significantly as demonstrated in Figure \ref{fig: diag-net-a}. In contrast, the test error of GD  barely changes, which is also consistent with our theoretical prediction that the stability of GD can not yield effective capacity control for diagonal linear networks.  These are consistent with our theoretical prediction: The stability-induced regularization of SGD is much stronger than that of GD. 

\vspace*{-.2em}
\paragraph*{The balancing effect.} 
To measure the balancedness between the inner and outer layers, we define
$
r(\theta) = {\|\alpha\|_1}/{2\|\beta\|_1} = {\sum_j(a_j^2+b_j^2)}/{(2\sum_{j}|a_jb_j|)}. 
$
By the AM-GM inequality,  $r(\theta)\geq 1$ and the equality is reached when $a_j^2=b_j^2$ for all $j\in [n]$, i.e., the solutions are totally balanced. The larger $r(\theta)$ is, the less balanced the solution is. In the experiment, we consider the initialization $a_j\sim \cN(0, 0.1), b_j\sim \cN(0, 0.1 r_0)$ with $r_0$ controlling the balancedness at initialization. We are interested in the balancedness of minima selected by SGD and GD. 
Figure \ref{fig: diag-net-c} shows that SGD finds solutions with $r(\theta)\approx 1$ no matter how unbalanced the initialization is. In contrast,  GD  is expectedly unable to reduce the unbalancedness introduced at initialization. These confirm again  our theoretical predictions by analyzing the dynamical stability.

\vspace*{-.1em}
\section{Conclusion}
In this paper, we study the stability-induced regularization of SGD and GD by relating the dynamical stability to the sharpness of local landscape. We establish generalization bounds of stable minima for two-layer ReLU networks and diagonal linear networks via linking sharpness to parameter norms. Specifically, these bounds imply that stable minima of SGD provably generalize well and can explain the benefit of  using a large LR. Most importantly, our stability analysis can explain why SGD generalizes better than GD at least for diagonal linear networks. We also corroborate our theoretical findings with fine-grained numerical experiments.

Note that the stability-induced regularization is independent of initialization but crucially depends on the size of LR. This can potentially explain the practical observation that large LR often leads to better generalization in training large-scale models. In contrast, other mechanisms  such as small initialization \cite{chizat2020implicit,woodworth2020kernel} and noise-driven diffusion \cite{blanc2020implicit,li2021happens,damian2021label} cannot explain the benefit of large LR. 
In addition, our analysis also suggests that  gradient clipping has an implicit regularization effect in the way of allowing convergence with a larger LR. We leave the systematic investigation of these issues to future work.

\subsection*{Acknowledgements}
\vspace*{-0.4em}
The work of Lei Wu is supported by a startup fund from Peking University. The work of Weijie J. Su is 
supported in part by NSF Grants CAREER DMS-1847415 and an Alfred Sloan Research Fellowship.
We thank Yaroslav Bulatov for bringing the reference \citep{defossez2015averaged} to our attention and many helpful discussions. We also thank the anonymous reviewers for their valuable suggestions. 

\bibliography{ref}
\bibliographystyle{icml2023}

\newpage
\appendix
\onecolumn

\begin{center}
    \noindent\rule{\textwidth}{4pt} \vspace{-0.2cm}
    
    \LARGE \textbf{Appendix} 
    
    \noindent\rule{\textwidth}{1.2pt}
\end{center}


\section{Technical background}
In this section, we will first introduce some notations and technical background which will be used in the proofs of next sections. 
\subsection{The Hermite expansion.}
Let $\gamma=\cN(0,1)$ and $\{h_i\}_{i=0}^{\infty}$ be the probabilist's Hermite polynomials, which form a set of orthonormal basis of $L^2(\gamma)$ with
\begin{equation}\label{eqn: hermite-function}
    h_0(z)=1,\, h_1(z)=z,\,  h_2(z)=\frac{z^2-1}{\sqrt{2}}, h_3(z)=\frac{z^3-3z}{\sqrt{6}},\cdots.
\end{equation}
Given a $f\in L^2(\gamma)$, denote by  $f(z)=\sum_k \hf_k h_k(z)$ be the Hermite expansion of $f$ where
$$
  \hf_k = \EE_{z\sim\gamma}[f(z)h_k(z)] = \frac{1}{\sqrt{2\pi}}\int_\RR f(z)h_k(z)\dd z
$$
is the ``Fourier coefficient'' of $f$. We will frequently use the following lemma \citep[Proposition 11.31]{o2014analysis}:
\begin{lemma}\label{lemma: app-hermite}
Given $f,g\in L^2(\gamma)$, we have for any $u,v\in \SS^{d-1}$ that
$$
  \EE_{x\sim\cN(0,I_d)}[f(u^Tx)g(v^Tx)] = \sum_{k=0}^\infty \hf_k \hg_k (u^Tv)^k.
$$
\end{lemma}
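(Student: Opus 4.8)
The plan is to reduce the $d$-dimensional integral to a two-dimensional one and then invoke the orthogonality of Hermite polynomials under a correlated Gaussian pair. First I would observe that, since $x\sim\cN(0,I_d)$, the pair $(u^Tx, v^Tx)$ is jointly Gaussian with mean zero and covariance $\left(\begin{smallmatrix}u^Tu & u^Tv\\ u^Tv & v^Tv\end{smallmatrix}\right) = \left(\begin{smallmatrix}1 & \rho\\ \rho & 1\end{smallmatrix}\right)$, where $\rho := u^Tv\in[-1,1]$ because $u,v\in\SS^{d-1}$. Hence $\EE_{x\sim\cN(0,I_d)}[f(u^Tx)g(v^Tx)] = \EE[f(Z_1)g(Z_2)]$ with $(Z_1,Z_2)\sim\cN(0,\Sigma)$, $\Sigma = \left(\begin{smallmatrix}1&\rho\\\rho&1\end{smallmatrix}\right)$, so the claim becomes the purely two-dimensional identity $\EE[f(Z_1)g(Z_2)] = \sum_{k\ge 0}\hf_k\hg_k\rho^k$.

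The core step is the special case $f = h_j$, $g = h_k$, for which I would prove $\EE[h_j(Z_1)h_k(Z_2)] = \delta_{jk}\,\rho^k$ using the generating function $\exp(tz - t^2/2) = \sum_{k\ge 0}\frac{t^k}{\sqrt{k!}}h_k(z)$ for the $L^2(\gamma)$-orthonormal Hermite polynomials. Multiplying the generating functions for $Z_1$ and $Z_2$ and taking expectation, the joint Gaussian moment generating function gives
$$
\EE\!\left[e^{sZ_1 - s^2/2}\,e^{tZ_2 - t^2/2}\right] = e^{\frac12(s^2 + 2\rho st + t^2)}\,e^{-s^2/2 - t^2/2} = e^{\rho st} = \sum_{k\ge 0}\frac{\rho^k s^k t^k}{k!}.
$$
Expanding the left-hand side as $\sum_{j,k}\frac{s^j t^k}{\sqrt{j!\,k!}}\,\EE[h_j(Z_1)h_k(Z_2)]$ and matching the coefficient of $s^j t^k$ yields $\EE[h_j(Z_1)h_k(Z_2)] = \delta_{jk}\rho^k$ (using $\sqrt{k!\,k!}/k! = 1$).

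To pass to arbitrary $f,g\in L^2(\gamma)$, I would expand $f = \sum_j \hf_j h_j$ and $g = \sum_k \hg_k h_k$ in $L^2(\gamma)$ and exchange the double sum with the expectation. The justification is that the bilinear form $B(f,g) := \EE[f(Z_1)g(Z_2)]$ is bounded on $L^2(\gamma)\times L^2(\gamma)$: since the marginals of $Z_1$ and $Z_2$ are each $\gamma$, Cauchy--Schwarz gives $|B(f,g)| \le \sqrt{\EE f(Z_1)^2}\,\sqrt{\EE g(Z_2)^2} = \|f\|_{L^2(\gamma)}\|g\|_{L^2(\gamma)}$. Hence $B$ is continuous and $B(f,g) = \sum_{j,k}\hf_j\hg_k\,B(h_j,h_k) = \sum_k \hf_k\hg_k\rho^k$, the last series converging absolutely (Cauchy--Schwarz on the $\ell^2$ sequences $(\hf_k),(\hg_k)$ together with $|\rho|\le 1$).

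The only genuinely delicate points — and where I would be careful — are the boundary cases $\rho = \pm 1$, where $\Sigma$ is singular and the generating-function manipulation needs a separate look. For $\rho = 1$ one has $Z_1 = Z_2$ a.s. and $B(f,g) = \langle f,g\rangle_{L^2(\gamma)} = \sum_k\hf_k\hg_k$, matching $\sum_k \hf_k\hg_k\,1^k$; for $\rho = -1$ one has $Z_2 = -Z_1$ a.s., and since $h_k(-z) = (-1)^k h_k(z)$, $B(f,g) = \EE[f(Z_1)g(-Z_1)] = \sum_k (-1)^k\hf_k\hg_k$, again matching. Everything else is routine — the main work is only setting up the coefficient comparison cleanly and recording the $L^2(\gamma)$-continuity that legitimizes the term-by-term expansion. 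This is exactly Proposition~11.31 of \citet{o2014analysis}, so in the paper one may simply cite it, but the above is the self-contained argument.
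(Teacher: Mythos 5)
Your argument is correct. Note, however, that the paper does not prove this lemma at all: it is quoted directly as Proposition~11.31 of \citet{o2014analysis}, so there is no in-paper proof to compare against. Your self-contained route --- reducing to the bivariate Gaussian pair $(Z_1,Z_2)$ with correlation $\rho=u^Tv$, establishing $\EE[h_j(Z_1)h_k(Z_2)]=\delta_{jk}\rho^k$ via the Hermite generating function $e^{tz-t^2/2}=\sum_k t^k h_k(z)/\sqrt{k!}$ and the joint moment generating function (essentially Mehler's formula), and then extending to general $f,g\in L^2(\gamma)$ by boundedness of the bilinear form $B(f,g)=\EE[f(Z_1)g(Z_2)]$ --- is the standard proof of that cited result, and your handling of the degenerate endpoints $\rho=\pm1$ is a nice touch that many write-ups omit. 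The only step stated a bit casually is the term-by-term expectation when matching coefficients of $s^jt^k$: the pointwise generating-function identity must be integrated against the Gaussian, which is justified, e.g., by noting that the Hermite expansion of $z\mapsto e^{sz-s^2/2}$ converges in $L^2(\gamma)$ (its coefficients are $s^k/\sqrt{k!}$) and then applying the same $L^2$-continuity of $B$ you invoke later, or by absolute convergence of the double series under the Gaussian; spelling this out would make the argument fully rigorous.
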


\subsection{Rademacher complexity and generalization bounds}
\label{sec: empirical-process}

Here we only state properties of Rademacher complexity that will be used in this paper. For the missing proofs and more details, we refer to \citet[Section 26]{shalev2014understanding}.
\begin{definition}
Given a function class $\cF$, the Rademacher complexity of $\cF$ with respect to $x_1,\dots,x_n$ is defined as 
\[
    \erad(\cF) = \EE_{\xi_1,\dots,\xi_n} [\sup_{f\in\cF} \fn\sumin f(x_i)\xi_i],
\]
where $\xi_1,\dots,\xi_n$ are \iid samples drawn from the Rademacher distribution: $\PP(\xi=1)=\PP(\xi=-1)=\half$.
\end{definition}

\begin{lemma}[Contraction property]\label{lemma: contraction}
Let $\varphi:\RR\mapsto\RR$  be $\beta$-Lispchitz continuous and $\varphi\circ\cF=\{\varphi\circ f: f\in\cF\}$. Then, 
$
    \erad(\varphi\circ\cF)\leq \beta \, \erad(\cF).
$
\end{lemma}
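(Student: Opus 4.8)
The plan is to prove this by the standard ``peeling'' argument (Talagrand's contraction principle), which removes the Lipschitz map $\varphi$ from one coordinate at a time. First I would reduce to the case $\beta=1$: the map $\varphi/\beta$ is $1$-Lipschitz, and since $\varphi\circ\cF$ scales as $(1/\beta)(\varphi\circ\cF)=(\varphi/\beta)\circ\cF$, one checks directly from the definition that $\erad\big((\varphi/\beta)\circ\cF\big)=\tfrac1\beta\,\erad(\varphi\circ\cF)$. Hence it suffices to show $\erad(\varphi\circ\cF)\le\erad(\cF)$ whenever $\varphi$ is $1$-Lipschitz.

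Next I would set up the hybrid quantities interpolating between the two sides. For $k=0,1,\dots,n$ define
\[
A_k = \EE_{\xi_1,\dots,\xi_n}\Big[\sup_{f\in\cF}\tfrac1n\Big(\textstyle\sum_{i\le k}\xi_i f(x_i) + \sum_{i>k}\xi_i\,\varphi(f(x_i))\Big)\Big],
\]
so that $A_0=\erad(\varphi\circ\cF)$ and $A_n=\erad(\cF)$. The claim follows once I show $A_{k-1}\le A_k$ for every $k$ and chain the inequalities. To compare $A_{k-1}$ and $A_k$, I would condition on all $\xi_j$ with $j\neq k$ and write the common ``remainder'' $u(f)=\tfrac1n\big(\sum_{i<k}\xi_i f(x_i)+\sum_{i>k}\xi_i\varphi(f(x_i))\big)$, which is a fixed function of $f$. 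Averaging over $\xi_k\in\{\pm1\}$ and merging the two suprema arising from $\xi_k=+1$ and $\xi_k=-1$, the inner expectation in $A_{k-1}$ becomes $\tfrac12\sup_{f,g}\big(\tfrac1n(\varphi(f(x_k))-\varphi(g(x_k)))+u(f)+u(g)\big)$, and similarly for $A_k$ with $\varphi$ removed from the $x_k$ term.

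The key step is the pointwise bound: for every pair $(f,g)$, the Lipschitz estimate gives $\varphi(f(x_k))-\varphi(g(x_k))\le|f(x_k)-g(x_k)|$, and since $u(f)+u(g)$ is symmetric under swapping $f\leftrightarrow g$, I would argue by cases — if $f(x_k)\ge g(x_k)$ the term is bounded by $\tfrac1n(f(x_k)-g(x_k))+u(f)+u(g)$, and if $f(x_k)<g(x_k)$ then relabeling $(f,g)\mapsto(g,f)$ reduces it to the previous case. In either case the term is at most $\sup_{f,g}\big(\tfrac1n(f(x_k)-g(x_k))+u(f)+u(g)\big)$, which, after recombining into an expectation over $\xi_k$, is exactly the inner expectation appearing in $A_k$. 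Taking expectation over the remaining $\xi_j$ yields $A_{k-1}\le A_k$, and unwinding the reduction gives $\erad(\varphi\circ\cF)\le\beta\,\erad(\cF)$.

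The only genuine subtlety — and the step I would write most carefully — is the merging of the two suprema and the case analysis: the bound must be made uniform over all pairs $(f,g)$ before the supremum is taken, and for full rigor one should either assume the suprema are attained or work with $\varepsilon$-near-maximizers, since $\cF$ need not be finite. Everything else is routine bookkeeping with the definition of Rademacher complexity.
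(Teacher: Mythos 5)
The paper does not prove this lemma itself; it defers to \citet[Section 26]{shalev2014understanding}, where the contraction property appears as Lemma 26.9 and is proved by exactly the peeling/hybrid argument you give (reduce to $\beta=1$, interpolate with $A_k$, merge the two suprema coming from $\xi_k=\pm1$, apply the Lipschitz bound pointwise with a symmetry/relabeling step). Your proposal is correct and coincides with the standard proof the paper is implicitly invoking, including the correct flagging of the near-maximizer subtlety when $\cF$ is infinite.
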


\begin{lemma}\label{lemma: rad-linear-class}
Let $\cF=\{u^Tx: u\in\SS^{d-1}\}$ be the linear class.  Then 
$
    \erad(\cF)\leq \sqrt{\frac{\sum_{i=1}^n \|x_i\|^2}{n^2}}.
$
\end{lemma}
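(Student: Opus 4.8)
The plan is to evaluate the supremum inside the Rademacher complexity explicitly, then apply Jensen's inequality and a variance computation. First I would write out the definition: since each $f\in\cF$ has the form $f(x)=u^Tx$ for some $u\in\SS^{d-1}$,
\[
\erad(\cF)=\EE_{\xi_1,\dots,\xi_n}\Bigl[\sup_{u\in\SS^{d-1}}\frac1n\sum_{i=1}^n (u^Tx_i)\xi_i\Bigr]
=\frac1n\,\EE_{\xi}\Bigl[\sup_{u\in\SS^{d-1}} u^T\Bigl(\sum_{i=1}^n \xi_i x_i\Bigr)\Bigr].
\]
For any fixed vector $v\in\RR^d$ we have $\sup_{u\in\SS^{d-1}} u^Tv=\|v\|$, attained at $u=v/\|v\|$ (or arbitrarily if $v=0$). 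Applying this with $v=\sum_i\xi_i x_i$ gives $\erad(\cF)=\frac1n\EE_\xi\bigl[\|\sum_{i=1}^n\xi_i x_i\|\bigr]$.

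Next I would bound this expectation of a norm by the square root of the expectation of its square, using Jensen's inequality together with the concavity of $t\mapsto\sqrt{t}$:
\[
\EE_\xi\Bigl[\Bigl\|\sum_{i=1}^n\xi_i x_i\Bigr\|\Bigr]\le\sqrt{\EE_\xi\Bigl[\Bigl\|\sum_{i=1}^n\xi_i x_i\Bigr\|^2\Bigr]}.
\]
Then I would expand the squared norm and use the independence and zero-mean, unit-variance properties of the Rademacher variables, namely $\EE[\xi_i\xi_j]=\mathbf 1\{i=j\}$:
\[
\EE_\xi\Bigl[\Bigl\|\sum_{i=1}^n\xi_i x_i\Bigr\|^2\Bigr]=\sum_{i,j=1}^n \EE[\xi_i\xi_j]\,x_i^Tx_j=\sum_{i=1}^n\|x_i\|^2.
\]
Combining the three displays yields $\erad(\cF)\le\frac1n\sqrt{\sum_{i=1}^n\|x_i\|^2}=\sqrt{\sum_{i=1}^n\|x_i\|^2/n^2}$, which is the claim.

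This argument is entirely elementary, so there is no serious obstacle; the only point that deserves a careful word is the interchange of supremum and the identification $\sup_{u\in\SS^{d-1}}u^Tv=\|v\|$, which is just Cauchy--Schwarz with the equality case, and the measurability of $\omega\mapsto\sup_u u^Tv(\omega)=\|v(\omega)\|$, which is immediate since it equals a norm of a measurable vector. I would present the proof in the three short steps above.
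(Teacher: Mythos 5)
Your proof is correct: the identification $\sup_{u\in\SS^{d-1}}u^Tv=\|v\|$, Jensen's inequality, and the orthogonality $\EE[\xi_i\xi_j]=\mathbf 1\{i=j\}$ give exactly the stated bound. The paper itself omits the proof and defers to \citet[Section 26]{shalev2014understanding}, whose argument is precisely the one you give, so there is nothing to add.
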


\begin{theorem}\label{thm: gen-err-rademacher-complexity}
Consider a function class $\cF$  with $\sup_{z\in\cX, f\in\cF}|f(z)|\leq B$. For any $\delta\in (0,1)$,  \wp at least $1-\delta$ over the choice of $S=(z_1,z_2,\dots,z_n)$, we have,
\[
    |\frac{1}{n}\sum_{i=1}^n f(z_i) - \EE_{z}[f(z)]| \lesssim \erad(\cF) + B\sqrt{\frac{\ln(2/\delta)}{n}}.
\]
\end{theorem}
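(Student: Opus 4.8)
This is the classical uniform deviation bound, so the plan is the standard two-ingredient route: a bounded-differences (McDiarmid) concentration step and a symmetrization step. Write
\[
\Phi(S) \;=\; \sup_{f\in\cF}\Bigl|\frac1n\sum_{i=1}^n f(z_i) - \EE_z[f(z)]\Bigr|,
\]
so the claim is $\Phi(S)\lesssim \erad(\cF) + B\sqrt{\ln(2/\delta)/n}$ with probability $1-\delta$. First I would concentrate $\Phi$ around its mean: since $|f|\le B$ uniformly over $\cF$ and $\cX$, replacing a single coordinate $z_j$ of $S$ by any $z_j'\in\cX$ perturbs $\frac1n\sum_i f(z_i)$ by at most $2B/n$ for every $f$, hence perturbs $\Phi$ by at most $2B/n$. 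McDiarmid's inequality then gives, with probability at least $1-\delta/2$,
\[
\Phi(S)\;\le\;\EE_S[\Phi(S)] + B\sqrt{\frac{2\ln(2/\delta)}{n}}.
\]

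Next I would bound $\EE_S[\Phi(S)]$ by symmetrization. Introducing an independent ghost sample $S'=(z_1',\dots,z_n')$ and writing $\EE_z[f(z)]=\EE_{S'}[\frac1n\sum_i f(z_i')]$, Jensen's inequality pulls $\EE_{S'}$ outside the supremum, yielding $\EE_S[\Phi(S)]\le \EE_{S,S'}\sup_{f}\bigl|\frac1n\sum_i (f(z_i')-f(z_i))\bigr|$. Because $(z_i,z_i')$ is exchangeable with $(z_i',z_i)$, one may insert i.i.d.\ Rademacher signs $\xi_i$ in front of $f(z_i')-f(z_i)$ without changing the expectation; splitting the absolute value and the supremum and using that $-\cF$ has the same empirical Rademacher complexity as $\cF$ (by symmetry of the Rademacher distribution) gives $\EE_S[\Phi(S)]\le 2\,\EE_S[\erad(\cF)]$.

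Finally, since the theorem is stated with the \emph{empirical} Rademacher complexity $\erad(\cF)$ (a function of $z_1,\dots,z_n$), I would replace $\EE_S[\erad(\cF)]$ by $\erad(\cF)$ at the cost of another $O(B\sqrt{\ln(1/\delta)/n})$: the map $S\mapsto\erad(\cF)$ also has bounded differences $2B/n$, because changing $z_j$ perturbs $\frac1n\sum_i\xi_i f(z_i)$ by at most $2B/n$ for every realization of $\xi$ and every $f$, hence perturbs the supremum and its $\xi$-average by at most $2B/n$. A second application of McDiarmid gives, with probability at least $1-\delta/2$, $\EE_S[\erad(\cF)]\le \erad(\cF)+B\sqrt{2\ln(2/\delta)/n}$. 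A union bound over the two McDiarmid events and the chain $\Phi(S)\le \EE_S[\Phi(S)]\le 2\EE_S[\erad(\cF)]\le 2\erad(\cF)+O(B\sqrt{\ln(2/\delta)/n})$ then yields the theorem.

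There is no genuinely hard step here; it is a textbook argument (cf.\ \citet[Section 26]{shalev2014understanding}), and the $\lesssim$ notation absorbs all numerical constants. The only two points needing a little care are: (i) the exchangeability/Rademacher step when the supremum is of an absolute value rather than a signed quantity, handled by $\sup_f|\cdot|\le \sup_f(\cdot)+\sup_f(-\cdot)$ together with $\erad(-\cF)=\erad(\cF)$; and (ii) the distinction between the empirical Rademacher complexity appearing in the statement and its expectation, which is precisely why the second McDiarmid application is required rather than a single one.
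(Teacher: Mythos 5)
Your proof is the standard symmetrization plus two-sided McDiarmid argument, which is exactly the textbook derivation in \citet[Section 26]{shalev2014understanding} that the paper cites in lieu of giving its own proof; the reasoning is correct, including the two subtle points you flag (handling the absolute value via $\sup_f|\cdot|\le\sup_f(\cdot)+\sup_f(-\cdot)$ with $\erad(-\cF)=\erad(\cF)$, and the second McDiarmid step to pass from $\EE_S[\erad(\cF)]$ to the empirical $\erad(\cF)$ appearing in the statement). No gap; the $\lesssim$ absorbs the factor of~$2$ in front of $\erad(\cF)$ and the $\sqrt{2}$'s from McDiarmid.
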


\begin{lemma}\label{eqn: Rademacher-multiply-space}
Let $\cF$ and $\cG$ be two function classes. Suppose that $\sup_{f\in \cF}\|f\|_{\infty}\leq A$ and $\sup_{g\in \cG}\|g\|_\infty\leq B$. Define $\cF*\cG=\{f(x)g(x): \cX\mapsto\RR \,:\, f\in \cF, g\in \cG\}$. Then, 
$
\erad(\cF*\cG)\leq (A+B)(\erad(\cF)+\erad(\cG)).
$
\end{lemma}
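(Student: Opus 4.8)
The final statement to prove is \cref{eqn: Rademacher-multiply-space}: for function classes $\cF,\cG$ with $\sup_{f\in\cF}\|f\|_\infty\leq A$ and $\sup_{g\in\cG}\|g\|_\infty\leq B$, the product class $\cF*\cG$ satisfies $\erad(\cF*\cG)\leq (A+B)(\erad(\cF)+\erad(\cG))$.

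The plan is to use the polarization-type identity $fg = \tfrac14\big((f+g)^2-(f-g)^2\big)$ to reduce the product to squares, then handle squares via the contraction lemma (\cref{lemma: contraction}). First I would write, for any $f\in\cF$, $g\in\cG$, the pointwise identity above, so that $\fn\sum_i f(x_i)g(x_i)\xi_i = \tfrac14\fn\sum_i (f+g)^2(x_i)\xi_i - \tfrac14\fn\sum_i (f-g)^2(x_i)\xi_i$. Taking the supremum over $f\in\cF,g\in\cG$ and then the expectation over the Rademacher variables, subadditivity of $\sup$ and linearity of expectation give
\[
\erad(\cF*\cG)\leq \tfrac14 \erad\big((\cF+\cG)^2\big) + \tfrac14 \erad\big(\{-h^2: h\in\cF-\cG\}\big),
\]
where $\cF\pm\cG=\{f\pm g: f\in\cF,g\in\cG\}$ and I use that negating a function class does not change its Rademacher complexity (by symmetry of the Rademacher distribution). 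Next I would apply \cref{lemma: contraction}: on the range of functions in $\cF\pm\cG$, which is contained in $[-(A+B),A+B]$, the map $t\mapsto t^2$ is $2(A+B)$-Lipschitz, so $\erad((\cF\pm\cG)^2)\leq 2(A+B)\erad(\cF\pm\cG)$. Finally, $\erad(\cF\pm\cG)\leq \erad(\cF)+\erad(\cG)$ by subadditivity of the supremum (splitting $\sup_{f,g}$ into $\sup_f + \sup_g$ inside the expectation, again using sign-symmetry to absorb the minus sign in the $\cF-\cG$ case). Combining, $\erad(\cF*\cG)\leq \tfrac14\cdot 2(A+B)\cdot 2(\erad(\cF)+\erad(\cG)) = (A+B)(\erad(\cF)+\erad(\cG))$.

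The only mildly delicate point is bookkeeping the Lipschitz constant of $t\mapsto t^2$ restricted to the relevant interval and making sure \cref{lemma: contraction} is applied to a class that is uniformly bounded by $A+B$ (which holds since $\|f\pm g\|_\infty\leq A+B$); everything else is routine subadditivity and symmetry. There is no substantive obstacle here — the lemma is a standard consequence of polarization plus contraction — so the work is simply to lay out these three reductions carefully and track constants.
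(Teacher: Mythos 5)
Your proof is correct and follows essentially the same route as the paper: the polarization identity $fg=\tfrac14\big((f+g)^2-(f-g)^2\big)$, subadditivity of the supremum, the sign-flip symmetry of the Rademacher distribution to handle the minus term, and the contraction lemma applied to $t\mapsto t^2$ on the range $[-(A+B),A+B]$. The only cosmetic difference is in where the factor $\tfrac14$ is carried: the paper applies contraction directly to $t\mapsto t^2/4$ (which is $(A+B)/2$-Lipschitz on the relevant interval), whereas you apply it to $t\mapsto t^2$ ($2(A+B)$-Lipschitz) and keep the $\tfrac14$ outside; both give the same constant.
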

\begin{proof}
By the definition of Rademacher complexity, 
\begin{align*}
n \erad(\cF*\cG) &= \EE_{\xi}[\sup_{f\in \cF, g\in \cG} \sumin f(x_i)g(x_i)\xi_i]\\
&=\EE_{\xi}[\sup_{f\in \cF, g\in \cG} \sumin \frac{(f(x_i)+g(x_i))^2}{4}\xi_i - \sumin \frac{(f(x_i)-g(x_i))^2}{4}\xi_i]\\
&\leq \EE_{\xi}[\sup_{f\in \cF, g\in \cG} \sumin \frac{(f(x_i)+g(x_i))^2}{4}\xi_i +\EE_{\xi}[\sup_{f\in \cF, g\in \cG} \sumin \frac{(f(x_i)-g(x_i))^2}{4}\xi_i]\\
&\stackrel{(i)}{\leq} \frac{A+B}{2}\left( \EE_{\xi}[\sup_{f\in \cF, g\in \cG} \sumin (f(x_i)+g(x_i))\xi_i] +\EE_{\xi}[\sup_{f\in \cF, g\in \cG} \sumin (f(x_i)-g(x_i))\xi_i]\right)\\
&\leq (A+B)n(\erad(\cF)+\erad(\cG)),
\end{align*}
where $(i)$ follows from the Lemma \ref{lemma: contraction} and the fact that $t^2/4$ is $(A+B)/2$ Lipschitz continuous since $|f|\leq A, |g|\leq B$.
\end{proof}

\paragraph*{Generalization bounds of learning with a smooth loss.}
Let $\phi: \cY\times \cY \mapsto[0,\infty)$ be a loss function. Define the empirical and population risk as follows
\[
\erisk(h) = \hEE[\phi(h(x),y)],\quad \risk(h) = \EE[\phi(h(x),y)],
\] 
where $\hEE$ denotes the expectation with respect to the empirical measure. 
Let $\cH$ be the hypothesis space and $\hh = \argmin_{h\in\cH} \hL(h)$. We would like to bound the population risk of the $\hh$ by using the following decomposition:
\[
    \risk(\hh) = \erisk(\hh) + \underbrace{\risk(\hh) - \erisk(\hh)}_{gen-gap}.
\]
Theorem \ref{thm: gen-err-rademacher-complexity} shows that the second term (gen-gap) can be controlled by the Rademacher complexity of $\cH$. By assuming $\phi$ is Lipschitz continuous and applying Lemma \ref{lemma: contraction}, an (informal) bound goes like 
\[
\risk(\hh) - \erisk(\hh)\leq \sup_{h\in\cH}|\risk(h)-\erisk(h)|\leq Lip(\phi)\erad(\cH).
\]
This usually provides us a $O(1/\sqrt{n})$ bound, which is  tight for Lipschitz loss such as hinge loss. However, for square loss, this bound is often loose as explained as follows. For the minimizer $\hh$, it is expected that $\risk(\hh)\leq r$ for small $r$. Therefore, one only needs to consider a constraint hypothesis class:
\[
    \cH_r=\{ h\in \cH \,|\, \risk(h)\leq r \}.
\]
For hypothesis in this restricted class, the Lipschitz constant of $\phi$ is much smaller for smooth loss. For instance, $t^2/2$ is only $r$-Lipschitz for $t\in [-r,r]$. This argument can be formalized by using the concept of local Rademacher complexity \citep{bartlett2005local}.  Specifically, we shall use the following theorem in our proof, which is a restatement of \citet[Theorem 1]{srebro2010smoothness}
\begin{theorem}\label{thm: fast-rates-smoothloss}
Let 
\begin{equation}\label{eqn: worst-case-rad}
\mathfrak{R}_n(\cH)=\sup_{x_1,\dots,x_n}\erad(\cH)
\end{equation}
be the worst-case Rademacher complexity. 
Assume that $|\phi''|\leq A$ and $0\leq \phi\leq B$.. Then, \wp at least $1-\delta$ over the sampling of training set, we have for any $h\in \cH$ that 
\[
\risk(h)\leq \erisk(h) + C \left(\sqrt{\erisk(h)}\left(\sqrt{A}\log^{3/2}(n) \mathfrak{R}_n(\cH)+\sqrt{\frac{B\log(1/\delta)}{n}}\right)+ A \log^{3}(n) \mathfrak{R}_n(\cH)^2 + \frac{B\log(1/\delta)}{n}\right).
\]
In particular, for $\hh\in \argmin_h \erisk(h)$, 
\begin{equation}\label{eqn: risk-bound-minimizer}
\risk(\hh)\lesssim  A \log^{3}(n) \mathfrak{R}_n(\cH)^2 + \frac{B\log(1/\delta)}{n}.
\end{equation}
\end{theorem}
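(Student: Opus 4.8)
The plan is to deduce the statement from the ``optimistic rate'' bound of \citet{srebro2010smoothness} (Theorem~1 there), so the real content is (i) checking that our hypotheses are a special case of theirs and (ii) recording how the ``in particular'' clause follows under over-parameterization. Their theorem assumes the loss is nonnegative, bounded by some $b$, and $H$-smooth, meaning $\sup_t|\phi''(t)|\le H$; taking $b=B$ and $H=A$ matches our hypotheses, and the ``worst-case'' Rademacher complexity appearing there is exactly $\mathfrak{R}_n(\cH)$ from \eqref{eqn: worst-case-rad}. The engine of the whole argument is the \emph{self-bounding} property of smooth nonnegative functions: if $\phi\ge 0$ and $|\phi''|\le A$ then $\phi'(t)^2\le 2A\,\phi(t)$ for all $t$. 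This follows because $s\mapsto \phi(t)+s\,\phi'(t)+\tfrac{A}{2}s^2\ge\phi(t+s)\ge 0$ is a nonnegative quadratic with positive leading coefficient, hence has nonpositive discriminant, i.e. $\phi'(t)^2-2A\,\phi(t)\le 0$. This is what upgrades the crude global bound $|\phi'|\le\sqrt{2AB}$ to a \emph{local} one: wherever the loss is of size $\le r$, the loss behaves like a $\sqrt{2Ar}$-Lipschitz function.

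For the main inequality, one works with the loss class $\phi\circ\cH=\{(x,y)\mapsto\phi(h(x),y):h\in\cH\}$ and, for a radius $r>0$, its population sub-level set $\cH_r=\{h\in\cH:\risk(h)\le r\}$. Using the self-bounding property, one truncates $\phi$ at level of order $r$ and applies the contraction lemma (Lemma~\ref{lemma: contraction}) with the resulting $\sqrt{Ar}$ Lipschitz constant to get that the (worst-case) Rademacher complexity of $\phi\circ\cH_r$ is $\lesssim\sqrt{Ar}\,\mathfrak{R}_n(\cH)$ up to polylogarithmic factors. Plugging this local complexity estimate into a peeling / sub-root fixed-point argument in the spirit of \citet{bartlett2005local}, and controlling the uniform deviation $\sup_{h\in\cH}(\risk(h)-\erisk(h))$ by Bousquet's form of Talagrand's inequality with the variance proxy $\mathrm{Var}[\phi(h(x),y)]\le\EE[\phi^2]\le B\,\risk(h)$ (which is just pointwise boundedness by $B$), gives an inequality of the shape $\risk(h)-\erisk(h)\lesssim\sqrt{\risk(h)}\,\omega_n+\omega_n^2$ with $\omega_n=\sqrt{A}\log^{3/2}(n)\,\mathfrak{R}_n(\cH)+\sqrt{B\log(1/\delta)/n}$. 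Solving this quadratic inequality in $\sqrt{\risk(h)}$ and re-expressing the bound in terms of $\erisk(h)$ (using $\risk(h)\lesssim\erisk(h)+\text{lower order}$ in the cross term) yields exactly the displayed bound, with $A\log^3(n)\,\mathfrak{R}_n(\cH)^2$ and $B\log(1/\delta)/n$ as the ``fast-rate'' terms; the extra $\log n$ powers arise from the geometric union bound over the radii $r$ in the peeling step together with the chaining behind the worst-case complexity.

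The ``in particular'' clause is then immediate: under the over-parameterization assumption, any $\hh\in\argmin_h\erisk(h)$ has $\erisk(\hh)=0$, so the cross term $\sqrt{\erisk(\hh)}\,\omega_n$ vanishes and the bound collapses to $\risk(\hh)\lesssim A\log^3(n)\,\mathfrak{R}_n(\cH)^2+B\log(1/\delta)/n$, which is \eqref{eqn: risk-bound-minimizer}. (Without assuming zero training error, one would first use the symmetric deviation bound $\erisk(\hh)\le\erisk(h)\le\risk(h)+\text{fast rate}$ at the in-class population-optimal $h$ and obtain the same conclusion whenever the approximation error vanishes.)

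I expect the main obstacle to be the bookkeeping in the localization step rather than the overall scheme: the contraction lemma needs a genuine global Lipschitz constant, whereas self-bounding supplies only a Lipschitz constant proportional to $\sqrt{A\cdot(\text{local loss})}$, so one must carefully truncate and rescale the loss class at the current radius before contracting and then pay for the truncation error. This is precisely where the polylogarithmic factors and the delicate split between the $\sqrt{\erisk(h)}\,\omega_n$ term and the $\omega_n^2$ term are produced, and matching the exact constants and logarithmic powers of \citet{srebro2010smoothness} is the part that requires care.
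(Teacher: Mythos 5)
Your proposal is correct and follows essentially the same route as the paper: the paper offers no independent proof, presenting the theorem explicitly as a restatement of \citet[Theorem 1]{srebro2010smoothness}, which is exactly the reduction (hypothesis matching plus citation) that you carry out, with your sketch of the self-bounding/localization machinery being extra detail internal to that cited result. Your handling of the ``in particular'' clause is also consistent with the paper's usage, since the bound \eqref{eqn: risk-bound-minimizer} is only invoked at interpolating minimizers where the over-parameterization assumption gives $\erisk(\hh)=0$, so the cross term vanishes as you note.
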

In this paper, we will mainly use  \eqref{eqn: risk-bound-minimizer} to bound  generalization error since our focus is the minimizer $\hh$.

\subsection{Concentration inequalities}
\label{app: concentration}

\begin{definition}\label{definition: orlic-norm}
Let $\psi$ be a non-decreasing, convex function with $\psi(0)=0$. 
The Orlicz norm of a random variable $X$ is defined by 
$
\|X\|_{\psi}:=\inf\{t>0: \EE[\psi(|X|/t)]\leq 1\}.
$
If $X\in\RR^d$ is a vector, then $\|X\|_{\psi}:=\sup_{u\in \SS^{d-1}}\|u^TX\|_{\psi}$.
\end{definition}
For our purpose, Orlicz norms of interest are the ones given by $\psi_p(x)=e^{x^p}-1$ for $p\geq 1$. In particular, the cases of $p=1$ and $p=2$ correspond to the sub-exponential and sub-Gaussian norms, respectively. A random variable $X$ is said to be sub-Gaussian (resp. sub-exponential) if $\|X\|_{\psi_2}<\infty$ (resp. $\|X\|_{\psi_1}<1$).

A random variable with finite $\psi_p$-norm has the following  control of the tail behavior
\[
    \PP\{|X|\geq t\}\leq C_1 e^{-C_2 \frac{t^p}{\|X\|^p_{\psi_p}}},
\]
where $C_1,C_2$ are constant that only depend on  $p$.

\begin{lemma}\label{lemma: pro-orlic-norm}
\begin{itemize}
    \item If $|X|\lesssim 1$ almost surely, then $\|X\|_{\psi_i}\lesssim 1$ for $i=1,2$.
    \item If $X\sim \cN(0,\sigma^2)$, $X$ is sub-Gaussian with $\|X\|_{\psi_2}\leq C\sigma$.
\item Let $X,Y$ be sub-Gaussian random variables. Then, $XY$ is sub-exponential and
$
    \|XY\|_{\psi_1}\leq \|X\|_{\psi_2}\|Y\|_{\psi_2}.
$
\item If $|X|\leq |Y|$ a.s., then $\|X\|_{\psi}\leq \|Y\|_{\psi}$ for any $\psi$ that satisfies the condition in Definition \ref{definition: orlic-norm}.
\item \textbf{Center inequality.} For a random variable $X$, we have 
\begin{equation}\label{eqn: centering-inequality}
\|X-\EE[X]\|_{\psi_p}\leq C\|X\|_{\psi_p} 
\end{equation}
for a constant $C>0$ that may depend on $p$.
\end{itemize}
\end{lemma}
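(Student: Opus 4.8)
The plan is to verify the five bullet points one at a time, each reducing to an elementary computation with the Luxemburg functional $\|X\|_{\psi}=\inf\{t>0:\EE[\psi(|X|/t)]\leq 1\}$ from Definition~\ref{definition: orlic-norm}, specialized to $\psi_p(x)=e^{x^p}-1$. For the first item, if $|X|\leq M$ almost surely then monotonicity of $\psi_i$ gives $\EE[\psi_i(|X|/t)]\leq \psi_i(M/t)$, which is $\leq 1$ as soon as $(M/t)^i\leq \ln 2$; hence $\|X\|_{\psi_i}\leq M/(\ln 2)^{1/i}\lesssim 1$ for $i=1,2$. For the Gaussian item, the scaling identity $\|X\|_{\psi_2}=\sigma\,\|X/\sigma\|_{\psi_2}$ reduces matters to $X\sim\cN(0,1)$, and the standard formula $\EE[e^{aX^2}]=(1-2a)^{-1/2}$ (valid for $a<1/2$) shows $\EE[e^{X^2/t^2}]\leq 2$ once $t^2\geq 8/3$, so $\|X\|_{\psi_2}\leq\sqrt{8/3}\,\sigma$.

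For the product bound, homogeneity of the Orlicz norm lets me normalize $\|X\|_{\psi_2}=\|Y\|_{\psi_2}=1$, so that $\EE[e^{X^2}]\leq 2$ and $\EE[e^{Y^2}]\leq 2$; then Young's inequality $|XY|\leq (X^2+Y^2)/2$ followed by Cauchy--Schwarz yields $\EE[e^{|XY|}]\leq \EE[e^{X^2/2}e^{Y^2/2}]\leq (\EE[e^{X^2}]\EE[e^{Y^2}])^{1/2}\leq 2$, i.e. $\|XY\|_{\psi_1}\leq 1=\|X\|_{\psi_2}\|Y\|_{\psi_2}$. The monotonicity item $|X|\leq|Y|\Rightarrow\|X\|_{\psi}\leq\|Y\|_{\psi}$ is immediate, since $\psi$ non-decreasing implies $\EE[\psi(|X|/t)]\leq\EE[\psi(|Y|/t)]$, so every $t$ admissible for $Y$ is admissible for $X$ and the defining infimum can only decrease. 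For the centering inequality I would combine the triangle inequality for $\|\cdot\|_{\psi_p}$ with the identity $\|c\|_{\psi_p}=|c|/(\ln 2)^{1/p}$ for constants $c$ and the first-moment bound $|\EE[X]|\leq\EE[|X|]\leq (\ln 2)^{1/p}\|X\|_{\psi_p}$, the latter obtained by applying Jensen's inequality to the convex function $\psi_p$ at $\EE[|X|]/\|X\|_{\psi_p}$; these give $\|X-\EE[X]\|_{\psi_p}\leq\|X\|_{\psi_p}+\|\EE[X]\|_{\psi_p}\leq 2\|X\|_{\psi_p}$.

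The only point that needs care, and the one I would flag as the main (mild) obstacle, is justifying that the Luxemburg functional attached to Definition~\ref{definition: orlic-norm} genuinely satisfies the triangle inequality, which the centering step invokes; this is classical for convex $\psi$ with $\psi(0)=0$, and I would either point to a standard reference on Orlicz spaces or insert the short convexity argument inline. All the remaining steps are routine moment computations, so the overall proof is short.
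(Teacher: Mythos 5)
Your proposal is correct and complete. Note that the paper itself does not prove this lemma at all: it simply states the properties and defers to \citet{vershynin2018high} (see the remark at the end of Appendix~\ref{app: concentration}), so you have supplied the argument the paper omits. Your verifications coincide with the standard ones in that reference: the bounded and Gaussian cases are direct moment-generating-function computations; the product bound is essentially Vershynin's argument, with Cauchy--Schwarz applied to $\EE[e^{X^2/2}e^{Y^2/2}]$ in place of Young's inequality $e^{a+b}\leq \tfrac12 e^{2a}+\tfrac12 e^{2b}$ (both give the constant $1$, and neither needs independence of $X$ and $Y$); monotonicity is immediate; and centering with $C=2$ follows from the triangle inequality together with $\|\EE[X]\|_{\psi_p}=|\EE[X]|/(\ln 2)^{1/p}\leq \|X\|_{\psi_p}$, exactly as you write. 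The two small points you should indeed make explicit are the ones you flag or implicitly use: the triangle inequality for the Luxemburg functional (the one-line convexity argument with weights $t_1/(t_1+t_2)$ and $t_2/(t_1+t_2)$ suffices), and the fact that the infimum defining $\|X\|_{\psi_2}$ is attained when finite (monotone convergence), which justifies writing $\EE[e^{X^2}]\leq 2$ after normalizing $\|X\|_{\psi_2}=1$ in the product step. Neither is a gap, just a sentence each.
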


\begin{theorem}[Bernstein's inequality]\label{thm: bernstein}
Let $X_1,\dots,X_n$ be independent sub-exponential random variables. Suppose $K=\max_{i}\|X_i\|_{\psi_1}<\infty$. Then, for any $t>0$, 
\[
\PP\Big\{\big|\frac{1}{n}\sum_{i=1}^n X_i - \EE[X]\big|\geq t\Big\}\leq 2 \exp\left(- C n \min\left(\frac{t^2}{K^2}, \frac{t}{K}\right)\right).
\]
\end{theorem}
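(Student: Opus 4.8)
The plan is to run the classical Chernoff-bound argument, with the sub-exponential hypothesis entering through a moment generating function (MGF) estimate. First I would reduce to the centered case: replacing $X_i$ by $Y_i = X_i - \EE[X_i]$, the centering inequality \eqref{eqn: centering-inequality} gives $\|Y_i\|_{\psi_1}\leq CK$, so after absorbing the constant it suffices to prove the bound for independent, mean-zero $Y_i$ with $\|Y_i\|_{\psi_1}\leq K$.

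The key lemma I would establish is an MGF bound: if $\EE[Y]=0$ and $\|Y\|_{\psi_1}\leq K$, then $\EE[e^{\lambda Y}]\leq e^{C\lambda^2 K^2}$ whenever $|\lambda|\leq c/K$. To prove it, I would use that $\|Y\|_{\psi_1}\leq K$ implies the tail bound $\PP(|Y|\geq t)\leq 2 e^{-t/K}$ (from Definition \ref{definition: orlic-norm} and the tail control stated just after it), and integrate it to get the moment bounds $\EE[|Y|^p]\leq 2 K^p\, p!$ for all integers $p\geq 1$. Then I would expand $\EE[e^{\lambda Y}] = 1 + \sum_{p\geq 2}\lambda^p \EE[Y^p]/p!$ (the $p=1$ term vanishes since $\EE[Y]=0$; absolute convergence justifies the interchange for $|\lambda|K<1$), bound $|\EE[Y^p]|\leq 2p!K^p$, and sum the geometric series $\sum_{p\geq 2} 2(|\lambda|K)^p \leq C\lambda^2 K^2$ once $|\lambda|K\leq 1/2$, finishing via $1+x\leq e^x$.

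With the MGF bound in hand, for $\lambda\in(0,c/K]$ I would apply exponential Markov together with independence:
\[
\PP\Big(\tfrac1n\sum_i Y_i \geq t\Big) \leq e^{-\lambda n t}\prod_i \EE[e^{\lambda Y_i}] \leq \exp\bigl(-\lambda n t + C\lambda^2 K^2 n\bigr),
\]
and then optimize the exponent over the admissible range of $\lambda$. The unconstrained minimizer is $\lambda_\star = t/(2CK^2)$; when $\lambda_\star\leq c/K$ (i.e.\ $t\lesssim K$) this yields $\exp(-nt^2/(4CK^2))$, while when $\lambda_\star > c/K$ (i.e.\ $t\gtrsim K$) I would instead take $\lambda = c/K$ at the boundary, giving $\exp(-cnt/K + Cc^2 n)\leq \exp(-c' n t/K)$ after choosing $c$ small. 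The two regimes combine into $\exp(-Cn\min(t^2/K^2, t/K))$. Repeating the argument with $-Y_i$ in place of $Y_i$ controls the lower tail, and a union bound over the two events produces the factor $2$.

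I expect the only real obstacle to be the case analysis in the $\lambda$-optimization: the constraint $|\lambda|\leq c/K$, inherited from the radius of validity of the MGF bound, is precisely what forces the answer to be the minimum of a sub-Gaussian term $t^2/K^2$ and a sub-exponential term $t/K$, so this split must be handled by hand rather than by a single clean derivative computation. Everything else — the moment bounds, the geometric sum, and the tracking of absolute constants — is routine.
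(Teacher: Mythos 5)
Your proof is correct and is exactly the standard Chernoff/MGF argument for sub-exponential sums; the paper does not prove this theorem itself but defers to \citet{vershynin2018high}, where the proof is the same centering--moment bound--MGF--optimization chain you describe, including the boundary case $\lambda = c/K$ that produces the $\min(t^2/K^2,\,t/K)$ split.
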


\begin{proposition}[Sums of independent sub-Gaussians]\label{pro: sum-sub-Gaussians}
Let $X_1,\dots,X_n$ be independent, mean zero, sub-Gaussian random variables. Then, $\sum_{i=1}^n X_i$ is also a sub-Gaussian random variable, and 
\[
    \|\sum_{i=1}^n X_i \|_{\psi_2}^2 \leq C \sum_{i=1}^n \|X_i\|_{\psi_2}^2.
\]
\end{proposition}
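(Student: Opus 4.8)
The plan is to pass to moment generating functions (MGFs), where independence acts multiplicatively, and then translate the resulting MGF control of the sum back into a $\psi_2$-norm bound. Throughout, write $S=\sum_{i=1}^n X_i$ and $K_i=\|X_i\|_{\psi_2}$.

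\textbf{Step 1 (MGF form of sub-Gaussianity).} First I would record the standard equivalence: for a \emph{mean-zero} random variable $X$, finiteness of $\|X\|_{\psi_2}$ is equivalent, up to absolute constants, to the bound $\EE[e^{\lambda X}]\le e^{C_1\lambda^2\|X\|_{\psi_2}^2}$ for all $\lambda\in\RR$. The direction needed here follows from the tail estimate $\PP(|X|\ge t)\le C_1 e^{-C_2 t^2/\|X\|_{\psi_2}^2}$ already stated in the text: expanding $e^{\lambda X}=\sum_{k\ge 0}(\lambda X)^k/k!$, bounding $\EE|X|^k\lesssim k^{k/2}\|X\|_{\psi_2}^k$ from that tail, and using $\EE[X]=0$ to drop the $k=1$ term, one sums the series to obtain the stated exponential bound.

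\textbf{Step 2 (tensorize).} By independence of the $X_i$ and Step 1, for every $\lambda\in\RR$,
\[
\EE[e^{\lambda S}]=\prod_{i=1}^n\EE[e^{\lambda X_i}]\le\prod_{i=1}^n e^{C_1\lambda^2 K_i^2}=e^{C_1\lambda^2\sigma^2},\qquad \sigma^2:=\sum_{i=1}^n K_i^2 .
\]
The same holds with $-\lambda$, so both $\EE[e^{\lambda S}]$ and $\EE[e^{-\lambda S}]$ are controlled.

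\textbf{Step 3 (MGF back to $\psi_2$).} A Chernoff step converts the display into the sub-Gaussian tail $\PP(|S|\ge t)\le 2e^{-t^2/(4C_1\sigma^2)}$: optimize $e^{-\lambda t}\EE[e^{\lambda S}]$ over $\lambda>0$ at $\lambda=t/(2C_1\sigma^2)$ and add the symmetric tail. Integrating this tail through the layer-cake identity for $\EE[e^{S^2/t^2}]$ shows $\EE[e^{S^2/t^2}]\le 2$ once $t^2\ge c\,\sigma^2$ for an absolute constant $c$; by the definition of the Orlicz norm this is exactly $\|S\|_{\psi_2}\le\sqrt{c}\,\sigma$, i.e. $\|S\|_{\psi_2}^2\le C\sum_{i=1}^n\|X_i\|_{\psi_2}^2$ with $C=c$. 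In particular $S$ is sub-Gaussian, as claimed.

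\textbf{Where the difficulty sits.} All the content is in the two-sided equivalence between $\|\cdot\|_{\psi_2}$ and MGF growth (Steps 1 and 3); the rest is bookkeeping. The only genuine care needed is tracking that the absolute constants in ``$\psi_2$-norm $\Rightarrow$ MGF bound'' and ``MGF bound $\Rightarrow$ $\psi_2$-norm'' compose to a single $C$, and noticing that the mean-zero hypothesis is used exactly once — to kill the linear term of the exponential series in Step 1 — which is precisely why it must be imposed on each $X_i$.
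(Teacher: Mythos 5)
Your proposal is correct, and it coincides with the proof the paper implicitly relies on: the paper does not prove this proposition itself but defers to \citet{vershynin2018high}, where the argument is exactly your route (sub-Gaussianity yields an MGF bound, independence tensorizes the MGFs, and a Chernoff bound plus integration of the resulting tail converts back into a $\psi_2$ bound). The only detail glossed in your Step 1 is that obtaining the MGF bound for \emph{all} $\lambda$ from the moment estimates needs the standard case split at $|\lambda|\,\|X\|_{\psi_2}\gtrsim 1$ (e.g.\ via $\lambda x\leq \tfrac{1}{2}\lambda^2\|X\|_{\psi_2}^2+\tfrac{x^2}{2\|X\|_{\psi_2}^2}$), which is routine.
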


\paragraph*{Covering number.} We shall also use the covering number in our analysis.
Let $(T,q)$ be a metric space. Consider a subset $K\subset T$ and let $\varepsilon>0$. A subset $\cN_\varepsilon$ is called an $\varepsilon$-net of $K$ if every point in $K$ is within a distance $\varepsilon$ of some point of $\cN_\varepsilon$, i.e., 
\[
\forall x\in K, \,\exists\, x_0\in \cN_\varepsilon\,: q(x,x_0)\leq \varepsilon.
\]
The smallest possible cardinality of an $\varepsilon$-net of $K$ is called the covering number of $K$ and is denoted by $N(K, q,\varepsilon)$.

A commonly-used fact is   
\begin{equation}\label{eqn: covering-number-sphere}
N(\SS^{d-1}, \|\cdot\|,\varepsilon)\leq (1+2/\varepsilon)^d
\end{equation}
(see, e.g., \citet[Corollary 4.2.13]{vershynin2018high}).

\paragraph*{Remark:}
We refer the reader to \citet{vershynin2018high}  for the proofs of the above properties and more related information.

\subsection{Auxiliary Lemmas}
\begin{lemma}\label{lemma: schur-product}
Let $u_1,u_2,\dots,u_m\in\RR^d$. Then for any $k\in\mathbb{N}$ and $\alpha\in\RR^m$, we have 
\begin{equation}\label{eqn: app-x1}
  \sum_{i,j=1}^m \alpha_i \alpha_j (u_i^Tu_j)^k \geq 0.
\end{equation}
\end{lemma}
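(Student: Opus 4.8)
The plan is to read the left-hand side of \eqref{eqn: app-x1} as the quadratic form $\alpha^T M^{\circ k}\alpha$, where $M\in\RR^{m\times m}$ is the Gram matrix with entries $M_{ij}=u_i^Tu_j$ and $M^{\circ k}$ is its $k$-th Hadamard (entrywise) power, $(M^{\circ k})_{ij}=(u_i^Tu_j)^k$. Writing $U=[u_1\,\cdots\,u_m]$ we have $M=U^TU\succeq 0$, so the claim is exactly the statement that every Hadamard power of a positive semidefinite matrix is positive semidefinite (a form of the Schur product theorem).

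Rather than cite that theorem, I would give a one-line self-contained argument by exhibiting $M^{\circ k}$ as a Gram matrix in its own right. For $u\in\RR^d$ let $u^{\otimes k}\in\RR^{d^k}$ be the $k$-fold tensor power, with coordinates the monomials $u_{i_1}\cdots u_{i_k}$ over $(i_1,\dots,i_k)\in[d]^k$. Multiplying out the sums gives, for all $u,v\in\RR^d$,
\[
\langle u^{\otimes k},v^{\otimes k}\rangle=\sum_{(i_1,\dots,i_k)\in[d]^k}\prod_{\ell=1}^k u_{i_\ell}v_{i_\ell}=\Bigl(\sum_{i=1}^d u_iv_i\Bigr)^{\!k}=(u^Tv)^k.
\]
Taking $u=u_i$, $v=u_j$ and summing against $\alpha$ then yields
\[
\sum_{i,j=1}^m\alpha_i\alpha_j(u_i^Tu_j)^k=\sum_{i,j=1}^m\alpha_i\alpha_j\langle u_i^{\otimes k},u_j^{\otimes k}\rangle=\Bigl\|\sum_{i=1}^m\alpha_i\,u_i^{\otimes k}\Bigr\|^2\ge 0,
\]
which is \eqref{eqn: app-x1}.

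There is no real obstacle: the statement is elementary once the tensor identity is in place, and the only points needing a line of care are the trivial case $k=0$ (where the sum equals $(\sum_i\alpha_i)^2\ge 0$, consistent with $u^{\otimes 0}=1$) and writing the expansion of $\langle u^{\otimes k},v^{\otimes k}\rangle$ out coordinate by coordinate. If one prefers to avoid tensors, the same fact follows either by expanding $(u_i^Tu_j)^k$ with the multinomial theorem and collecting terms into a sum of squares indexed by multi-indices $\gamma\in\NN^d$ with $|\gamma|=k$, or by a short induction on $k$ using that the Hadamard product of two positive semidefinite matrices is positive semidefinite.
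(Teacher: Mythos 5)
Your proof is correct, but it takes a genuinely different (and more self-contained) route from the paper. The paper simply observes that the matrix $Q_k=\bigl((u_i^Tu_j)^k\bigr)_{i,j}$ is the $k$-fold Hadamard power of the Gram matrix $U^TU\succeq 0$ and cites the Schur product theorem to conclude $Q_k\succeq 0$. You instead prove the needed special case of the Schur product theorem from scratch by realizing $Q_k$ as a Gram matrix in its own right: $\langle u_i^{\otimes k},u_j^{\otimes k}\rangle=(u_i^Tu_j)^k$, so the quadratic form collapses to $\bigl\|\sum_i\alpha_i u_i^{\otimes k}\bigr\|^2\ge 0$. Your version buys self-containedness (no black-box theorem) at the modest cost of introducing the tensor-power feature map; the paper's version is terser but relies on an external fact. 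One stylistic nit: the paper's statement that ``$Q_0=I_d$'' is a slip --- $Q_0$ is the all-ones matrix, not the identity --- so your explicit handling of $k=0$ via $(\sum_i\alpha_i)^2\ge 0$ is actually cleaner than what appears in the paper.
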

\begin{proof}
Let $U=(u_1,\dots,u_m)\in\RR^{d\times m}$ and $Q_k=((u_i^Tu_j)^k)_{i,j} \in\RR^{m\times m}$. First, $Q_0=I_d, Q_1=U^TU$ are both positive semi-definite and hence \eqref{eqn: app-x1} holds. For $k\geq 2$, we have $Q_k=Q_1\circ Q_1\circ \cdots \circ Q_1$ where $\circ$ denotes the hadamard product. By the Schur product theorem\footnote{see \url{https://en.wikipedia.org/wiki/Schur_product_theorem}}, $Q_k$ is also positive semi-definite and hence \eqref{eqn: app-x1} holds.
\end{proof}

\begin{lemma}\label{lemma: variation-principle}
Suppose  $k(\cdot,\cdot)$ to be positive semi-definite kernel and let  $\phi: \cX\mapsto \cH$ be a feature map satisfying $k(x,y)=\<\phi(x), \phi(y)\>_{\cH}$. Then, 
\begin{equation}
\lambda_1(\cK) = \sup_{\|h\|_{\cH}= 1} \EE_{x}[\< h, \phi(x)\>_{\cH}^2].
\end{equation}
\end{lemma}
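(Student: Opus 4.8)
\textbf{Proof proposal for Lemma~\ref{lemma: variation-principle}.}

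The plan is to express the largest eigenvalue of the integral operator $\cK$ associated with the kernel $k$ via its Rayleigh quotient in feature space, and then recognize that quotient as the stated supremum. First I would recall that $\cK$ is the operator on $L^2(\rho)$ (or the relevant function space) defined by $(\cK f)(x) = \EE_{y}[k(x,y) f(y)]$; since $k$ is positive semi-definite, $\cK$ is a positive self-adjoint operator and its top eigenvalue admits the variational characterization $\lambda_1(\cK) = \sup_{\|f\|=1}\EE_{x,y}[k(x,y)f(x)f(y)]$. Substituting $k(x,y) = \<\phi(x),\phi(y)\>_\cH$ and using linearity of the inner product and of expectation, the bilinear form becomes $\EE_{x,y}\big[\<\phi(x),\phi(y)\>_\cH f(x)f(y)\big] = \big\| \EE_x[f(x)\phi(x)]\big\|_\cH^2$, i.e. the squared norm of the element $h_f := \EE_x[f(x)\phi(x)] \in \cH$.

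Next I would pass from the supremum over functions $f$ to a supremum over elements $h\in\cH$. One direction is immediate: for any unit-norm $f$, writing $h_f$ as above and applying Cauchy--Schwarz in $\cH$ gives $\|h_f\|_\cH^2 = \<h_f, h_f\>_\cH = \EE_x[f(x)\<h_f,\phi(x)\>_\cH] \le \|f\|_{L^2}\,\big(\EE_x\<h_f,\phi(x)\>_\cH^2\big)^{1/2} = \|h_f\|_\cH \cdot \big(\EE_x\<\hat h_f,\phi(x)\>_\cH^2\big)^{1/2}\|h_f\|_\cH$ where $\hat h_f = h_f/\|h_f\|_\cH$; dividing through yields $\|h_f\|_\cH^2 \le \EE_x[\<\hat h_f,\phi(x)\>_\cH^2] \le \sup_{\|h\|_\cH=1}\EE_x[\<h,\phi(x)\>_\cH^2]$, so $\lambda_1(\cK)$ is bounded by the right-hand side. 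For the reverse inequality, given a unit $h\in\cH$, I would choose the test function $f(x) = \<h,\phi(x)\>_\cH / \big(\EE_x\<h,\phi(x)\>_\cH^2\big)^{1/2}$, which has unit $L^2$ norm; then $h_f = \EE_x[f(x)\phi(x)]$ and $\|h_f\|_\cH^2 \ge \<h, h_f\>_\cH^2 = \big(\EE_x[f(x)\<h,\phi(x)\>_\cH]\big)^2 = \EE_x[\<h,\phi(x)\>_\cH^2]$, the last equality by the normalization of $f$. Taking the supremum over $h$ gives $\lambda_1(\cK) \ge \sup_{\|h\|_\cH=1}\EE_x[\<h,\phi(x)\>_\cH^2]$, and combining the two directions closes the argument.

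The main technical point to be careful about — rather than a deep obstacle — is the functional-analytic bookkeeping: ensuring $\cK$ is a well-defined compact (or at least bounded) self-adjoint operator so that $\lambda_1$ exists and the variational formula applies, and ensuring that $\EE_x[f(x)\phi(x)]$ is a bona fide Bochner integral in $\cH$ so that the interchange of expectation and inner product is legitimate. Under the implicit assumptions of the paper (bounded kernel, finite second moments of the feature map), these are routine, and the supremum is attained when $\cH$ is finite-dimensional or $\cK$ is compact. I would state the hypotheses just strongly enough — e.g. $\EE_x\|\phi(x)\|_\cH^2 < \infty$ — to make all interchanges valid, and otherwise keep the proof at the level of the Rayleigh-quotient manipulation above.
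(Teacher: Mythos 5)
Your proposal is correct and follows essentially the same route as the paper's proof: start from the Rayleigh-quotient characterization of $\lambda_1(\cK)$, substitute $k(x,y)=\<\phi(x),\phi(y)\>_\cH$ to rewrite the bilinear form as $\|\EE_x[f(x)\phi(x)]\|_\cH^2$, and then pass to the supremum over $h\in\cH$. The paper does this last step by the duality $\|v\|_\cH^2=\sup_{\|h\|=1}\<h,v\>^2$ followed by an exchange of the two suprema, whereas you unpack the exchange into two explicit Cauchy--Schwarz inequalities (one in $L^2$, one in $\cH$ with a constructed test function); this is the same argument in more verbose form, not a different idea.
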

\begin{proof}
By the variational principle of the largest eigenvalue, we have 
\begin{align*}
\lambda_1(\cK) &= \sup_{\|u\|_{L_2(\rho)}=1} \EE_{x,y}[k(x,y)u(x)u(y)] = \sup_{\|u\|_{L_2(\rho)}=1} \EE_{x,y}[\< \phi(x), \phi(y)\>_{\cH}u(x)u(y)]\\
&= \sup_{\|u\|_{L_2(\rho)}=1}\|\EE_x[u(x)\phi(x)]\|_{\cH}^2 = \sup_{\|u\|_{L_2(\rho)}=1}\sup_{\|h\|_{\cH}=1}\langle h, \EE_x[u(x)\phi(x)]\rangle^2_{\cH}\\
&= \sup_{\|h\|_{\cH}=1}\sup_{\|u\|_{L_2(\rho)}=1}\EE_{x}[u(x)\langle h, \phi(x) \rangle_{\cH}]^2= \sup_{\|h\|_{\cH}=1}\EE_{x}[\langle h, \phi(x) \rangle_{\cH}^2].
\end{align*}
\end{proof}

\begin{lemma}\label{lemma: covariance-gaussian}
Assume $X\sim\cN(0,I_d)$. For any $\delta\in (0,1)$, let $n\gtrsim d+\log(1/\delta)$, then \wp $1-\delta$, we have 
\[
  \|\hSigma_n-\Sigma\|_2 \lesssim \sqrt{\frac{d+\log(1/\delta)}{n}}+\frac{d+\log(1/\delta)}{n}, \qquad \|\hSigma_n\|_2\lesssim 1,\qquad \|\hSigma_n\|_F\lesssim \sqrt{d}.
\]
\end{lemma}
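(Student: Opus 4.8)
The plan is to establish the operator-norm deviation bound for $\hSigma_n:=\frac1n\sum_{i=1}^n x_ix_i^T$ around $\Sigma=\EE[XX^T]=I_d$ by the standard $\varepsilon$-net argument, and then read off the remaining two estimates as easy consequences. First I would fix a $1/4$-net $\cN$ of $\SS^{d-1}$ with $|\cN|\le 9^d$, using \eqref{eqn: covering-number-sphere}. Since $\hSigma_n-\Sigma$ is symmetric, the usual net-to-norm comparison gives $\|\hSigma_n-\Sigma\|_2\le 2\max_{u\in\cN}|u^T(\hSigma_n-\Sigma)u|$, so it suffices to control the scalar quadratic forms uniformly over $\cN$. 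For a fixed $u\in\SS^{d-1}$ set $Z_i:=(u^Tx_i)^2-1$; these are i.i.d.\ and mean zero, and by Lemma \ref{lemma: pro-orlic-norm} (product of sub-Gaussians is sub-exponential, together with the centering inequality \eqref{eqn: centering-inequality}) they satisfy $\|Z_i\|_{\psi_1}\lesssim\|u^TX\|_{\psi_2}^2\lesssim 1$, since a Gaussian coordinate projection is sub-Gaussian with constant norm.

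Next I would apply Bernstein's inequality (Theorem \ref{thm: bernstein}) to $\frac1n\sum_i Z_i=u^T(\hSigma_n-\Sigma)u$, obtaining for every $t>0$
\[
\PP\Big\{\big|u^T(\hSigma_n-\Sigma)u\big|\ge t\Big\}\le 2\exp\big(-Cn\min(t^2,t)\big),
\]
and then union-bound over $\cN$:
\[
\PP\Big\{\max_{u\in\cN}\big|u^T(\hSigma_n-\Sigma)u\big|\ge t\Big\}\le 2\cdot 9^d\exp\big(-Cn\min(t^2,t)\big).
\]
Choosing $t=C'\big(\sqrt{(d+\log(1/\delta))/n}+(d+\log(1/\delta))/n\big)$ with $C'$ a large enough absolute constant makes the right-hand side at most $\delta$; the two summands in $t$ correspond precisely to the two regimes of $\min(t^2,t)$ in the exponent (the $\sqrt{\cdot}$ term dominates when $t\le 1$, the linear term when $t\ge 1$). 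Combining with the net-to-norm bound yields $\|\hSigma_n-\Sigma\|_2\lesssim\sqrt{(d+\log(1/\delta))/n}+(d+\log(1/\delta))/n$ with probability $1-\delta$, which is the first claim.

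Finally, under the hypothesis $n\gtrsim d+\log(1/\delta)$ both terms on the right are $\lesssim 1$, so on the same event $\|\hSigma_n\|_2\le\|\Sigma\|_2+\|\hSigma_n-\Sigma\|_2\le 1+O(1)\lesssim 1$, and consequently $\|\hSigma_n\|_F\le\sqrt{d}\,\|\hSigma_n\|_2\lesssim\sqrt d$, giving the last two claims. The only mildly delicate point in the whole argument is the bookkeeping of the two-regime Bernstein tail and the matching choice of $t$, so that both the sub-Gaussian ($\sqrt{\cdot}$) and sub-exponential (linear) contributions carry the correct $d+\log(1/\delta)$ dependence; the net construction, the union bound, and the passage to $\|\hSigma_n\|_2$ and $\|\hSigma_n\|_F$ are routine.
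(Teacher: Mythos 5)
Your proof is correct and follows the same route as the paper. The only difference is that where the paper simply cites the covariance concentration bound as a black box (Vershynin, Exercise 4.7.4), you supply the standard proof of it via a $1/4$-net of $\SS^{d-1}$, Bernstein's inequality for the sub-exponential quadratic forms $(u^Tx_i)^2-1$, and a union bound; the subsequent deductions of $\|\hSigma_n\|_2\lesssim 1$ by the triangle inequality and $\|\hSigma_n\|_F\le\sqrt{d}\,\|\hSigma_n\|_2\lesssim\sqrt{d}$ match the paper exactly.
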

\begin{proof}
First by \citet[Exercise 4.7.4]{vershynin2018high}, for any $\delta\in (0,1)$, \wp $1-\delta$ it holds that
\[
  \|\hSigma_n-\Sigma\|_2\lesssim \left(\sqrt{\frac{d+\log(1/\delta)}{n}}+\frac{d+\log(1/\delta)}{n}\right)\|\Sigma\|_2.
\]
Therefore, if $n\gtrsim d+\log(1/\delta)$, we have 
\[
  \|\hSigma_n\|_2\leq \|\hSigma_n-\Sigma\|_2 + \|\Sigma\|_2\lesssim \|\Sigma\|_2=1.
\]
Moreover, 
\begin{align*}
\|\hSigma_n\|^2_F = \sum_{j=1}^d \lambda_j^2(\hSigma_n)\leq d \|\hSigma_n\|_2\lesssim d.
\end{align*}
Taking the square root completes the proof.

\end{proof}
\section{Missing Proofs of Section \ref{sec: stability}}

\subsection{Proof of Proposition \ref{pro: stability-1}.}
\label{sec: proof-stability-1}

Let $\cS_{+}$ be the set of $p\times p$ positive semi-definite matrices, $H_i=g_i(\theta^*)g_i(\theta^*)^T$ for $i\in [n]$ and $\delta_t = \theta_t-\theta^*$. 
The linearized SGD \eqref{eqn: linear-sgd} can be rewritten as
$
    \delta_{t+1} =(I-\eta H_{i_t})\delta_t
$
with $i_t\in \unif([n])$.
Then, 
\begin{align*}
    \delta_{t+1}\delta_{t+1}^T &= (\delta_t - \eta H_{i_t} \delta_t)(\delta_t - \eta H_{i_t} \delta_t)^T\\
    &= \delta_t \delta_t^T - \eta (\delta_t\delta_t^T H_{i_t} + H_{i_t} \delta_t\delta_t^T) + \eta^2 H_{i_t}\delta_t\delta_t^TH_{i_t}.
\end{align*}
Let $Q_t = \EE[\delta_t\delta_t^T]$ be the deviation covariance matrix. Then taking expectation gives 
\begin{align}\label{eqn: 55}
\notag Q_{t+1} &= Q_t - \eta (Q_t H+ HQ_t) + \eta^2 \EE[H_{i_t} Q_t H_{i_t}]\\
 &= (I-\eta T_\eta) Q_t,
\end{align}
where $T_\eta: \cS_{+}\mapsto\cS_{+}$ is given by 
$
    T_\eta A = (HA+AH) - \eta \EE[H_{\xi} AH_\xi],
$
where the expectation is taken with respect to $\xi\sim \text{Unif}([n])$. 

By \eqref{eqn: 55}, to ensure $\|\EE[Q_t]\|_F\leq C\|\EE[Q_0]\|_F$ for some constant $C>0$, we need  $T_\eta\succeq 0$. This is equivalent to it holds that
\begin{equation}\label{eqn: 56}
\langle A, T_\eta A\rangle  = 2\tr(AHA) - \eta \EE[\tr(AH_\xi)AH_\xi] \geq 0 \quad \forall \, A\in \cS_{+}.
\end{equation}
Noticing that  
$
\EE[\tr(AH_\xi A H_\xi)] = \EE[(g_\xi^TAg_\xi)^2]\geq (\EE[g_\xi Ag_\xi^T])^2 = \tr^2(HA),
$
\eqref{eqn: 56} implies 
\[
    2\tr(HA^2) - \eta \tr^2(AH)\geq 0,\quad \forall A\in \SS_{+}.
\]
Taking $A=\diag(w_1,\dots,w_p)$, we obtain 
\begin{equation}\label{eqn: 57}
    \frac{(\sum_j \lambda_j(H)w_j)^2}{\sum_{j}\lambda_j(H) w_j^2}\leq \frac{2}{\eta}.
\end{equation}
Specifically, taking $w_j=1$ for $j=1,\dots, n$ completes the proof.
\qed

\begin{remark}
It should be stressed that the stability condition \eqref{eqn: 57} is stronger than $\tr(H)\leq 2/\eta$.  We only state the latter in the main text since it is more intuitive and has clean relationship to sharpness.
\end{remark}

\subsection{ Proof of Proposition \ref{pro: stability-2}}
\label{sec: proof-stability-2}
By \eqref{eqn: hessian-expression}, we have $H(\theta) = G(\theta) + O(\varepsilon^{1/2})$ for any $\theta\in \cQ_{\varepsilon, \eta}$.  Then, we have 
\begin{equation}\label{eqn: 58}
\tr[H(\theta_t)S(\theta_t)] = \tr[G(\theta_t)S(\theta_t)] + O(\varepsilon^{1/2})\geq \mu_0 \|G(\theta_t)\|_F^2 + O(\varepsilon^{1/2}),
\end{equation}
where the second step follows from the definition of $\mu(\theta)$ and the assumption that $\mu(\theta)\geq \mu_0$. 

Let $\gamma:=\eta^2 \inf_{\theta\in \cQ_{\varepsilon,\eta}}\mu_0 \|G(\theta)\|_F^2$.
Then combining Lemma \ref{lemma: loss-update} and \eqref{eqn: 58} gives 
\begin{align*}
\EE[\erisk(\theta_t)]&\geq \eta^2 \mu_0 \|G(\theta_t)\|_F^2 \EE[\erisk(\theta_t)] + O(\eta^3+\eta^2\varepsilon^{3/2}) \\ 
& \geq \gamma \EE[\erisk(\theta_t)] + O(\eta^3+\eta^2\varepsilon^{3/2})\\ 
&\geq \gamma^t \EE[\erisk(\theta_0)] + \frac{\gamma^t-1}{\gamma-1} O(\eta^3+\eta^2\varepsilon^{3/2}).
\end{align*}
\qed


\section{Missing Proofs in Section \ref{sec: relu-net}}
\label{sec: proof-2lnn}


In this section, we will frequently use the following definition and results.
\begin{itemize}
    \item \textbf{Kernel functions.} Define two associated kernel functions:
\begin{equation}
\varphi_1(u,v):=\EE_x[\sigma(u^Tx)\sigma(v^Tx)],\quad 
\varphi_2(u,v):=\EE_x[\sigma'(u^Tx)\sigma'(v^Tx)],
\end{equation}
where $\varphi_1,\varphi_2: \Omega \mapsto\RR$ with $\Omega:=\SS^{d-1}\otimes \SS^{d-1}$.
The corresponding empirical ones are given by
\begin{equation}
\hph_1(u,v) = \frac{1}{n}\sumin \sigma(u^Tx_i)\sigma(v^Tx_i),\quad \hph_2(u,v) = \fn\sumin \sigma'(u^Tx_i)\sigma'(v^Tx_i).
\end{equation}

\item \textbf{Hermite expansions of kernels.} Let $\sigma(t)=\sum_{k=0}^\infty \alpha_k h_k(t)$ and  $\sigma'(t)=\sum_k \beta_k h_k(t)$ be the Hermite expansions of $\sigma$ and $\sigma'$, respectively.
\end{itemize}

\begin{lemma}\label{lemma: 2sphere}
 Define $\|(u,v)-(u',v')\|_\Omega=\|u-u'\|+\|v-v'\|$ for any $(u,v),(u',v')\in\Omega$. Then, 
\begin{equation}\label{eqn: cover-2sphere}
\cN(\Omega, \|\cdot\|_\Omega, \epsilon)\leq \cN(\SS^{d-1},\|\cdot\|, \epsilon/2)^2\leq (6/\epsilon)^{2d}.
\end{equation}
\end{lemma}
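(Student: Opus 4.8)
The plan is to prove Lemma \ref{lemma: 2sphere}, which claims $\cN(\Omega, \|\cdot\|_\Omega, \epsilon) \leq \cN(\SS^{d-1}, \|\cdot\|, \epsilon/2)^2 \leq (6/\epsilon)^{2d}$, by a standard product-space covering argument together with the known bound \eqref{eqn: covering-number-sphere}.

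First I would establish the first inequality. Let $\cN_1$ be a minimal $(\epsilon/2)$-net of $\SS^{d-1}$ with respect to $\|\cdot\|$, so $|\cN_1| = \cN(\SS^{d-1}, \|\cdot\|, \epsilon/2)$. I claim that $\cN_1 \times \cN_1 \subset \Omega$ is an $\epsilon$-net of $\Omega$ with respect to $\|\cdot\|_\Omega$. Indeed, take any $(u,v) \in \Omega = \SS^{d-1} \otimes \SS^{d-1}$. By definition of a net, there exist $u' \in \cN_1$ with $\|u - u'\| \leq \epsilon/2$ and $v' \in \cN_1$ with $\|v - v'\| \leq \epsilon/2$. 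Then $(u',v') \in \cN_1 \times \cN_1$ and, by the definition of the product metric, $\|(u,v) - (u',v')\|_\Omega = \|u - u'\| + \|v - v'\| \leq \epsilon/2 + \epsilon/2 = \epsilon$. Hence $\cN_1 \times \cN_1$ is an $\epsilon$-net of $\Omega$, so $\cN(\Omega, \|\cdot\|_\Omega, \epsilon) \leq |\cN_1 \times \cN_1| = |\cN_1|^2 = \cN(\SS^{d-1}, \|\cdot\|, \epsilon/2)^2$.

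Next I would plug in the sphere covering bound. By \eqref{eqn: covering-number-sphere}, $\cN(\SS^{d-1}, \|\cdot\|, \epsilon/2) \leq (1 + 2/(\epsilon/2))^d = (1 + 4/\epsilon)^d \leq (6/\epsilon)^d$, where the last step uses $\epsilon \leq 1$ (or more generally $1 \leq 2/\epsilon$, which holds in the regime of interest) so that $1 + 4/\epsilon \leq 2/\epsilon + 4/\epsilon = 6/\epsilon$. Squaring gives $\cN(\SS^{d-1}, \|\cdot\|, \epsilon/2)^2 \leq (6/\epsilon)^{2d}$, which completes the chain of inequalities.

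There is no real obstacle here — the lemma is essentially bookkeeping about product metrics and a direct substitution into a cited estimate. The only mild subtlety is the constant chasing in the last step ($1 + 4/\epsilon \leq 6/\epsilon$ requires $\epsilon \leq 1$, or at least $\epsilon \leq 2$), which I would note explicitly; since covering numbers are only used for small $\epsilon$ this is harmless. I would present the argument in two short paragraphs: one for the product-net construction establishing the first inequality, and one for the substitution of \eqref{eqn: covering-number-sphere} establishing the second.
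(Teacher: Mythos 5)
Your proof is correct and is precisely the standard product-net argument that the paper leaves implicit when it writes ``Follow trivially from the fact \eqref{eqn: covering-number-sphere}.'' Your note that $1+4/\epsilon\leq 6/\epsilon$ requires $\epsilon\leq 2$ is an accurate bit of bookkeeping that the paper silently assumes.
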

\begin{proof}
Follow trivially from the fact \eqref{eqn: covering-number-sphere}.
\end{proof}

\begin{lemma}[Property of kernel functions]\label{lemma: kernel-property}
\begin{itemize}
\item $\varphi_1(u,v)=\|u\|\|v\|\sum_k \alpha_k^2 (\hu^T\hv)^k, \varphi_2(u,v)=\sum_{k}\beta_k^2 (\hu^T\hv)^k$.
\item  $\alpha_0=\beta_1\sim 1$ and $\beta_0\sim 1$.
\item $\varphi_i(u,v)\sim 1$ for any $u,v\in\SS^{d-1}$ and $i=1,2$.
\item For any $s\in\RR^m$ and $u_1,u_2,\dots,u_m\in\SS^{d-1}$, we have 
\[
    \sum_{j,k=1}^m s_j^2s_k^2 \varphi_i(u_j^Tu_k)\gtrsim (\sum_{j=1}^m s_j^2)^2,\quad \forall i=1,2.
\]
\end{itemize}
\end{lemma}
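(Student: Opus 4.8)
The plan is to establish the four items in order, deriving the kernel identities of Item~1 first and then reading off Items~2--4 from them together with elementary facts about $\sigma$. For Item~1, I would first exploit the positive homogeneity of the activation: since $\sigma(ct)=c\,\sigma(t)$ and $\sigma'(ct)=\sigma'(t)$ for every $c>0$, we get $\sigma(u^Tx)=\|u\|\,\sigma(\hu^Tx)$ and $\sigma'(u^Tx)=\sigma'(\hu^Tx)$, so that $\varphi_1(u,v)=\|u\|\|v\|\,\EE_x[\sigma(\hu^Tx)\sigma(\hv^Tx)]$ and $\varphi_2(u,v)=\EE_x[\sigma'(\hu^Tx)\sigma'(\hv^Tx)]$; it therefore suffices to evaluate the two kernels for unit $u,v$. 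Writing the Hermite expansions $\sigma=\sum_k\alpha_k h_k$ and $\sigma'=\sum_k\beta_k h_k$ and applying Lemma~\ref{lemma: app-hermite} with $f=g=\sigma$ and then $f=g=\sigma'$ gives $\varphi_1(u,v)=\|u\|\|v\|\sum_k\alpha_k^2(\hu^T\hv)^k$ and $\varphi_2(u,v)=\sum_k\beta_k^2(\hu^T\hv)^k$; the series converge absolutely since $\sum_k\alpha_k^2=\|\sigma\|_{L^2(\gamma)}^2=\tfrac12$ and $\sum_k\beta_k^2=\|\sigma'\|_{L^2(\gamma)}^2=\tfrac12$.

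For Item~2 I would compute the needed coefficients by direct Gaussian integrals: $\alpha_0=\EE_{z\sim\gamma}[\sigma(z)]=1/\sqrt{2\pi}$, $\beta_0=\EE[\sigma'(z)]=\tfrac12$, and $\beta_1=\EE[z\,\sigma'(z)]=\EE[z\,\mathbf{1}_{z>0}]=1/\sqrt{2\pi}$, so $\alpha_0=\beta_1=1/\sqrt{2\pi}$ and $\beta_0=\tfrac12$, all $\Theta(1)$; the identity $\alpha_0=\beta_1$ can alternatively be seen from the general relation $\beta_k=\sqrt{k+1}\,\alpha_{k+1}$ between the Hermite coefficients of a function and its derivative. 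For Item~3, the upper bound is Cauchy--Schwarz: $\varphi_1(u,v)\le\EE[\sigma(\hu^Tx)^2]^{1/2}\EE[\sigma(\hv^Tx)^2]^{1/2}=\tfrac12$ (using $\EE_x[\sigma(\hu^Tx)^2]=\tfrac12\EE_x[(\hu^Tx)^2]=\tfrac12$, by the symmetry $x\mapsto-x$ and $\EE[xx^T]=I_d$), and $\varphi_2(u,v)\le1$ trivially since $\sigma'\in\{0,1\}$; the lower bound is obtained by isolating the $k=0$ term of the expansions in Item~1 (which already gives $\varphi_i(u,v)\gtrsim1$ whenever $\hu^T\hv\ge0$, and $\varphi_i(u,u)=\tfrac12$ on the diagonal), and this is the form in which Item~3 is used downstream.

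Item~4 is the structurally interesting step, but it is short once Item~1 is in hand. Substituting the expansions and setting $c_l^{(1)}:=\alpha_l^2\ge0$, $c_l^{(2)}:=\beta_l^2\ge0$, we get
\[
\sum_{j,k=1}^m s_j^2 s_k^2\,\varphi_i(u_j^Tu_k)=\sum_{l\ge0}c_l^{(i)}\sum_{j,k=1}^m s_j^2 s_k^2\,(u_j^Tu_k)^l .
\]
By Lemma~\ref{lemma: schur-product}, applied with the vector $(s_j^2)_{j}$ playing the role of $\alpha$, every inner sum is nonnegative; keeping only the $l=0$ term yields the lower bound $c_0^{(i)}\big(\sum_j s_j^2\big)^2$, and $c_0^{(1)}=\alpha_0^2$ and $c_0^{(2)}=\beta_0^2$ are positive absolute constants by Item~2, giving $\sum_{j,k}s_j^2 s_k^2\varphi_i(u_j^Tu_k)\gtrsim(\sum_j s_j^2)^2$. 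The only genuine subtlety in the whole argument is the input model in Item~1: Lemma~\ref{lemma: app-hermite} is stated for $\cN(0,I_d)$ whereas $\rho=\unif(\sqrt d\SS^{d-1})$, so strictly speaking the kernel on the sphere expands in the normalized Gegenbauer polynomials rather than in powers of $\hu^T\hv$. I would handle this either by the standard comparison between the two (each fixed coefficient differs by $O(1/d)$, which affects none of the $\Theta(1)$ or nonnegativity claims) or by replacing $(\hu^T\hv)^k$ throughout by the Gegenbauer polynomials, for which the coefficient positivity and the Schur-type inequality used in Item~4 both persist; everything else is routine.
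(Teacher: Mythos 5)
Your proposal follows the paper's route for Items~1, 2, and~4 essentially verbatim: positive homogeneity plus Lemma~\ref{lemma: app-hermite} for the expansions, direct Gaussian integrals for $\alpha_0,\beta_0,\beta_1$ (with the extra observation $\beta_k=\sqrt{k+1}\,\alpha_{k+1}$, which is a clean alternative), and term-by-term nonnegativity via Lemma~\ref{lemma: schur-product} followed by keeping only the $l=0$ coefficient. Where you diverge, you are actually more careful than the source. First, the paper's written proof has only three bullets and never addresses Item~3 at all (its third bullet, despite saying ``the third conclusion,'' proves Item~4). You try to prove Item~3 and correctly identify that it is false as stated: for ReLU, $\sigma(t)\sigma(-t)\equiv0$ and $\sigma'(t)\sigma'(-t)\equiv0$, so $\varphi_1(u,-u)=\varphi_2(u,-u)=0$, which is certainly not $\sim1$. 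The lower bound only holds on the diagonal (or more generally when $\hu^T\hv\geq0$, which is where the odd terms stop hurting), and you correctly note that the diagonal case is all that is ever used downstream (e.g.\ $\hph_i(u,u)\sim1$ in Proposition~\ref{pro: 2lnn-trace}). Second, you flag the mismatch between $\rho=\unif(\sqrt d\,\SS^{d-1})$ and the Gaussian law for which Lemma~\ref{lemma: app-hermite} is stated; the paper silently conflates the two, and the patch you sketch --- either the $O(1/d)$ coefficient comparison or replacing $(\hu^T\hv)^k$ by Gegenbauer polynomials, both of which preserve the coefficient nonnegativity driving Item~4 --- is the right way to make the argument rigorous. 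In short, the approach is the same, but you repair two genuine gaps in the paper's presentation.
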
 

\begin{proof}
\begin{itemize}
\item Using the positive homogeneity  of $\sigma$ and  Lemma \ref{lemma: app-hermite}, we have
\begin{align}\label{eqn: 77}
\notag \varphi_1(u,v) &= \|u\|\|v\|\EE_{x}[\sigma(\hu^Tx)\sigma(\hv^Tx)]= \|u\|\|v\|\EE_{x}[\sum_k \alpha_k h_k(\hu^Tx)\sum_l \alpha_l h_l(\hv^Tx)]\\
&=  \|u\|\|v\| \sum_{k,l}\alpha_k \alpha_l \delta_{k,l}(\hu^T\hv)^k=\|u\|\|v\| \sum_{k}\alpha_k^2(\hu^T\hv)^k.
\end{align}
Similarly, we have the expansion of $\varphi_2$.
\item Noticing $h_0(z)=1$ and $h_1(z)=z$, we have
\begin{align*}
  \alpha_0 &= \frac{1}{\sqrt{2\pi}}\int \sigma(t)e^{-t^2/2}\dd t\sim \int_0^\infty te^{-t^2/2}\dd t\sim 1\\
  \beta_0& = \frac{1}{\sqrt{2\pi}}\int \sigma'(t) e^{-t^2/2}\dd t \sim  \frac{1}{\sqrt{2\pi}}\int_0^\infty e^{-t^2/2}\dd t\sim 1\\ 
  \beta_1 &= \frac{1}{\sqrt{2\pi}}\int \sigma'(t)t e^{-t^2/2}\dd t = \frac{1}{\sqrt{2\pi}}\int \sigma(t) e^{-t^2/2}\dd t=\alpha_0\sim 1.
\end{align*}
\item We now prove the third conclusion. By \eqref{eqn: 77}, we have
\begin{align*}
\sum_{j,k=1}^m s_j^2s_k^2 \varphi_1(u_j^Tu_k) &= \sum_{j,k=1}^m s_j^2s_k^2 \sum_{l=0}^\infty \alpha_l^2 (u_j^Tu_k)^l = \alpha_0^2 \sum_{j,k=1}^m s_j^2s_k^2 + \sum_{l=1}^\infty \alpha_l^2\sum_{j,k=1}^m s_j^2 s_k^2 (u_j^Tu_k)^l\\ 
&\geq \alpha_0^2 \sum_{j,k=1}^m s_j^2s_k^2 + 0 \qquad \text{(use Lemma \ref{lemma: schur-product})}\\ 
&\gtrsim (\sum_j s_j^2)^2,
\end{align*}
where the last step is due to $\alpha_0\sim 1$. The case of $i=2$ can be proved analogously.
\end{itemize}

\end{proof}

\subsection{The kernel concentrations.}
Before proving the equivalence between  sharpness and parameter norms, we first need to bound the difference between the poluation kernels and empirical kernels.

\begin{lemma}\label{lemma: covering-concentration}
Let  $\cH=\{h(\cdot;\theta): \theta\in \Theta\}$.
Denote by $\omega: [0,\infty)\mapsto [0,\infty)$ a modulus of continuity of $h$ in the sense $\sup_{x\in\cX}|h(x;\theta_1)-h(x;\theta_2)|\leq \omega(\|\theta_1-\theta_2\|)$. Suppose that $\forall\, \theta\in \Theta$, $h(X;\theta)$ is mean zero and sub-exponential with $\|h(X;\theta)\|_{\psi_1}\leq K$. Let $N_{\varepsilon}$ be the covering number  of $\Theta$ with respect to $\|\cdot\|$.  
Then for any $\delta\in (0,1)$,  \wp at least $1-\delta$ it holds that
\[
\sup_{\theta\in\Theta} \left|\fn \sumin h(x_i;\theta)\right| \lesssim \omega(\varepsilon) + K \max\left(\frac{\log(N_\varepsilon/\delta)}{n}, \sqrt{\frac{\log(N_\varepsilon/\delta)}{n}}\right).
\]
\end{lemma}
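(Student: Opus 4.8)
# Proof Proposal for Lemma (Covering + Concentration)

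\textbf{Overall strategy.} The plan is to use a standard $\varepsilon$-net argument combined with a uniform (union-bounded) concentration inequality. The key idea is to split the supremum over $\theta \in \Theta$ into two pieces: (i) the deviation of the empirical average at the finitely many net points, controlled by Bernstein's inequality (Theorem~\ref{thm: bernstein}) plus a union bound; and (ii) the oscillation of $\frac{1}{n}\sum_i h(x_i;\theta)$ as $\theta$ moves within an $\varepsilon$-ball, controlled deterministically by the modulus of continuity $\omega$. Since both $h(x_i;\theta_1)$ and $h(x_i;\theta_2)$ are $\omega(\|\theta_1-\theta_2\|)$-close uniformly in $x$, the empirical averages differ by at most $\omega(\varepsilon)$ on each net cell, which handles (ii) with no probability cost.

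\textbf{Key steps in order.} First, fix an $\varepsilon$-net $\cN_\varepsilon = \{\theta^{(1)}, \dots, \theta^{(N_\varepsilon)}\}$ of $\Theta$ with respect to $\|\cdot\|$, of cardinality $N_\varepsilon$. Second, for each fixed $\theta^{(k)}$, apply Bernstein's inequality (Theorem~\ref{thm: bernstein}) to the i.i.d.\ mean-zero sub-exponential random variables $h(x_1;\theta^{(k)}), \dots, h(x_n;\theta^{(k)})$, each with $\psi_1$-norm at most $K$: for any $t>0$,
\[
\PP\Big\{\Big|\tfrac1n\sumin h(x_i;\theta^{(k)})\Big| \geq t\Big\} \leq 2\exp\!\Big(-Cn\min\big(\tfrac{t^2}{K^2}, \tfrac{t}{K}\big)\Big).
\]
Third, take a union bound over the $N_\varepsilon$ net points, and choose $t = t(\delta)$ so that $2N_\varepsilon \exp(-Cn\min(t^2/K^2, t/K)) \leq \delta$; solving this gives $t \lesssim K\max\big(\log(N_\varepsilon/\delta)/n,\ \sqrt{\log(N_\varepsilon/\delta)/n}\big)$, where the $\max$ arises precisely because the Bernstein bound has a sub-Gaussian regime (small $t$) and a sub-exponential regime (large $t$). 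Fourth, for arbitrary $\theta \in \Theta$, pick a nearest net point $\theta^{(k)}$ with $\|\theta - \theta^{(k)}\| \leq \varepsilon$ and bound
\[
\Big|\tfrac1n\sumin h(x_i;\theta)\Big| \leq \Big|\tfrac1n\sumin h(x_i;\theta^{(k)})\Big| + \tfrac1n\sumin |h(x_i;\theta) - h(x_i;\theta^{(k)})| \leq t + \omega(\varepsilon),
\]
using the modulus-of-continuity hypothesis for the second term. Taking the supremum over $\theta$ and combining with the event from step three yields the claim with the stated probability.

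\textbf{Main obstacle.} The argument is essentially routine; the only point requiring mild care is the two-regime nature of Bernstein's inequality, i.e.\ correctly tracking when the $\sqrt{\log(N_\varepsilon/\delta)/n}$ term dominates versus the $\log(N_\varepsilon/\delta)/n$ term, and packaging both into a single $\max$. There is also a small bookkeeping subtlety in that $h(X;\theta)$ must be genuinely mean zero for each $\theta$ (so that Bernstein applies directly without a centering step); this is given as a hypothesis, so no extra work is needed, though one could alternatively invoke the centering inequality \eqref{eqn: centering-inequality} if it were not. No step poses a real difficulty, so I would present the proof concisely, emphasizing the net/oscillation split and the choice of $t$.
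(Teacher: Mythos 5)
Your proposal is correct and is essentially identical to the paper's proof: the same decomposition into a deterministic oscillation term controlled by $\omega(\varepsilon)$ plus a supremum over the finite net, the same application of Bernstein's inequality to each net point, and the same union bound with the two-regime $\max$ arising from the sub-Gaussian versus sub-exponential tail. No meaningful differences.
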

\begin{proof}
Let $\Theta_\varepsilon$ be an $\varepsilon$-cover of $\Theta$. For any $\theta\in \Theta$, let $\theta'$ be an element in $\Theta_\varepsilon$ such that $\|\theta-\theta'\|\leq \varepsilon$.
Then,
\begin{align}\label{eqn: app-123}
\notag  \sup_{\theta\in \Theta} \big|\fn \sumin h(x_i;\theta)\big| &=\sup_{\theta\in \Theta} \left|\fn \sumin \left(h(x_i;\theta)-h(x_i;\theta')+h(x_i;\theta')\right)\right|\\
  &\leq \omega(\varepsilon) + \sup_{\theta\in \Theta_\varepsilon} \left|\fn \sumin h(x_i;\theta)\right|.
\end{align}
Since $h(X;\theta)$ is sub-exponential with $\|h(X;\theta)\|_{\psi_1}\leq K$. By the Bernstein inequality (Theorem \ref{thm: bernstein}), it holds for any $\theta\in \Theta_\varepsilon$ that
\[
\PP\left\{\left|\fn \sumin h(x_i;\theta)\right|\geq t\right\}\leq 2 e^{-C n \min(\frac{t}{K},\frac{t^2}{K^2})}.
\]
Taking the union bound over $\Theta_{\varepsilon}$ leads to
\[
\PP\left\{\sup_{\theta\in \Theta_\varepsilon} \left|\fn \sumin h(x_i;\theta)\right|>t\right\}\leq |\Theta_{\varepsilon}| 2e^{-C n \min(\frac{t}{K},\frac{t^2}{K^2})}.
\]
Since $|\Theta_\epsilon|=N_\varepsilon$, the above implies \wp $1-\delta$ that
\[
  \sup_{\theta\in \Theta_\varepsilon} |\fn \sumin h(x_i;\theta)|\lesssim K \max\left(\frac{\log(N_\varepsilon/\delta)}{n}, \sqrt{\frac{\log(N_\varepsilon/\delta)}{n}}\right).
\]
Substituting it into \eqref{eqn: app-123} completes the proof.
\end{proof}

\begin{lemma}\label{lemma: lip-subgaussian}
Let $Z$ be a mean-zero and sub-Gaussian random variable. Assume that $q$ is $L_q$-Lipschitz and $q(c)=0$ for some $c$. Then, $q(Z)$ is also sub-Gaussian with $\|q(Z)\|_{\psi_2}\lesssim L_q(\|Z\|_{\psi_2}+|c|)$.
\end{lemma}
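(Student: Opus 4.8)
The plan is to reduce the statement to three elementary facts about the $\psi_2$-norm of Definition~\ref{definition: orlic-norm}: a pointwise domination coming from the Lipschitz hypothesis, the monotonicity property from Lemma~\ref{lemma: pro-orlic-norm}, and the fact that $\|\cdot\|_{\psi_2}$ is a norm (subadditive and positively homogeneous).

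First I would observe that since $q$ is $L_q$-Lipschitz and $q(c)=0$, every realization of $Z$ satisfies $|q(Z)| = |q(Z)-q(c)| \le L_q|Z-c| \le L_q(|Z|+|c|)$. Applying the monotonicity property from Lemma~\ref{lemma: pro-orlic-norm} (if $|X|\le|Y|$ almost surely then $\|X\|_{\psi_2}\le\|Y\|_{\psi_2}$) then gives $\|q(Z)\|_{\psi_2} \le L_q\,\|\,|Z|+|c|\,\|_{\psi_2}$. Next, using that $\|\cdot\|_{\psi_2}$ is a norm — its subadditivity follows from the convexity of $x\mapsto e^{x^2}-1$ together with $\psi_2(0)=0$ by the standard Orlicz-norm argument, and can also be cited from \citet{vershynin2018high} — I would split $\|\,|Z|+|c|\,\|_{\psi_2} \le \|\,|Z|\,\|_{\psi_2} + \|\,|c|\,\|_{\psi_2} = \|Z\|_{\psi_2} + \|c\|_{\psi_2}$, the last equality because the Orlicz norm depends only on the absolute value of its argument.

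It then remains to evaluate the $\psi_2$-norm of the deterministic constant $c$: solving $\EE[e^{|c|^2/t^2}-1] = e^{c^2/t^2}-1 \le 1$ gives $\|c\|_{\psi_2} = |c|/\sqrt{\ln 2} \lesssim |c|$. Chaining the three bounds yields $\|q(Z)\|_{\psi_2} \le L_q\bigl(\|Z\|_{\psi_2} + |c|/\sqrt{\ln 2}\bigr) \lesssim L_q(\|Z\|_{\psi_2}+|c|)$, which is the claim; note that the mean-zero hypothesis on $Z$ is never used and is presumably stated only to match the context in which the lemma is invoked. I do not anticipate any genuine obstacle here: the only step deserving a line of justification is the triangle inequality for the Orlicz norm, and everything else is a one-line computation.
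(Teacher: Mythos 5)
Your argument is correct and matches the paper's proof in all essentials: both reduce to the Lipschitz domination $|q(Z)| \le L_q|Z-c|$ and then invoke the triangle inequality for the $\psi_2$-norm together with $\|c\|_{\psi_2}\lesssim|c|$. The only cosmetic difference is that you split $|Z-c|\le|Z|+|c|$ pointwise and then apply Orlicz-norm monotonicity, whereas the paper first bounds $\|q(Z)\|_{\psi_2}\le L_q\|Z-c\|_{\psi_2}$ by checking the defining exponential-moment inequality and then splits; the ingredients and conclusion are identical, and your remark that the mean-zero hypothesis is unused is also accurate.
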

\begin{proof}
By the property of sub-Gaussian random variable, we have $\EE[e^{Z^2/\|Z\|_{\psi_2}^2}]\leq 2$. Then, 
\begin{align*}
\EE[e^{q(Z)^2/(L_q^2\|Z-c\|_{\Psi_2}^2)}] &= \EE[e^{(q(Z)-q(c))^2/(L_q^2\|Z-c\|_{\Psi_2}^2)}]\\
&\leq \EE[e^{L_q^2|Z-c|^2/(L_q^2\|Z-c\|_{\Psi_2}^2)}] = \EE[e^{|Z-c|^2/\|Z-c\|_{\Psi_2}^2}]\leq 2.
\end{align*}
Hence, we have  $\|q(Z)\|_{\psi_2}\leq L_q\|Z-c\|_{\psi_2}\lesssim L_q(\|Z\|_{\psi_2}+|c|)$, where the last inequality is due to  that $\|\cdot\|_{\psi_2}$ is a norm.
\end{proof}

\begin{lemma}\label{lemma: sub-exp-activated-rv}
Let $p:\RR\mapsto\RR$ be $L_p$-Lipschitz and $p(0)=0$ and $q: \RR\mapsto\RR$ be $L_q$-Lipschitz and $q(0)=0$. Let $Z_{u,v} = p(u^TX)q(v^TX)-\EE[p(u^TX)q(v^TX)]$. Then, for any $u,v\in\SS^{d-1}$, $Z_{u,v}(X)$ is a mean-zero and satisfies 
\begin{align*}
\|Z_{u,v}(X)\|_{\psi_1}&\lesssim L_p L_q\|X\|_{\psi_2}^2\\
|Z_{u,v}(X)-Z_{u',v'}(X)|&\lesssim L_pL_q\|X\|^2(\|u-u'\|+\|v-v'\|).
\end{align*}
\end{lemma}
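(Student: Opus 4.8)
\textbf{Proof plan for Lemma \ref{lemma: sub-exp-activated-rv}.}

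The plan is to verify the three claims in order: mean-zero, the $\psi_1$-norm bound, and the deterministic Lipschitz-type bound. The mean-zero property is immediate from the definition of $Z_{u,v}$, since we subtract the expectation. For the $\psi_1$-bound, the key observation is that $p$ is $L_p$-Lipschitz with $p(0)=0$, so $|p(u^TX)|\leq L_p|u^TX|$ pointwise; since $u\in\SS^{d-1}$ and $X$ is sub-Gaussian with $\|X\|_{\psi_2}\lesssim 1$ (under Assumption \ref{assumption: input-1}), we get $\|p(u^TX)\|_{\psi_2}\lesssim L_p\|X\|_{\psi_2}$ using Lemma \ref{lemma: pro-orlic-norm} (monotonicity of Orlicz norms under pointwise domination). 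The analogous bound holds for $q(v^TX)$. Then the product $p(u^TX)q(v^TX)$ is sub-exponential with $\psi_1$-norm at most $\|p(u^TX)\|_{\psi_2}\|q(v^TX)\|_{\psi_2}\lesssim L_pL_q\|X\|_{\psi_2}^2$ by the product property in Lemma \ref{lemma: pro-orlic-norm}. Finally, centering only costs a constant factor by the centering inequality \eqref{eqn: centering-inequality}, so $\|Z_{u,v}(X)\|_{\psi_1}\lesssim L_pL_q\|X\|_{\psi_2}^2$, as claimed. (One could also invoke Lemma \ref{lemma: lip-subgaussian} with $c=0$ directly, but the pointwise domination route is cleaner here since $p(0)=q(0)=0$.)

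For the third claim, I would bound the increment $Z_{u,v}(X)-Z_{u',v'}(X)$ pointwise. The centering constants cancel up to a term that is itself controlled, so it suffices to handle $p(u^TX)q(v^TX)-p(u'^TX)q(v'^TX)$. I would split this via the standard telescoping
\[
p(u^TX)q(v^TX)-p(u'^TX)q(v'^TX) = \bigl(p(u^TX)-p(u'^TX)\bigr)q(v^TX) + p(u'^TX)\bigl(q(v^TX)-q(v'^TX)\bigr).
\]
Using $|p(u^TX)-p(u'^TX)|\leq L_p|(u-u')^TX|\leq L_p\|u-u'\|\,\|X\|$, $|q(v^TX)|\leq L_q\|X\|$, $|p(u'^TX)|\leq L_p\|X\|$, and $|q(v^TX)-q(v'^TX)|\leq L_q\|v-v'\|\,\|X\|$, each term is bounded by $L_pL_q\|X\|^2$ times $\|u-u'\|$ or $\|v-v'\|$ respectively. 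Summing gives $|p(u^TX)q(v^TX)-p(u'^TX)q(v'^TX)|\lesssim L_pL_q\|X\|^2(\|u-u'\|+\|v-v'\|)$; the same bound transfers to the centered quantities because the difference of expectations is bounded by the expectation of the pointwise difference bound, which is of the same order (and in any case, for the covering-number argument in Lemma \ref{lemma: covering-concentration} one wants a bound on the modulus of continuity of the non-centered part, which is exactly what the display gives).

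I do not anticipate a genuine obstacle here; the lemma is a routine concatenation of Orlicz-norm calculus and Lipschitz telescoping. The only mild care needed is in deciding whether the second bound is meant to be deterministic (in terms of the random variable $\|X\|$) or an $L^2$/$\psi_2$-type bound — reading the statement, it is the deterministic pointwise bound, which is exactly what the telescoping above delivers, so no concentration is invoked in this lemma. The slightly delicate bookkeeping is making sure the centering does not spoil the modulus-of-continuity estimate, but since $|\EE[A]-\EE[B]|\leq\EE|A-B|$ this is immediate.
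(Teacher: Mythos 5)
Your proposal is correct and follows essentially the same route as the paper's proof: center via the centering inequality, bound the $\psi_1$-norm of the product by the product of $\psi_2$-norms, and obtain the modulus of continuity by the standard two-term telescoping of $p(u^TX)q(v^TX)-p(u'^TX)q(v'^TX)$. The only stylistic deviations — using pointwise domination $|p(u^TX)|\leq L_p|u^TX|$ in place of Lemma~\ref{lemma: lip-subgaussian} with $c=0$, and swapping $u$ before $v$ in the telescoping rather than $v$ before $u$ — are immaterial, and your explicit remark on why centering does not spoil the Lipschitz estimate is a small clarification that the paper leaves to "analogously."
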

\begin{proof}
We first have
\begin{align*}
\|Z_{u,v}(X)\|_{\psi_1}&\leq C \|p(u^TX)q(v^TX)\|_{\psi_1}\leq C\|p(u^TX)\|_{\psi_2}\|q(v^TX)\|_{\psi_2}\\
&\leq CL_pL_q\|u^TX\|_{\psi_2} \|v^TX\|_{\psi_2}\leq CL_pL_q\|X\|_{\psi_2}^2,
\end{align*}
where the first step follows from the centering inequality \eqref{eqn: centering-inequality}; the second step is due to Lemma \ref{lemma: pro-orlic-norm}; the third inequality follows from Lemma \ref{lemma: lip-subgaussian}.

Let $J_{u,v}=p(u^TX)q(v^TX)$. Then, 
\begin{align*}
|J_{u,v}-J_{u',v'}| &= |J_{u,v}-J_{u,v'}| + |J_{u,v'}-J_{u',v'}|\\
&=|p(u^TX)||q(v^TX)-q(v'\cdot X)| + |q(v'\cdot X)||p(u^TX)-p(u'\cdot X)|\\
&\leq C L_p L_q\|X\|^2(\|v-v'\|+\|u-u'\|).
\end{align*}
Analogously, we can prove that $Z_{u,v}(X)$ satisfies the same Lipschitz condition.
\end{proof}

Now we are ready to bound the difference between population kernels and the corresponding empirical kernels.
\begin{lemma}\label{lemma: empirical-kernel}
Suppose $n\gtrsim d$.
For any $\delta\in (0,1)$,  \wp at least $1-\delta$, it holds for any $i=1,2$ that
\[
  \sup_{u,v\in\SS^{d-1}}|\varphi_i(u^Tv)  -\hph_i(u,v)| \lesssim  \min\left\{\sqrt{\frac{d\log(n/\delta)}{n}}, \frac{d\log(n/\delta)}{n}\right\}=:r_n.
\]
\end{lemma}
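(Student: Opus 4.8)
\textbf{Proof plan for Lemma~\ref{lemma: empirical-kernel}.}
The plan is to apply the uniform concentration bound of Lemma~\ref{lemma: covering-concentration} separately to each kernel $\hph_i$, with the parameter space $\Theta = \Omega = \SS^{d-1}\otimes\SS^{d-1}$ and the random function $h((u,v);x) = \sigma^{(i-1)}(u^Tx)\sigma^{(i-1)}(v^Tx) - \varphi_i(u^Tv)$, where $\sigma^{(0)}=\sigma$ and $\sigma^{(1)}=\sigma'$. So I would first check the three hypotheses of Lemma~\ref{lemma: covering-concentration}: (a) $h$ is mean zero by construction; (b) the sub-exponential norm $\|h((u,v);X)\|_{\psi_1}$ is bounded by an absolute constant $K$; and (c) control the modulus of continuity $\omega$ and the covering number $N_\varepsilon$ of $\Omega$.

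For the sub-exponential bound (b), I would invoke Lemma~\ref{lemma: sub-exp-activated-rv} with $p = q = \sigma$ (resp.\ $\sigma'$): since $\sigma(0) = 0$ and $\sigma$ is $1$-Lipschitz, and $\sigma'$ is $1$-Lipschitz but $\sigma'(0)$ need not be $0$ — here I note $\sigma'$ is bounded so one can still apply the argument, writing $\sigma'(t) = \sigma'(0) + (\sigma'(t)-\sigma'(0))$ and using that $\|X\|_{\psi_2}\lesssim 1$ under $\rho = \unif(\sqrt d\,\SS^{d-1})$ — to conclude $\|h((u,v);X)\|_{\psi_1}\lesssim \|X\|_{\psi_2}^2\lesssim 1 =: K$. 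For the Lipschitz/continuity bound, Lemma~\ref{lemma: sub-exp-activated-rv} also gives $|h((u,v);x) - h((u',v');x)|\lesssim \|x\|^2(\|u-u'\| + \|v-v'\|)$; since $\|x\| = \sqrt d$ almost surely under this $\rho$, this is $\lesssim d(\|u-u'\|+\|v-v'\|) = d\,\|(u,v)-(u',v')\|_\Omega$, so we may take $\omega(\varepsilon) = Cd\varepsilon$. By Lemma~\ref{lemma: 2sphere}, $N_\varepsilon = \cN(\Omega,\|\cdot\|_\Omega,\varepsilon)\leq (6/\varepsilon)^{2d}$, hence $\log(N_\varepsilon/\delta)\lesssim d\log(1/\varepsilon) + \log(1/\delta)$.

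Plugging these into Lemma~\ref{lemma: covering-concentration} yields, with probability at least $1-\delta$,
\[
\sup_{u,v\in\SS^{d-1}}|\hph_i(u,v)-\varphi_i(u^Tv)| \lesssim d\varepsilon + \max\left(\frac{d\log(1/\varepsilon)+\log(1/\delta)}{n},\ \sqrt{\frac{d\log(1/\varepsilon)+\log(1/\delta)}{n}}\right).
\]
The final step is to optimize over $\varepsilon$: choosing $\varepsilon \sim 1/n$ (so that $d\varepsilon \lesssim d/n$ is negligible compared to the other terms and $\log(1/\varepsilon) = \log n$), and using the assumption $n\gtrsim d$ together with $\log(1/\delta)\lesssim d\log(n/\delta)$, the bound collapses to $\lesssim \sqrt{d\log(n/\delta)/n}$ in the regime $d\log(n/\delta)\leq n$ and to $\lesssim d\log(n/\delta)/n$ otherwise, which is exactly $\min\{\sqrt{d\log(n/\delta)/n},\, d\log(n/\delta)/n\}$ — though really the $\min$ here just records that the $\sqrt{\cdot}$ term dominates when the argument is $\le 1$. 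I expect the main obstacle to be the bookkeeping around $\sigma'$: it is not zero at the origin, so one has to handle the constant term carefully (either by the decomposition above or by noting $\sigma'$ is globally bounded, which makes $\sigma'(u^Tx)$ trivially sub-Gaussian), and then make sure the product-of-two-factors sub-exponential estimate still goes through cleanly; everything else is a routine application of the covering/Bernstein machinery already set up.
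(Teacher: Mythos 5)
Your plan is correct and matches the paper for the $\varphi_1$ case, but it breaks down for $\varphi_2$, and the failure is more serious than the ``bookkeeping'' you flagged. You write that ``$\sigma'$ is $1$-Lipschitz but $\sigma'(0)$ need not be $0$'' and that the only obstacle is the nonzero constant. That is not so: $\sigma'(t)=\mathbf{1}[t\geq 0]$ is the Heaviside step function, which is \emph{discontinuous} at $0$ and has no finite modulus of continuity. Consequently Lemma~\ref{lemma: sub-exp-activated-rv} is simply not applicable with $p=q=\sigma'$, and the pointwise bound $|h((u,v);x)-h((u',v');x)|\lesssim\|x\|^2\|(u,v)-(u',v')\|_\Omega$ that you want to feed into Lemma~\ref{lemma: covering-concentration} as $\omega(\varepsilon)=Cd\varepsilon$ is false: for fixed $x$, moving $u$ an arbitrarily small amount can flip $\sigma'(u^Tx)$ from $0$ to $1$. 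The sub-exponential norm bound does survive (as you note, boundedness of $\sigma'$ gives $\|h\|_{\psi_1}\lesssim 1$), but the $\varepsilon$-net step of the covering argument collapses without a continuity modulus, so your proposal for $i=2$ has no proof.

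The paper circumvents this by smoothing. It sandwiches $H=\sigma'$ between two $\frac{1}{\beta}$-Lipschitz ramps $H_\beta^-\leq H\leq H_\beta^+$, shows the smoothed population kernels $\varphi_{2,\beta}^\pm$ differ from $\varphi_2$ by $O(\beta)$ (using that the marginal density of $u^TX$ under $\unif(\sqrt d\,\SS^{d-1})$ is bounded), runs your covering/Bernstein argument on the smoothed kernels (which \emph{are} Lipschitz, with modulus $\sqrt d/\beta$), and finally optimizes over $\beta$. Choosing $\beta\sim(\sqrt d\,\varepsilon)^{1/2}$ and then $\varepsilon=1/n$ with $n\gtrsim d$ makes the smoothing error subordinate to the Bernstein term, recovering $r_n$. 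This extra layer is precisely what your plan is missing for $\hph_2$; you should either add it or find an alternative VC-type argument that handles half-space indicator classes directly.
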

\begin{proof}
Recall $\Omega=\SS^{d-1}\otimes \SS^{d-1}$ and let $\|(u,v)-(u',v')\|_{\Omega} = \|u-u'\| + \|v-v'\|$ for any $(u,v),(u',v')\in \Omega$. 

\underline{\textbf{The case of $\varphi_1$.} }
Let $h(x;\theta)=\sigma(u^Tx)\sigma(v^Tx)$.
By Lemma \ref{lemma: sub-exp-activated-rv} and noticing that $\sigma$ is Lipschitz continuous, we have 
\begin{equation}\label{eqn: 60}
\begin{aligned}
\|h(X;\theta)\|_{\psi_1}&\lesssim 1\\ 
   | h(X;\theta_1) - h(X;\theta_2)|&\lesssim \|X\|^2 \|\theta_1-\theta_2\|_{\Omega} \lesssim d \|\theta_1-\theta_2\|_{\Omega}.
\end{aligned}
\end{equation}
By Lemma \ref{lemma: 2sphere},  $N_\epsilon \leq (6/\epsilon)^{2d}$. Then applying Lemma \ref{lemma: covering-concentration} gives \wp at least $1-\delta$ it holds that 
\begin{align*}
    \sup_{u,v\in \SS^{d-1}}|\hph_1(u,v) - \varphi_1(u,v)|&\lesssim d\epsilon + \max\left\{\frac{\log (N_\epsilon/\delta)}{n}, \sqrt{\frac{\log(N_\epsilon/\delta)}{n}}\right\} \\ 
    &\lesssim \frac{d}{n} + \max\left\{\frac{d\log(n/\delta)}{n},\sqrt{\frac{d\log(n/\delta)}{n}}\right\}\leq 2r_n,
\end{align*}
where we take $\epsilon=1/n$.

\underline{\textbf{The case of $\varphi_2$.}} For $\varphi_2$, the major challenge  comes the discontinuity of $\sigma'(\cdot)$. Fortunately, the concentration is still possible since $\sigma'$ is discontinuous only at the origin. Note that $\sigma'(\cdot)$ is exactly the Heaviside step function and hence, we will write $H(t)=\sigma'(t)$ for convenience. 
\begin{itemize}
\item \textbf{Step 1: smoothing the kernels.} Define two smoothed Heaviside step functions:
\[
  H_{\beta}^{-}(t) = \begin{cases}
  1 & \text{ if } t\geq \beta \\
 \frac{1}{\beta} t  &\text{ if } 0 \leq t\leq \beta\\
  0 & \text{ if } t< 0.
  \end{cases}
  \qquad 
    H_{\beta}^{+}(t) = \begin{cases}
  1 & \text{ if } t\geq 0 \\
  \frac{1}{\beta} t +1 &\text{ if } -\beta \leq t\leq 0\\
  0 & \text{ if } t< -\beta,
  \end{cases}
\]
where $\beta\in \RR_{+}$ control the degree of smoothing. 
Then, we have that $H_{\beta}^{+}, H_{\beta}^{-1}$  are both $\frac{1}{\beta}$-Lipschitz and $0\leq H_{\beta}^{-}(t)\leq H(t)\leq H_{\beta}^{+}(t)\leq 1$. An illustration of these three functions are provided in Figure \ref{fig: app-smooth-relu}.
\begin{figure}[!h]
\centering
\includegraphics[width=0.28\textwidth]{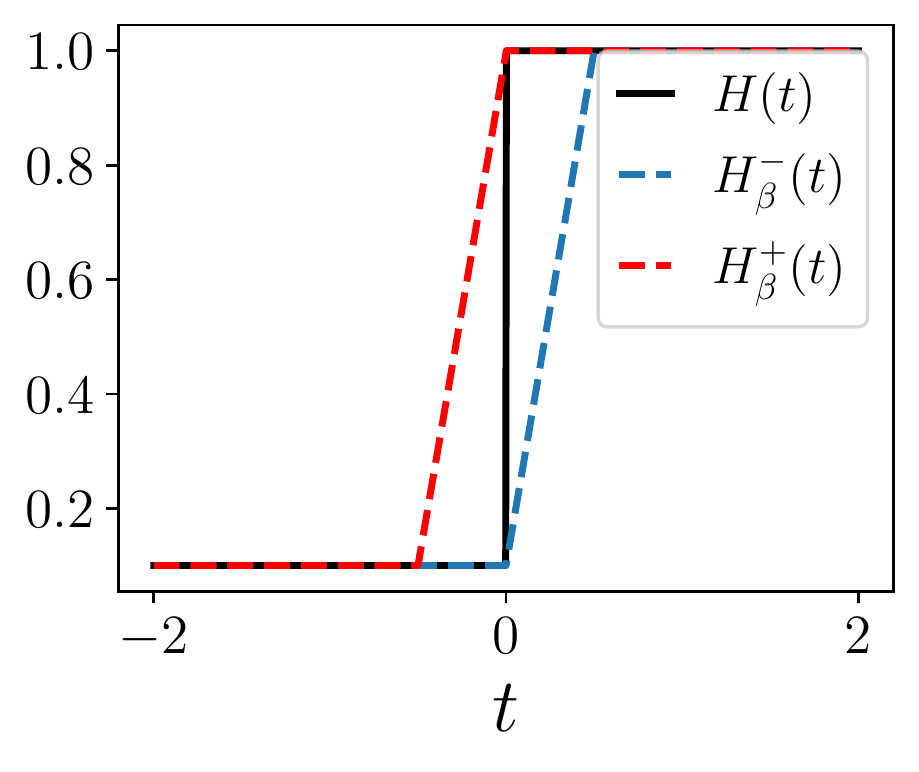}
\vspace*{-1em}
\caption{\small An visual comparison among $H,H_{\beta}^{+}$, and $H_{\beta}^{-}$.}
\label{fig: app-smooth-relu}
\end{figure}

Correspondingly, define 
\[
  \varphi_{2,\beta}^{+}(u^Tv) = \EE_{x}[H_{\beta}^{+}(u^Tx)H_{\beta}^{+}(v^Tx)],\quad \varphi_{2,\beta}^{-}(u^Tv) = \EE_{x}[H_{\beta}^{-}(u^Tx)H_{\beta}^{-}(v^Tx)].
\]
We first have 
\begin{align*}
|\varphi_{2,\beta}^{+}(u^Tv) - \varphi_{2}(u^Tv)| &= |\EE_{x}[H_{\beta}^{+}(u^Tx)H_{\beta}^{+}(v^Tx)] - \EE_{x}[H(u^Tx)H(v^Tx)]|\\
&\leq |\EE_{x}[H_{\beta}^{+}(u^Tx)(H_{\beta}^{+}(v^Tx)-H(v^Tx))]+ \EE_{x}[(H_{\beta}^{+}(u^Tx)-H(u^Tx))H(v^Tx)]|\\
&\stackrel{(i)}{\leq}\EE_{x}|H_{\beta}^{+}(u^Tx)-H(u^Tx)|+ \EE_{x}|H_{\beta}^{+}(v^Tx)-H(v^Tx)|  \\ 
&\stackrel{(ii)}{=} 2\int_{0}^\beta (1-\frac{t}{\beta})p_{d}(t)\dd t \stackrel{(iii)}{\lesssim} \beta,
\end{align*}
where $(i)$ follows from  the boundedness of $H,H_{\beta}^{+}$, $(ii)$  follows from the the fact that the input distribution is $\rho=\text{Unif}(\sqrt{d}\SS^{d-1})$ and $p_d(\cdot)$ denotes the distribution of $X_1$ for $X\sim \unif(\sqrt{d}\SS^{d-1})$. 

Similarly, we can obtain 
\begin{equation}\label{eqn: 76}
\sup_{u,v\in\SS^{d-1}}|\varphi_{2,\beta}^{-}(u^Tv) - \varphi_{2}(u^Tv)|\lesssim \beta.
\end{equation}

\item \textbf{Step 2: concentration through smoothing.} Let $h_\beta^{-}(x;\theta)=H_\beta^{-}(x^Tu)H_\beta^{-}(x^Tv)$. Note that for any $x\in \sqrt{d}\SS^{d-1}$, we have
\begin{align*}
|h_\beta^{-}(x;\theta_1) - h_\beta^{-}(x;\theta')| &\leq |H_\beta^{-}(x^Tu)H_\beta^{-}(x^Tv)-H_\beta^{-}(x^Tu)H_\beta^{-}(x^Tv')| + |H_\beta^{-}(x^Tu)H_\beta^{-}(x^Tv')- H_\beta^{-}(x^Tu')H_\beta^{-}(x^Tv')|\\ 
&\leq |H_\beta^{-}(x^Tv)-H_\beta^{-}(x^Tv')| + |H_\beta^{-}(x^Tu)- H_\beta^{-}(x^Tu')|\\ 
&\leq \frac{1}{\beta}(|x^Tv-x^Tv'| + |x^Tu-x^Tu'|)\leq\frac{\sqrt{d}}{\beta}(\|v-v'\|+\|u-u'\|).
\end{align*}
Hence, $h(x;\cdot)$ is $\frac{\sqrt{d}}{\beta}$-Lipschitz  in $(\Omega, \|\cdot\|_{\Omega})$. In addition, since $\sup_t|H_\beta^-(t)|\leq 1$, we have  $\|h(X;\theta)\|_{\psi_1}\lesssim 1$ for any $\theta\in\Omega$. Then, applying Lemma \ref{lemma: covering-concentration} gives that  \wp at least $1-\delta$ it holds that
\[
  \sup_{u,v\in\SS^{d-1}}|\fn\sumin H_\beta^{-}(u^Tx)H_\beta^{-}(v^Tx) - \varphi_{2,\beta}^{-}(u^Tv)|\lesssim \frac{\sqrt{d}\epsilon}{\beta} + \max\left\{\sqrt{\frac{d\log(1/(\epsilon \delta))}{n}},\frac{d\log(1/(\epsilon \delta))}{n}\right\}
\]
By $H(t)\geq H_\beta^-(t)$ and the above inequality, we have
\begin{align*}
\hph_2(u^Tv) &= \fn\sumin H(u^Tx_i)H(v^Tx_i)\geq \fn\sumin H_\beta^{-}(u^Tx_i)H_\beta^{-}(v^Tx_i)\\
&\geq \varphi_{2,\beta}^{-}(u^Tv) - C\left(\frac{\sqrt{d}\varepsilon }{\beta}+\max\left\{\sqrt{\frac{d\log(1/(\epsilon \delta))}{n}},\frac{d\log(1/(\epsilon \delta))}{n}\right\}\right)\\
&\geq \varphi_{2}(u^Tv) - C\left(\beta + \frac{\sqrt{d}\varepsilon }{\beta}+\max\left\{\sqrt{\frac{d\log(1/(\epsilon \delta))}{n}},\frac{d\log(1/(\epsilon \delta))}{n}\right\}\right),
\end{align*}
where the last step uses \eqref{eqn: 76}. Optimizing $\beta$ gives 
\[
  \hph_2(u^Tv)\geq \varphi_{2}(u^Tv) - C \left((\sqrt{d}\varepsilon)^{1/2}+ \max\left\{\sqrt{\frac{d\log(1/(\epsilon \delta))}{n}},\frac{d\log(1/(\epsilon \delta))}{n}\right\}\right).
\]
Taking $\varepsilon=1/n$ and applying $n\geq d$, the above inequality can be simplified as 
\begin{equation}\label{eqn: app-2}
  \hph_2(u^Tv)\geq \varphi_{2}(u^Tv) - C\max\left\{\sqrt{\frac{d\log(n/\delta)}{n}},\frac{d\log(n/\delta)}{n}\right\}.
\end{equation}

Similarly, by utilizing $H_{\beta}^{+}$ and $\varphi_{2,\beta}^{+}$, we can prove 
\begin{align}\label{eqn: app-3}
\hph_2(u^Tv)\leq \varphi_{2}(u^Tv) + C\max\left\{\sqrt{\frac{d\log(n/\delta)}{n}},\frac{d\log(n/\delta)}{n}\right\}.
\end{align}
Combining \eqref{eqn: app-2} and \eqref{eqn: app-3}, we complete the proof.

\end{itemize}

\end{proof}


\subsection{Proof of Theorem \ref{thm: sharpness-relu-landscape}}
\label{sec: proof-2lnn-sharpness}

\paragraph*{The expression of Fisher matrix.}
Notice that 
$
  \nabla_{w_j} f(x;\theta) = a_j\sigma'(w_j^Tx)x,\nabla_{a_j} f(x;\theta) = \sigma(w_j^Tx). 
$
Then the Fisher matrix is given by 
\begin{equation}\label{eqn: hessian-2lnn}
    G(\theta) = \begin{pmatrix}
    F_{1,1} &F_{1,2} &\dots & F_{1,m}\\
    F_{2,1} & F_{2,2}& \dots& F_{2, m}\\
    \vdots &\vdots &\ddots & \vdots \\
    F_{m,1} & F_{m,2}& \hdots& F_{m,m} 
    \end{pmatrix}\in \RR^{m (d+1)\times m(d+1)},
\end{equation}
where for any $j,k\in [m]$ the submatrix $F_{j,k}\in\RR^{(d+1)\times (d+1)}$ is given by 
\begin{equation}\label{eqn: Hjk}
    F_{j,k} = \begin{pmatrix}
    \hEE[\sigma(w_j^Tx)\sigma(w_k^Tx)] & \hEE[\sigma(w_j^Tx)a_k\sigma'(w_k^Tx)x^T]\\
    \hEE[\sigma(w_k^Tx)a_j\sigma'(w_j^Tx)x] & a_j a_k \hEE[\sigma'(w_j^Tx)\sigma'(w_k^Tx)xx^T]
    \end{pmatrix}.
\end{equation}

\paragraph*{Proof of Theorem  \ref{thm: sharpness-relu-landscape}.} We consider the case of trace, Frobenius norm, and the spectral norm separately. Specifically, combining Proposition \ref{pro: 2lnn-trace}, \ref{pro: 2lnn-spectral-norm}, and \ref{pro: 2lnn-fro}, we complete the proof.  The proofs of these propositions are provided in the subsequent sections.

\subsubsection{The trace of Fisher matrix}

\begin{proposition}[The trace]\label{pro: 2lnn-trace}
Recall that $N(d,\delta):=\inf\{n: d\log(1/\delta)/n\leq 1\}$.
For any $\delta\in (0,1)$, let $n\geq N(d,\delta)$. Then, \wp $1-\delta$, we have, $\tr(G(\theta))\sim \sum_j (\|w_j\|^2 + da_j^2)$. 
\end{proposition}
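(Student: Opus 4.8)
\textbf{Proof proposal for Proposition \ref{pro: 2lnn-trace}.}

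The plan is to compute $\tr(G(\theta))$ directly from the block structure \eqref{eqn: hessian-2lnn}--\eqref{eqn: Hjk} and then show that the resulting empirical quantities concentrate around their population counterparts, which by Lemma \ref{lemma: kernel-property} are comparable to $\sum_j(\|w_j\|^2+da_j^2)$. First I would note that the trace only involves the diagonal blocks $F_{j,j}$, so
\[
  \tr(G(\theta)) = \sum_{j=1}^m \left( \hEE[\sigma(w_j^Tx)^2] + a_j^2 \hEE[\sigma'(w_j^Tx)^2 \|x\|^2] \right).
\]
Since the input distribution is $\rho = \unif(\sqrt d\,\SS^{d-1})$, we have $\|x\|^2 = d$ deterministically, so the second term is exactly $d\, a_j^2\, \hph_2(w_j,w_j)$ and the first term is $\hph_1(w_j,w_j)$. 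Thus $\tr(G(\theta)) = \sum_j \bigl(\hph_1(w_j,w_j) + d\,a_j^2\,\hph_2(w_j,w_j)\bigr)$.

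Next I would invoke the kernel concentration Lemma \ref{lemma: empirical-kernel}: with $n\gtrsim d$ (which is implied by $n\geq N(d,\delta)$ up to the logarithmic factor I will need to track carefully — this is the one place the hypothesis must be checked against the error term $r_n$), uniformly over $u,v\in\SS^{d-1}$ we have $|\hph_i(u,v)-\varphi_i(u,v)|\lesssim r_n \leq$ a small constant, provided $n\gtrsim N(d,\delta)$ up to logs. Using positive homogeneity, $\hph_1(w_j,w_j) = \|w_j\|^2\,\hph_1(\hat w_j,\hat w_j)$ and $\hph_2(w_j,w_j) = \hph_2(\hat w_j,\hat w_j)$, so on the good event $\hph_1(w_j,w_j)\sim \|w_j\|^2$ and $\hph_2(w_j,w_j)\sim 1$ by the third bullet of Lemma \ref{lemma: kernel-property} ($\varphi_i(u,v)\sim 1$ on the sphere). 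Summing over $j$ gives $\tr(G(\theta))\sim \sum_j(\|w_j\|^2 + d a_j^2) = \|\theta\|_{2,d}$, which is the claim.

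The main obstacle, I expect, is purely bookkeeping rather than conceptual: making the threshold $n\geq N(d,\delta)$ actually suffice. Lemma \ref{lemma: empirical-kernel} gives an error $r_n = \min\{\sqrt{d\log(n/\delta)/n},\, d\log(n/\delta)/n\}$, which is not automatically $o(1)$ when merely $d\log(1/\delta)/n\leq 1$ because of the extra $\log n$ factor; one must argue that absorbing this logarithmic slack only changes constants, or restate with $n\gtrsim N(d,\delta)$ as the theorem in fact does (note the $\gtrsim$ in Theorem \ref{thm: sharpness-relu-landscape}). A secondary subtlety is that the concentration must hold \emph{uniformly} in $w_j$ — but this is exactly what Lemma \ref{lemma: empirical-kernel} provides via its covering-number argument, so no additional union bound over the $m$ neurons is needed, which is precisely why the bound is size-independent. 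The homogeneity reduction to unit vectors is what makes the uniform control over the sphere applicable regardless of the scale of $w_j$, and this is the key structural point that keeps $m$ out of the estimate.
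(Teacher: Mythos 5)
Your proposal follows essentially the same route as the paper's proof: express $\tr(G(\theta))$ via the diagonal blocks as $\sum_j\bigl(\|w_j\|^2\hph_1(\hw_j,\hw_j)+d\,a_j^2\hph_2(\hw_j,\hw_j)\bigr)$ (using $\|x\|^2=d$ and positive homogeneity of $\sigma$), then apply Lemma \ref{lemma: empirical-kernel} together with the third bullet of Lemma \ref{lemma: kernel-property} to conclude $\hph_i(\hw_j,\hw_j)\sim 1$ uniformly, and sum. Your added remarks — that the $\log n$ in $r_n$ means the threshold should really be read as $n\gtrsim N(d,\delta)$ (as Theorem \ref{thm: sharpness-relu-landscape} in fact states), and that the covering argument in Lemma \ref{lemma: empirical-kernel} already yields uniformity in $w_j$ so no union bound over the $m$ neurons is needed — are both correct and merely make explicit what the paper's terse "It is easy to show that" glosses over.
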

\begin{proof}
It is easy to show that 
\begin{align}\label{eqn: trace-emp}
\tr(G(\theta)) &=\sum_{j=1}^m \big(\|w_j\|^2 \hphi_1(\hw_j,\hw_j)+d a_j^2 \hphi_2(\hw_j,\hw_j)\big).
\end{align}
By Lemma \ref{lemma: empirical-kernel}, we have  
$
 \hph_i(u,u) \sim \varphi_i(u,u) - r_n\sim 1- o(1),
$
 where the last inequality is due to Lemma \ref{lemma: kernel-property} and the condition $n\geq N(d,\delta)$. Plugging this  into \eqref{eqn: trace-emp}, we complete the proof.
\end{proof}

\subsubsection{The Frobenius norm of Fisher matrix}
To help the estimate of Frobenius norm, we define for $u,v\in\SS^{d-1}$ that
\[
    b_{u,v} = \hEE[\sigma(u^Tx)\sigma'(v^Tx)x]\in\RR^d,\quad A_{u,v}=\hEE[\sigma'(u^Tx)\sigma'(v^Tx)xx^T]\in\RR^{d\times d}.
\]
Then by \eqref{eqn: hessian-2lnn}, we have 
\begin{align}\label{eqn: 61}
\|G(\theta)\|_F^2 &= \sum_{j,k=1}^m \left(\|w_j\|^2\|w_k\|^2\hph_1(\hw_j,\hw_k)^2 +a_j^2a_k^2 \|A_{\hw_j,\hw_k}\|_F^2 + a_j^2\|w_k\|^2 \|b_{\hw_k,\hw_j}\|^2_2 + a_k^2\|w_j\|^2 \|b_{\hw_j,\hw_k}\|^2_2\right).
\end{align}
Next, we  bound each term of the right hand side separately.
 
\begin{lemma}\label{lemma: buv}
For any $\delta\in (0,1)$, if $n\gtrsim N(d,\delta)$, \wp $1-\delta$ it holds that
$
  \sup_{u,v\in\SS^{d-1}}\|b_{u,v}\|_2\lesssim 1.
$
\end{lemma}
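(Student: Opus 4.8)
The plan is to reduce the bound on $\|b_{u,v}\|_2$ to a concentration statement, proved via an $\varepsilon$-net over $\SS^{d-1}\otimes\SS^{d-1}$, and then control the population counterpart directly using the Hermite machinery. First I would note that $b_{u,v} = \widehat{\EE}[\sigma(u^Tx)\sigma'(v^Tx)x] = \frac1n\sum_{i=1}^n \sigma(u^Tx_i)\sigma'(v^Tx_i)x_i$, and since we only care about its Euclidean norm, it suffices to bound $\sup_{u,v,\theta\in\SS^{d-1}} \widehat{\EE}[\sigma(u^Tx)\sigma'(v^Tx)\theta^Tx]$, i.e.\ I introduce a third direction $\theta$ and treat $h(x;u,v,\theta) := \sigma(u^Tx)\sigma'(v^Tx)\theta^Tx$. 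The population version $\EE[h(x;u,v,\theta)]$ is $O(1)$: with $\rho=\unif(\sqrt d\,\SS^{d-1})$ the coordinates of $x/\sqrt d$ behave like sub-Gaussian variables with $O(1/\sqrt d)$ variance, and $\sigma,\sigma'$ are bounded by $O(\sqrt d)$ and $O(1)$ respectively on the relevant scale; more cleanly, one can invoke Lemma~\ref{lemma: app-hermite} (after the standard Gaussian-vs-sphere comparison) to write $\EE[\sigma(u^Tx)\sigma'(v^Tx)\theta^Tx]$ as a convergent Hermite series in the pairwise inner products of $u,v,\theta$, whose coefficients are summable, giving an $O(1)$ bound uniformly.

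Next I would handle the deviation $\widehat{\EE}[h] - \EE[h]$ by the covering argument already packaged in Lemma~\ref{lemma: covering-concentration}. The summand $h(x;u,v,\theta)$ is, after centering, mean-zero and sub-exponential: writing it as the product $\sigma(u^Tx)\cdot(\sigma'(v^Tx)\theta^Tx)$ and applying Lemma~\ref{lemma: sub-exp-activated-rv} (or directly Lemma~\ref{lemma: pro-orlic-norm}, since a product of two sub-Gaussians is sub-exponential) gives $\|h(X;u,v,\theta)\|_{\psi_1}\lesssim 1$, using that $\sigma$ is $1$-Lipschitz with $\sigma(0)=0$, and that $\sigma'(v^Tx)\theta^Tx$ is sub-Gaussian because $|\sigma'(v^Tx)\theta^Tx|\le|\theta^Tx|$ which is sub-Gaussian by Assumption on $\rho$. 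The Lipschitz-in-parameters modulus is $\omega(\varepsilon)\lesssim \|x\|^2\varepsilon \lesssim d\varepsilon$ (three factors, each Lipschitz in its own direction with the other two bounded by $\|x\|$), and the covering number of $\SS^{d-1}\otimes\SS^{d-1}\otimes\SS^{d-1}$ at scale $\varepsilon$ is $\le(C/\varepsilon)^{3d}$. Plugging $\varepsilon = 1/n$ into Lemma~\ref{lemma: covering-concentration}, and using $n\gtrsim N(d,\delta)$ so that $d\log(n/\delta)/n = O(1)$, yields $\sup_{u,v,\theta}|\widehat{\EE}[h]-\EE[h]|\lesssim 1$ with probability $\ge 1-\delta$.

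Combining the two displays gives $\sup_{u,v}\|b_{u,v}\|_2 = \sup_{u,v,\theta}\widehat{\EE}[h(x;u,v,\theta)] \le \sup|\widehat{\EE}[h]-\EE[h]| + \sup|\EE[h]| \lesssim 1$, which is the claim. The main obstacle, as in the proof of Lemma~\ref{lemma: empirical-kernel}, is the discontinuity of $\sigma'$: the modulus-of-continuity argument in Lemma~\ref{lemma: covering-concentration} requires $h(\cdot;\theta)$ to be uniformly Lipschitz in $\theta$, but $\sigma'(v^Tx)$ is not continuous in $v$. I expect to resolve this exactly as in Step~1--Step~2 of Lemma~\ref{lemma: empirical-kernel}: sandwich $\sigma'=H$ between the $\tfrac1\beta$-Lipschitz smoothed Heaviside functions $H_\beta^\pm$, run the net argument on $h_\beta^{\pm}(x;u,v,\theta)=\sigma(u^Tx)H_\beta^{\pm}(v^Tx)\theta^Tx$ (now genuinely $\tfrac{\sqrt d}{\beta}$-Lipschitz in $v$ on the sphere $\sqrt d\,\SS^{d-1}$), pay an $O(\beta)$ smoothing error from $\EE|H_\beta^{\pm}(v^Tx)-H(v^Tx)|\lesssim\beta$, and optimize over $\beta$. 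Since we only need an $O(1)$ bound rather than a sharp rate, the optimization is lossy-tolerant and the argument goes through cleanly under $n\gtrsim N(d,\delta)$.
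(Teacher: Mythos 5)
Your route is correct in spirit but is considerably heavier machinery than the paper uses, and it has one genuine technical gap in the sandwich step.

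The paper's own proof is essentially one line. Writing $\|b_{u,v}\|=\sup_{\|w\|=1}w^Tb_{u,v}$, it bounds
$\big|\hEE[\sigma(u^Tx)\sigma'(v^Tx)w^Tx]\big|\le \hEE[\sigma(u^Tx)|w^Tx|]$ using $\sigma\ge 0$ and $0\le\sigma'\le 1$, and then applies Cauchy--Schwarz to get $\sqrt{\hEE[\sigma^2(u^Tx)]}\sqrt{\hEE[(w^Tx)^2]}=\sqrt{\hph_1(u,u)}\sqrt{w^T\hSigma_n w}\lesssim 1$. The two concentration facts needed here --- $\sup_u\hph_1(u,u)\lesssim 1$ and $\|\hSigma_n\|_2\lesssim 1$ --- are already available from Lemma~\ref{lemma: empirical-kernel} and Lemma~\ref{lemma: covariance-gaussian}; no new $\varepsilon$-net is run, and the discontinuity of $\sigma'$ never enters because it is simply bounded by $1$ and discarded. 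Your argument instead introduces a third direction $\theta$, runs a covering over $(\SS^{d-1})^{\otimes 3}$, and re-attacks the discontinuity of $\sigma'$ via smoothing, which is much more work for the same $O(1)$ bound.

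The gap: you propose to resolve the discontinuity ``exactly as in Step~1--Step~2 of Lemma~\ref{lemma: empirical-kernel},'' sandwiching $H_\beta^-\le H\le H_\beta^+$ and deducing $h_\beta^-\le h\le h_\beta^+$. That deduction fails here, because in $h(x;u,v,\theta)=\sigma(u^Tx)H(v^Tx)\theta^Tx$ the factor $\theta^Tx$ has no sign, so the inequality between the $H_\beta^{\pm}$'s does not transfer monotonically to the full integrand. In Lemma~\ref{lemma: empirical-kernel} this works because both factors are the nonnegative $\sigma'$; the right analogue is the proof of Lemma~\ref{lemma: yy}, where the paper first splits $\theta^Tx=\sigma(\theta^Tx)-\sigma(-\theta^Tx)$ so that each piece of the integrand is a product of nonnegative functions before sandwiching. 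You would need that additional decomposition to make your concentration step valid. A smaller point: Lemma~\ref{lemma: app-hermite} handles only two-way products $\EE[f(u^Tx)g(v^Tx)]$, so your appeal to ``a convergent Hermite series in the pairwise inner products of $u,v,\theta$'' is not directly supported; the elementary Cauchy--Schwarz bound $|\EE[\sigma(u^Tx)\sigma'(v^Tx)\theta^Tx]|\le\sqrt{\varphi_1(u,u)}\lesssim 1$ that you also mention is the cleaner way to control the population term. With these two fixes your argument does go through, but the paper's Cauchy--Schwarz shortcut avoids both issues entirely.
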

\begin{proof}
Note that for any $u,v\in\SS^{d-1}$, 
\begin{align}\label{eqn: 121}
\notag \|b_{u,v}\| &= \sup_{\|w\|=1}w^Tb_{u,v} = \sup_{\|w\|=1}\hEE[\sigma(u^Tx)\sigma'(v^Tx)w^Tx]\\
\notag &\lesssim \sup_{\|w\|=1}\hEE[\sigma(u^Tx)w^Tx]\leq \sup_{\|w\|=1}\sqrt{\hEE[\sigma^2(u^Tx)]}\sqrt{\hEE[|w^Tx|^2]} \qquad\qquad (\text{ Cauchy-Schwartz})\\ 
\notag &=\sup_{w\in\SS^{d-1}} \sqrt{\hph_1(u,u)}\sqrt{w^T\hSigma_nw} = \sqrt{\hph_1(u,u)}\lambda_{\max}(\hSigma_n)\lesssim 1,
\end{align}
where the last steps follows from Lemma \ref{lemma: empirical-kernel} and Lemma \ref{lemma: covariance-gaussian}.
\end{proof}

\begin{lemma}\label{lemma: Auv}
For any $\delta\in (0,1)$, if $n\gtrsim d + \log(1/\delta)$, then \wp $1-\delta$ it holds for any $u,v\in\SS^{d-1}$ that 
$$
\sqrt{d}\hphi_2(u,v)\leq \|A_{u,v}\|_F\lesssim \sqrt{d}.
$$
\end{lemma}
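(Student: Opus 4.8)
The goal is to control the Frobenius norm of the random matrix $A_{u,v}=\hEE[\sigma'(u^Tx)\sigma'(v^Tx)xx^T]$ uniformly over $u,v\in\SS^{d-1}$. The lower bound is essentially free: since $\sigma'(u^Tx)\sigma'(v^Tx)\in\{0,1\}$ and in particular $A_{u,v}\succeq 0$, we have $\|A_{u,v}\|_F\ge \tr(A_{u,v})/\sqrt{d}=\sqrt{d}\,\hEE[\sigma'(u^Tx)\sigma'(v^Tx)\|x\|^2/d]=\sqrt{d}\,\hph_2(u,v)$, using that $\|x\|^2=d$ on $\sqrt d\,\SS^{d-1}$. (Here I use $\tr(A)\le\sqrt d\,\|A\|_F$ for PSD $A\in\RR^{d\times d}$.) So the substance is the upper bound $\|A_{u,v}\|_F\lesssim\sqrt d$.

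For the upper bound, the plan is to compare $A_{u,v}$ with its population counterpart and then bound the population matrix directly. Write $\bar A_{u,v}=\EE[\sigma'(u^Tx)\sigma'(v^Tx)xx^T]$. Since the scalar prefactor is bounded by $1$, $\|\bar A_{u,v}\|_2\le\|\EE[xx^T]\|_2=\lambda_{\max}(\Sigma)\lesssim 1$, hence $\|\bar A_{u,v}\|_F\le\sqrt d\,\|\bar A_{u,v}\|_2\lesssim\sqrt d$. It remains to show $\sup_{u,v}\|A_{u,v}-\bar A_{u,v}\|_F\lesssim\sqrt d$ with high probability when $n\gtrsim d+\log(1/\delta)$. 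The natural route is a uniform concentration argument of the same flavor as Lemma \ref{lemma: empirical-kernel}: bound $\|A_{u,v}-\bar A_{u,v}\|_2$ (which is stronger, since $\|\cdot\|_F\le\sqrt d\|\cdot\|_2$) by covering $\SS^{d-1}\times\SS^{d-1}$ and applying a matrix concentration/$\varepsilon$-net argument. Concretely, for fixed $u,v$, each summand $\sigma'(u^Tx_i)\sigma'(v^Tx_i)x_ix_i^T$ is a PSD matrix dominated by $x_ix_i^T$; since $x$ is sub-Gaussian (with $\|x\|^2$ concentrated around $d$, using Lemma \ref{lemma: covariance-gaussian}-type control), one gets $\|A_{u,v}-\bar A_{u,v}\|_2\lesssim r'_n$ for a single pair via a standard covering bound on the unit sphere in $\RR^d$, with $r'_n$ of order $\sqrt{(d+\log(1/\delta))/n}$ when $n\gtrsim d+\log(1/\delta)$; then a union bound over an $\varepsilon$-net of $\SS^{d-1}\times\SS^{d-1}$ of size $(C/\varepsilon)^{2d}$ together with the $\sqrt d/\beta$-type Lipschitz smoothing for the discontinuous $\sigma'$ (exactly as in the proof of Lemma \ref{lemma: empirical-kernel}, splitting $\sigma'$ into $H_\beta^\pm$) upgrades this to a uniform bound at the cost of replacing $d$ by $Cd\log(n/\delta)$ inside the rate. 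Thus $\sup_{u,v}\|A_{u,v}-\bar A_{u,v}\|_2\lesssim\sqrt{d\log(n/\delta)/n}=o(1)$ under $n\gtrsim N(d,\delta)$, so $\sup_{u,v}\|A_{u,v}-\bar A_{u,v}\|_F\lesssim\sqrt d$, and combining with $\|\bar A_{u,v}\|_F\lesssim\sqrt d$ gives the claim.

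The main obstacle is handling the discontinuity of $\sigma'$ in the uniform (over $u,v$) part of the argument: pointwise concentration of $A_{u,v}$ is routine, but the $\varepsilon$-net step requires a modulus of continuity of $u\mapsto \sigma'(u^Tx)\sigma'(v^Tx)x x^T$, which fails near the hyperplanes $\{u^Tx=0\}$. This is resolved exactly as in Lemma \ref{lemma: empirical-kernel} by sandwiching $\sigma'=H$ between the $\tfrac1\beta$-Lipschitz surrogates $H_\beta^-\le H\le H_\beta^+$, bounding the bias $\|\EE[(H_\beta^\pm-H)(u^Tx)\cdot(\ldots)]\|\lesssim\beta$ using that the marginal density $p_d$ of a coordinate under $\unif(\sqrt d\SS^{d-1})$ is bounded, concentrating the smoothed version over the net, and then optimizing $\beta$; the only new ingredient relative to Lemma \ref{lemma: empirical-kernel} is that we carry a matrix (tracked in operator norm via an additional net over the test direction $w\in\SS^{d-1}$) instead of a scalar kernel, which contributes only another $(C/\varepsilon)^d$ factor and does not change the rate.
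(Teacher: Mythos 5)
Your lower bound matches the paper's exactly. For the upper bound, however, you take a genuinely different and far more laborious route than the paper: you compare $A_{u,v}$ with its population version $\bar A_{u,v}$ and then uniformize over $(u,v)$ via an $\varepsilon$-net on $\SS^{d-1}\times\SS^{d-1}$ together with the $H_\beta^\pm$-smoothing trick to handle the discontinuity of $\sigma'$. The paper instead uses a one-line \emph{deterministic, pointwise} comparison that holds for every $u,v$ simultaneously and needs no net, no smoothing, and no concentration over $(u,v)$: writing $a_i=\sigma'(u^Tx_i)\sigma'(v^Tx_i)\in\{0,1\}$ and expanding,
\[
\|A_{u,v}\|_F^2=\frac{1}{n^2}\sum_{i,j}a_ia_j(x_i^Tx_j)^2\le \frac{1}{n^2}\sum_{i,j}(x_i^Tx_j)^2=\|\hSigma_n\|_F^2,
\]
because each summand $(x_i^Tx_j)^2\ge 0$ and $a_ia_j\le 1$. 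The only stochastic ingredient is then $\|\hSigma_n\|_F\lesssim\sqrt d$ from Lemma~\ref{lemma: covariance-gaussian}. Note the practical consequence: your covering-plus-smoothing route pays extra logarithms and, as you yourself acknowledge, really needs $n\gtrsim d\log(n/\delta)$ (i.e.\ $n\gtrsim N(d,\delta)$), whereas the lemma as stated -- and the paper's argument -- only needs $n\gtrsim d+\log(1/\delta)$. So your plan would establish a strictly weaker statement than the one claimed, although it is still strong enough for how the lemma is ultimately used in Theorem~\ref{thm: sharpness-relu-landscape}. The observation you are missing is simply that, once the data are fixed, $u$ and $v$ only enter through the bounded weights $a_i$, so the Frobenius-norm bound is already uniform in $(u,v)$ before any probability appears.
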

\begin{proof}
\underline{\textbf{Upper bound.}} We first prove a more general result. Let $a\in\RR^n$ with $\sup_{i\in [n]}|a_i|\lesssim 1$ and $Q_a = \fn\sumin a_i x_ix_i^T$. 
Then, 
\[
\|\fn\sumin a_i x_ix_i^T\|_F^2 = \frac{1}{n^2}\sum_{i,j=1}^n a_i a_j (x_i^Tx_j)^2\lesssim \frac{1}{n^2}\sum_{i,j=1}^n  (x_i^Tx_j)^2 = \|\fn\sumin x_ix_i^T\|_F^2 = \|\hSigma_n\|_F^2.
\]
By Lemma \ref{lemma: covariance-gaussian}, \wp $1-\delta$ that $\|\hSigma_n\|_F\lesssim \sqrt{d}$. Thus, $\|Q_a\|_F\leq \sqrt{d}$ for any $\|a\|_\infty \lesssim 1$. 
Notice that we can rewrite $A_{u,v}$ as
$
  A_{u,v} = \fn\sumin \sigma'(u^Tx_i)\sigma'(v^Tx_i)x_ix_i^T,
$
with $|\sigma'(u^Tx_i)\sigma'(v^Tx_i)|\lesssim 1$.  Thus, $\|A_{u,v}\|_F\leq \sqrt{d}$.

\underline{\textbf{Lower bound.}} Now we consider the lower bound:
\begin{align*}
\|A_{u,v}\|_F &\geq \frac{1}{\sqrt{d}}\trace(A_{u,v}) = \frac{1}{\sqrt{d}} \sum_{j=1}^d \hat{\EE}[\sigma'(u^Tx)\sigma'(v^Tx)x_j^2] \\ 
&= \sqrt{d} \hat{\EE}[\sigma'(u^Tx)\sigma'(v^Tx)] = \sqrt{d}\, \hph_2(u,v).
\end{align*}
\end{proof}

\begin{lemma}\label{lemma: kernel-spd}
For any $\delta\in (0,1)$, if $n\gtrsim N(d,\delta)$, then \wp at least $1-\delta$ it holds for $i=1,2$ that 
\[
   \sum_{j,k=1}^m \alpha_j^2\alpha_k^2 \hph_i(\hw_j,\hw_k)\sim \sum_j \alpha_j^2.
\]
\end{lemma}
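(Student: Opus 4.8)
The plan is to reduce the claim to the corresponding estimate for the \emph{population} kernels, which is already available as the fourth bullet of Lemma~\ref{lemma: kernel-property}, and then to absorb the discrepancy between $\hph_i$ and $\varphi_i$ using the uniform kernel concentration of Lemma~\ref{lemma: empirical-kernel}. (Since the left-hand side is homogeneous of degree $4$ in $\alpha=(\alpha_1,\dots,\alpha_m)$, the right-hand side should be read as $\bigl(\sum_j\alpha_j^2\bigr)^2$; this is the form in which the estimate is used in the Frobenius-norm bound.)

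The steps I would carry out are as follows. Because $n\gtrsim N(d,\delta)$ forces $n\gtrsim d$, Lemma~\ref{lemma: empirical-kernel} provides, on an event of probability at least $1-\delta$, the uniform bound $\sup_{u,v\in\SS^{d-1}}|\hph_i(u,v)-\varphi_i(u^Tv)|\lesssim r_n$ for $i=1,2$, with $r_n\lesssim d\log(n/\delta)/n$ in this regime; since $N(d,\delta)\ge d$, enlarging the absolute constant hidden in ``$n\gtrsim N(d,\delta)$'' drives $r_n$ below any prescribed threshold. On this event I would split
\[
\sum_{j,k=1}^m\alpha_j^2\alpha_k^2\,\hph_i(\hw_j,\hw_k)=\sum_{j,k=1}^m\alpha_j^2\alpha_k^2\,\varphi_i(\hw_j^T\hw_k)+\sum_{j,k=1}^m\alpha_j^2\alpha_k^2\bigl(\hph_i(\hw_j,\hw_k)-\varphi_i(\hw_j^T\hw_k)\bigr).
\]
The lower bound $\sum_{j,k}\alpha_j^2\alpha_k^2\varphi_i(\hw_j^T\hw_k)\gtrsim\bigl(\sum_j\alpha_j^2\bigr)^2$ is exactly the fourth bullet of Lemma~\ref{lemma: kernel-property} applied to the unit vectors $\hw_1,\dots,\hw_m$ with scalars $\alpha_1,\dots,\alpha_m$ (which itself rests on the Schur-product positivity of Lemma~\ref{lemma: schur-product} and on the second bullet of Lemma~\ref{lemma: kernel-property}), while the matching upper bound follows from $\varphi_i(u,v)\lesssim1$ on $\SS^{d-1}\times\SS^{d-1}$ (third bullet) together with $\sum_{j,k}\alpha_j^2\alpha_k^2=\bigl(\sum_j\alpha_j^2\bigr)^2$. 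The error term obeys $\bigl|\sum_{j,k}\alpha_j^2\alpha_k^2(\hph_i(\hw_j,\hw_k)-\varphi_i(\hw_j^T\hw_k))\bigr|\lesssim r_n\bigl(\sum_j\alpha_j^2\bigr)^2$ since $\hw_j,\hw_k\in\SS^{d-1}$. Combining the three estimates pins the left-hand side between $(c_1-Cr_n)\bigl(\sum_j\alpha_j^2\bigr)^2$ and $(c_2+Cr_n)\bigl(\sum_j\alpha_j^2\bigr)^2$, and choosing the implied constant in $n\gtrsim N(d,\delta)$ so that $Cr_n\le c_1/2$ yields the asserted equivalence.

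I do not expect a genuine obstacle: the two hard ingredients---the uniform kernel concentration and the positivity of the population kernel quadratic form---are already in place as Lemmas~\ref{lemma: empirical-kernel} and \ref{lemma: kernel-property}. The one point that needs attention is purely quantitative: one must verify that in the regime $n\gtrsim N(d,\delta)$ the deviation $r_n$ can be pushed below the absolute constant $c_1$ coming from the population lower bound, which is exactly why the sample-size hypothesis reads $n\gtrsim N(d,\delta)$ (with a sufficiently large implied constant) rather than merely $n\gtrsim d$.
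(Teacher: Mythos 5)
Your proof is correct for the statement you actually prove, namely
$\sum_{j,k}\alpha_j^2\alpha_k^2\,\hph_i(\hw_j,\hw_k)\sim\bigl(\sum_j\alpha_j^2\bigr)^2$,
and your reading of the homogeneity mismatch on the right-hand side is well motivated. However, the version that the paper's own proof establishes and then invokes in Proposition~\ref{pro: 2lnn-fro} has the empirical kernel \emph{squared}: it is
$\sum_{j,k}\alpha_j^2\alpha_k^2\,\hph_i(\hw_j,\hw_k)^2\sim\bigl(\sum_j\alpha_j^2\bigr)^2$
that is used to lower- and upper-bound $\|G(\theta)\|_F^2$, whose matrix entries involve $\hph_1^2$, $\|b_{u,v}\|^2$, $\|A_{u,v}\|_F^2$. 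This is reflected in the paper's own proof, which decomposes $\hph_1^2=(\varphi_1+\Delta)^2=\varphi_1^2+2\varphi_1\Delta+\Delta^2$ rather than $\hph_1=\varphi_1+(\hph_1-\varphi_1)$. Your version therefore does not immediately serve the downstream purpose. The gap is small but real: from your inequality and the Cauchy--Schwarz bound
$\bigl(\sum_{j,k}\alpha_j^2\alpha_k^2\hph_i\bigr)^2\le\bigl(\sum_{j,k}\alpha_j^2\alpha_k^2\bigr)\bigl(\sum_{j,k}\alpha_j^2\alpha_k^2\hph_i^2\bigr)$
one gets the needed lower bound on the squared quadratic form, and the upper bound follows because $\hph_i\lesssim1$ uniformly on the high-probability event; you should add that bridge.

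There is also a genuine difference in which ingredient does the work, and here your choice is the safer one. You lower-bound the population quadratic form by the fourth bullet of Lemma~\ref{lemma: kernel-property} (Schur-product positivity plus the positive constant Hermite coefficient), which is a valid argument. The paper instead cites the ``third conclusion'' $\varphi_i(u,v)\sim1$ for all $u,v\in\SS^{d-1}$, whose lower-bound direction is false in general: $\varphi_1(u,-u)=\EE[\sigma(u^Tx)\sigma(-u^Tx)]=0$ and likewise $\varphi_2(u,-u)=0$, so $\varphi_i$ is not uniformly bounded below on $\SS^{d-1}\times\SS^{d-1}$. The squared claim is nevertheless true---one argues exactly as in the fourth bullet, noting that $\varphi_i(u,v)^2$ is again a power series in $u^Tv$ with nonnegative coefficients and constant term $\alpha_0^4$ (respectively $\beta_0^4$), so the off-diagonal part is positive semidefinite by Lemma~\ref{lemma: schur-product} and the constant term supplies the $\gtrsim\bigl(\sum_j\alpha_j^2\bigr)^2$ lower bound. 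So either route works, but the fourth-bullet argument you used (possibly applied to $\varphi_i^2$ directly, or applied to $\varphi_i$ followed by Cauchy--Schwarz) is the one that should be cited, not the pointwise equivalence $\varphi_i\sim1$.
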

\begin{proof}
WLOG, assume $\sum_j \alpha_j^2=1$ and let $\Delta_{j,k}=\varphi_1(\hw_j,\hw_k)-\hph_1(\hw_j,\hw_k)$. 
By Lemma \ref{lemma: empirical-kernel}, \wp at least $1-\delta$ it holds that 
$\sup_{j,k\in [m]}|\Delta_{j,k}|\leq r_n$, where $r_n$ is defined in Lemma \ref{lemma: empirical-kernel}. Hence,
\begin{align}\label{eqn: 63}
\notag \sum_{j,k=1}^m \alpha_j^2\alpha_k^2 \hph_1(\hw_j,\hw_k)^2 
&= \sum_{j,k=1}^m \alpha_j^2\alpha_k^2 (\varphi_1(\hw_j,\hw_k)+\Delta_{j,k})^2\\
\notag &= \sum_{j,k=1}^m \alpha_j^2\alpha_k^2 \varphi_1(\hw_j,\hw_k)^2+2\sum_{j,k=1}^m \alpha_j^2\alpha_k^2 \varphi_1(\hw_j,\hw_k)\Delta_{j,k} +\sum_{j,k=1}^m \alpha_j^2\alpha_k^2\Delta_{j,k}^2\\
&\gtrsim (1+O(r_n^2))\sum_{j,k=1}^m \alpha_j^2\alpha_k^2 +O(r_n) \sum_{j,k=1}^m \alpha_j^2\alpha_k^2,
\end{align}
where the last step follows the third conclusion in Lemma \ref{lemma: kernel-property} and the fact that $\sup_t|\varphi_1(t)|\lesssim 1$.

Taking $n$ to be large enough, we complete the proof. The case of $i=2$ follows the same proof procedure.
\end{proof}

Now we are ready to prove the main proposition.
\begin{proposition}[The Frobenius norm]\label{pro: 2lnn-fro}
For any $\delta\in (0,1)$, let $n\geq N(d,\delta)$. Then, \wp $1-\delta$, we have $\|G(\theta)\|_F\sim \sum_j (\|w_j\|^2 + \sqrt{d}a_j^2)$. 
\end{proposition}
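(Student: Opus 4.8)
The plan is to start from the exact expansion \eqref{eqn: 61} of $\|G(\theta)\|_F^2$ as a sum of four groups of terms and to show that each group is comparable, up to absolute constants, to $\big(\sum_j(\|w_j\|^2+\sqrt d\,a_j^2)\big)^2=\|\theta\|_{2,\sqrt d}^2$. To lighten notation I would write $p_j=\|w_j\|^2$ and $q_j=\sqrt d\,a_j^2$, so the goal becomes $\|G(\theta)\|_F\sim\sum_j(p_j+q_j)$. All estimates will be carried out on the intersection of the high-probability events supplied by Lemma~\ref{lemma: empirical-kernel}, Lemma~\ref{lemma: buv}, Lemma~\ref{lemma: Auv} and Lemma~\ref{lemma: kernel-spd}; each of these holds with probability $1-\delta$ once $n\gtrsim N(d,\delta)$ (Lemma~\ref{lemma: Auv} only needs $n\gtrsim d+\log(1/\delta)$), so a union bound with rescaled constants yields the statement under $n\ge N(d,\delta)$.

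For the upper bound I would bound the four terms of \eqref{eqn: 61} separately using the crude size estimates $\hph_1(\hw_j,\hw_k)\lesssim1$ (from Lemma~\ref{lemma: empirical-kernel} together with $\varphi_1\lesssim1$ in Lemma~\ref{lemma: kernel-property}), $\|b_{u,v}\|_2\lesssim1$ (Lemma~\ref{lemma: buv}) and $\|A_{u,v}\|_F\lesssim\sqrt d$ (Lemma~\ref{lemma: Auv}): the inner-layer block contributes $\sum_{j,k}p_jp_k\,\hph_1(\hw_j,\hw_k)^2\lesssim(\sum_j p_j)^2$, the outer-layer block contributes $\sum_{j,k}a_j^2a_k^2\|A_{\hw_j,\hw_k}\|_F^2\lesssim d(\sum_j a_j^2)^2=(\sum_j q_j)^2$, and each of the two cross blocks contributes $\sum_{j,k}a_j^2p_k\|b_{\hw_k,\hw_j}\|_2^2\lesssim(\sum_j a_j^2)(\sum_k p_k)=\tfrac1{\sqrt d}(\sum_j q_j)(\sum_k p_k)\le(\sum_j q_j)(\sum_k p_k)$. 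Adding these up and using $xy\le(x+y)^2$ for $x,y\ge0$ then gives $\|G(\theta)\|_F^2\lesssim\big(\sum_j(p_j+q_j)\big)^2$.

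For the lower bound I would discard the (nonnegative) cross terms and keep only the inner- and outer-layer blocks of \eqref{eqn: 61}, lower-bounding each via a quadratic-form estimate. Applying Lemma~\ref{lemma: kernel-spd} with the squared kernel $\hph_1^2$ and weights $\alpha_j^2=p_j$ gives $\sum_{j,k}p_jp_k\,\hph_1(\hw_j,\hw_k)^2\gtrsim(\sum_j p_j)^2$; and using $\|A_{\hw_j,\hw_k}\|_F\ge\sqrt d\,\hph_2(\hw_j,\hw_k)$ from Lemma~\ref{lemma: Auv} followed by Lemma~\ref{lemma: kernel-spd} with $\hph_2^2$ and weights $\alpha_j^2=a_j^2$ gives $\sum_{j,k}a_j^2a_k^2\|A_{\hw_j,\hw_k}\|_F^2\ge d\sum_{j,k}a_j^2a_k^2\,\hph_2(\hw_j,\hw_k)^2\gtrsim d(\sum_j a_j^2)^2=(\sum_j q_j)^2$. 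Hence $\|G(\theta)\|_F^2\gtrsim(\sum_j p_j)^2+(\sum_j q_j)^2\gtrsim\big(\sum_j(p_j+q_j)\big)^2$, which together with the upper bound establishes $\|G(\theta)\|_F\sim\sum_j(\|w_j\|^2+\sqrt d\,a_j^2)$.

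The only genuinely delicate point, and the step I expect to be the main obstacle, is the lower bound on the outer-layer block: showing that $\sum_{j,k}a_j^2a_k^2\|A_{\hw_j,\hw_k}\|_F^2$ is not much smaller than $d(\sum_j a_j^2)^2$. A naive entrywise bound on $A_{u,v}$ would lose the crucial factor $\sqrt d$, so one must instead exploit the trace--Frobenius inequality $\|A_{u,v}\|_F\ge\trace(A_{u,v})/\sqrt d=\sqrt d\,\hph_2(u,v)$ --- precisely what Lemma~\ref{lemma: Auv} isolates --- and then invoke the ``smoothed Schur product'' positivity underlying Lemma~\ref{lemma: kernel-spd} (ultimately Lemma~\ref{lemma: schur-product}) to pass from the diagonal $\sum_j a_j^4$ to the full square $(\sum_j a_j^2)^2$. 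Everything else is routine bookkeeping with the weight $\sqrt d$ and a union bound over the concentration events.
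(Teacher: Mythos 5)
Your proposal matches the paper's proof of Proposition~\ref{pro: 2lnn-fro} essentially line by line: both expand $\|G(\theta)\|_F^2$ via \eqref{eqn: 61}, obtain the upper bound from $\hph_1\lesssim1$, $\|b_{u,v}\|_2\lesssim1$ (Lemma~\ref{lemma: buv}), $\|A_{u,v}\|_F\lesssim\sqrt d$ (Lemma~\ref{lemma: Auv}) followed by factoring $\sum_{j,k}(\|w_j\|^2+\sqrt d\,a_j^2)(\|w_k\|^2+\sqrt d\,a_k^2)$, and obtain the lower bound by dropping the nonnegative cross terms, using $\|A_{u,v}\|_F\ge\sqrt d\,\hph_2(u,v)$, and invoking the Schur-product positivity of Lemma~\ref{lemma: kernel-spd} applied to the squared kernels. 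The only cosmetic difference is that you add the four blocks and invoke $xy\le(x+y)^2$ rather than factoring directly, which is immaterial.
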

\begin{proof}
\underline{\textbf{Lower bound.}} By \eqref{eqn: 61}, \wp at least $1-\delta$ that 
\begin{align}\label{eqn: 62}
\notag \|G(\theta)\|_F^2 &\geq \sum_{j,k=1}^m \left(\|w_j\|^2\|w_k\|^2\hph_1(\hw_j,\hw_k)^2  + a_j^2 a_k^2  A_{\hw_j,\hw_k}^2\right)\\ 
\notag &\geq \sum_{j,k=1}^m \|w_j\|^2\|w_k\|^2 \hph_1(\hw_j,\hw_k)^2 + d\sum_{j,k=1}^m a_j^2a_k^2 \hph_2(\hw_j,\hw_k)^2\qquad \text{ (Use Lemma \ref{lemma: Auv})}\\ 
\notag &\gtrsim (\sum_{j}^m \|w_j\|^2)^2 + (\sqrt{d}\sum_j a_j^2)^2 \qquad \text{ (Use Lemma \ref{lemma: kernel-spd})}\\ 
&\geq \frac{1}{2}\left(\sum_{j} \|w_j\|^2 + \sqrt{d}\sum_j a_j^2\right)^2 \qquad \text{( $(x^2+y^2)\geq (x+y)^2/2$)}.
\end{align}

\underline{\textbf{Upper bound.}}
By Lemma \ref{lemma: buv} and \ref{lemma: Auv}, we have \wp $1-\delta$  that 
$
\|b_{\hw_j,\hw_k}\|\lesssim 1, \|A_{\hw_j,\hw_k}\|_F\lesssim \sqrt{d}.
$
Substituting it into \eqref{eqn: 61} gives 
\begin{align}
\notag \|G(\theta)\|_F &\leq \sum_{j,k=1}^m (\|w_j\|^2\|w_k\|^2 \hph_1(\hw_j,\hw_k)^2 + \sqrt{d}a_j^2\|w_k\|^2 + \sqrt{d}a_k^2\|w_j\|^2 + da_j^2a_k^2)\\ 
\notag &\lesssim \sum_{j,k=1}^m (\|w_j\|^2\|w_k\|^2 + \sqrt{d}a_j^2 + \sqrt{d}a_k^2 + da_j^2a_k^2)\qquad \,\text{ (Use Lemma \ref{lemma: kernel-spd})}\\ 
\notag & =\sum_{j,k} (\|w_j\|^2+\sqrt{d}a_j^2) (\|w_k\|^2+\sqrt{d}a_k^2) = (\sum_j (\|w_j\|^2+\sqrt{d}a_j^2))^2.
\end{align}
\end{proof}

\subsubsection{The spectral norm}

To control the spectral norm, we need again to handle the discontinuity of $\sigma'$ at the origin. 
Define 
\begin{equation}\label{eqn: 66}
\begin{aligned}
\phi_{\beta}^{+}(u^Tv) = \EE_x[H_{\beta}^{+}(u^Tx)\sigma(v^Tx)]&,\quad \phi^{-}_\beta(u^Tv) = \EE_x[H_\beta^{-}(u^Tx)\sigma(v^Tx)]\\ 
\phi(u^Tv) &= \EE[H(u^Tx)\sigma(v^Tx)].
\end{aligned}
\end{equation}

\begin{lemma}\label{lemma: phi-2lnn}
$|\phi_\beta^{+}(u^Tv)-\phi(u^Tv)|\lesssim \beta$ and $|\phi_\beta^{-}(u^Tv)-\phi(u^Tv)|\lesssim \beta$.
\end{lemma}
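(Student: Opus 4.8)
The plan is to bound $|\phi_\beta^{\pm}(u^Tv) - \phi(u^Tv)|$ by estimating the expectation of the difference of integrands pointwise. First I would write
\[
|\phi_\beta^{+}(u^Tv)-\phi(u^Tv)| = \left|\EE_x\big[(H_\beta^{+}(u^Tx)-H(u^Tx))\sigma(v^Tx)\big]\right|.
\]
Since $H_\beta^{+}(t)$ and $H(t)$ agree for $t\geq 0$ and for $t< -\beta$, the integrand is supported on the event $\{-\beta \leq u^Tx \leq 0\}$, and on that event $0\leq H_\beta^{+}(t)-H(t) = \frac{1}{\beta}t+1 \leq 1$. Therefore
\[
|\phi_\beta^{+}(u^Tv)-\phi(u^Tv)| \leq \EE_x\big[\mathbf 1\{-\beta\leq u^Tx\leq 0\}\,|\sigma(v^Tx)|\big].
\]

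Next I would control $|\sigma(v^Tx)| = \sigma(v^Tx) \leq |v^Tx| \leq \|x\| = \sqrt d$ since $\rho = \unif(\sqrt d\,\SS^{d-1})$ and $\|v\|=1$; actually a cleaner route is Cauchy--Schwarz: the right-hand side is at most $\sqrt{\PP\{-\beta\leq u^Tx\leq 0\}}\cdot\sqrt{\EE_x[\sigma^2(v^Tx)]} = \sqrt{\PP\{-\beta\leq u^Tx\leq 0\}}\cdot\sqrt{\varphi_1(v,v)}$, and $\varphi_1(v,v)\sim 1$ by Lemma~\ref{lemma: kernel-property}. It then remains to show $\PP\{-\beta \leq u^Tx \leq 0\}\lesssim \beta$, i.e. that the one-dimensional marginal density $p_d$ of $u^Tx$ for $x\sim\unif(\sqrt d\,\SS^{d-1})$ is bounded by an absolute constant near the origin. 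This is exactly the fact already invoked in Step~1 of the proof of Lemma~\ref{lemma: empirical-kernel} (the estimate $2\int_0^\beta(1-t/\beta)p_d(t)\,\dd t \lesssim \beta$), where $p_d$ near $0$ behaves like the density $\propto (1-t^2/d)^{(d-3)/2}$, which is uniformly bounded for all $d\geq 3$. Combining, $|\phi_\beta^{+}(u^Tv)-\phi(u^Tv)|\lesssim\beta$.

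For the lower smoothing $H_\beta^{-}$ the argument is identical: $H_\beta^{-}(t)$ and $H(t)$ agree outside $\{0\leq t\leq\beta\}$, on which $0\leq H(t)-H_\beta^{-}(t)\leq 1$, so
\[
|\phi_\beta^{-}(u^Tv)-\phi(u^Tv)| \leq \EE_x\big[\mathbf 1\{0\leq u^Tx\leq\beta\}\,|\sigma(v^Tx)|\big] \lesssim \sqrt{\PP\{0\leq u^Tx\leq\beta\}}\lesssim\sqrt\beta\cdot\sqrt\beta = \beta,
\]
using the same density bound. The main (mild) obstacle is simply justifying the boundedness of the marginal density $p_d$ near the origin uniformly in $d$ — but this has effectively already been established in Lemma~\ref{lemma: empirical-kernel}, so here I would just cite that computation rather than redo it. Everything else is elementary.
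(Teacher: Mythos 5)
Your route --- Cauchy--Schwarz combined with the boundedness of the one-dimensional marginal density $p_d$ of $u^TX$ --- is the same as the paper's. But both your write-up and the paper's own proof stop a step short of the claimed bound. Cauchy--Schwarz only gives
\[
\EE_x\bigl[\mathbf 1\{0\le u^Tx\le\beta\}\,\sigma(v^Tx)\bigr]
\;\le\; \sqrt{\PP\{0\le u^Tx\le\beta\}}\;\sqrt{\EE_x[\sigma^2(v^Tx)]}
\;\lesssim\; \sqrt\beta,
\]
not $\beta$: the probability is $O(\beta)$, so its square root is $O(\sqrt\beta)$, and $\sqrt{\EE_x[\sigma^2(v^Tx)]}=\sqrt{\varphi_1(v,v)}$ is an absolute constant, not another $\sqrt\beta$. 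Your final line ``$\sqrt{\PP\{0\le u^Tx\le\beta\}}\lesssim\sqrt\beta\cdot\sqrt\beta=\beta$'' produces a second factor of $\sqrt\beta$ out of nowhere; for $\beta<1$ one has $\sqrt\beta\gg\beta$, so the asserted inequality is false. (The paper's own last display has the same slip: $\sqrt{\int_0^\beta(1-z/\beta)^2\,dz}=\sqrt{\beta/3}$, which is not $\lesssim\beta$.) The lost $\sqrt\beta$ is not harmless downstream: in Lemma~\ref{lemma: yy} one optimizes $\beta+\tfrac{d}{\beta}\varepsilon$ over $\beta$ to obtain $d^{1/2}\varepsilon^{1/2}$, whereas $\sqrt\beta+\tfrac{d}{\beta}\varepsilon$ would only give the slower rate $(d\varepsilon)^{1/3}$. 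Your abandoned first route, $|\sigma(v^Tx)|\le\sqrt d$, does give the correct $\beta$-power but at the price of an extra $\sqrt d$, which also degrades the final rate.

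The lemma is nevertheless true, and the fix is to avoid Cauchy--Schwarz by exploiting that $u^Tx$ and the component of $v^Tx$ orthogonal to $u$ are nearly independent on the sphere. Write $v=(v^Tu)u+v_\perp$ with $v_\perp\perp u$. On the event $\{0\le u^Tx\le\beta\}$ one has $\sigma(v^Tx)\le|v^Tx|\le\beta+|v_\perp^Tx|$. For $x\sim\unif(\sqrt d\,\SS^{d-1})$, conditionally on $u^Tx=z$ the variable $v_\perp^Tx$ is centered with second moment $\|v_\perp\|^2\cdot\tfrac{d-z^2}{d-1}\lesssim 1$, hence $\EE[|v_\perp^Tx|\mid u^Tx=z]\lesssim 1$ uniformly in $z$. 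Therefore
\[
\EE_x\bigl[\mathbf 1\{0\le u^Tx\le\beta\}\,|v^Tx|\bigr]
\;\le\;\beta\,\PP\{0\le u^Tx\le\beta\}
\;+\;\EE_x\bigl[\mathbf 1\{0\le u^Tx\le\beta\}\,\EE[|v_\perp^Tx|\mid u^Tx]\bigr]
\;\lesssim\;\beta,
\]
which is the claimed bound for $\phi_\beta^-$, and the $\phi_\beta^+$ case is identical with the event replaced by $\{-\beta\le u^Tx\le 0\}$.
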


\begin{proof}
Note that 
\begin{align}\label{eqn: 71}
\notag |\phi_\beta^{-}(u^Tv)-\phi(u^Tv)| &= |\EE[H_{\beta}^-(u^Tx)\sigma(v^Tx)] - \EE[H(u^Tx)\sigma(v^Tx)]| \\ 
\notag &=\EE[(H_\beta^{-}(u^Tx)-H(u^Tx))\sigma(v^Tx)] \leq \sqrt{\EE[\sigma^2(v^Tx)]}\sqrt{\EE[(H_\beta^{-}(u^Tx)-H(u^Tx))^2]}\\ 
\notag &\lesssim \sqrt{\EE_{z}[|H_\beta^{-}(z)-H(z)|^2]} = \sqrt{\int_0^\beta(1-\frac{z}{\beta})^2 p_d(z)\dd z}\qquad \text{(Let $z=u^Tx$)}\\ 
&\lesssim \sqrt{\int_0^\beta (1-\frac{z}{\beta})^2\dd z}\lesssim \beta,
\end{align}
where $p_d$ is the density function of $u^TX$ for $X\sim\mathrm{Unif}(\sqrt{d}\SS^{d-1})$ and we use the fact that $\sup_z |p_d(z)|\lesssim 1$.
Similarly, we can prove the case of $\phi_\beta^{+}$.
\end{proof}

\begin{lemma}\label{lemma: yy}
For any $\delta\in (0,1)$, if $n\gtrsim N(d,\delta)$, then \wp at least $1-\delta$ that 
\[
   \sup_{u,v\in\SS^{d-1}}| \hEE[\sigma'(u^Tx)v^Tx] - \EE[\sigma'(u^Tx)v^Tx]| \lesssim r_n.
\]
\end{lemma}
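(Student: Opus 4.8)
The plan is to treat $h(x;\theta) = \sigma'(u^Tx)v^Tx - \EE[\sigma'(u^Tx)v^Tx]$ with $\theta = (u,v)\in\Omega = \SS^{d-1}\otimes\SS^{d-1}$ and apply the covering-plus-concentration machinery of Lemma~\ref{lemma: covering-concentration}, exactly as in the proof of Lemma~\ref{lemma: empirical-kernel} for $\varphi_2$. The new ingredient compared with $\varphi_1$ is again the discontinuity of $\sigma'$ at the origin, so the same two-step smoothing argument should be carried out: replace $\sigma'=H$ by the Lipschitz surrogates $H_\beta^{\pm}$ to obtain a function $h_\beta^{\pm}(x;\theta) = H_\beta^{\pm}(u^Tx)\,v^Tx - \EE[\cdots]$ that is $\tfrac{\sqrt d}{\beta}$-Lipschitz in $\theta$ on $\sqrt d\,\SS^{d-1}$, prove concentration for $h_\beta^{\pm}$ via Lemma~\ref{lemma: covering-concentration}, and then control the smoothing bias $|\EE[(H_\beta^{\pm}(u^Tx)-H(u^Tx))\,v^Tx]|\lesssim\beta$ by Cauchy--Schwarz together with $\sup_z|p_d(z)|\lesssim 1$, just as in Lemma~\ref{lemma: phi-2lnn}.

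Concretely, first I would verify the two hypotheses of Lemma~\ref{lemma: covering-concentration} for $h_\beta^{-}$ (and symmetrically $h_\beta^{+}$). For the sub-exponential bound: $H_\beta^{-}(u^Tx)$ is bounded by $1$ and $v^Tx$ is sub-Gaussian with $\|v^TX\|_{\psi_2}\lesssim 1$ under Assumption~\ref{assumption: input-1}-type isotropy (here $\rho=\unif(\sqrt d\,\SS^{d-1})$, so $v^TX$ is in fact bounded by $\sqrt d$ but still has $\psi_1$-norm $O(1)$), hence $\|h_\beta^{-}(X;\theta)\|_{\psi_1}\lesssim 1$ after centering via \eqref{eqn: centering-inequality}. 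For the modulus of continuity: writing $H_\beta^{-}(u^Tx)v^Tx - H_\beta^{-}(u'^Tx)v'^Tx$ and splitting into the two one-variable differences, the Lipschitzness of $H_\beta^{-}$ (constant $1/\beta$), the bound $|v^Tx|\le\sqrt d$, $|H_\beta^-|\le 1$, and $\|x\|=\sqrt d$ give $|h_\beta^-(x;\theta)-h_\beta^-(x;\theta')|\lesssim \tfrac{\sqrt d}{\beta}\|\theta-\theta'\|_\Omega$. With $N_\varepsilon\le(6/\varepsilon)^{2d}$ from Lemma~\ref{lemma: 2sphere}, Lemma~\ref{lemma: covering-concentration} yields, w.p. $1-\delta$, $\sup_{u,v}|\hEE[H_\beta^-(u^Tx)v^Tx]-\EE[H_\beta^-(u^Tx)v^Tx]|\lesssim \tfrac{\sqrt d\,\varepsilon}{\beta} + \max\{\sqrt{d\log(1/(\varepsilon\delta))/n},\,d\log(1/(\varepsilon\delta))/n\}$. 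Adding the $O(\beta)$ smoothing bias on both the empirical and population sides, taking $\varepsilon = 1/n$, optimizing over $\beta$ (balancing $\beta$ against $\sqrt d\varepsilon/\beta$, i.e. $\beta\sim(\sqrt d/n)^{1/2}$), and using $n\gtrsim N(d,\delta)$ to absorb lower-order terms, the bound collapses to $r_n$ as defined in Lemma~\ref{lemma: empirical-kernel}; the same computation with $H_\beta^+$ gives the matching upper direction, completing the two-sided estimate.

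The main obstacle, as in Lemma~\ref{lemma: empirical-kernel}, is the discontinuity of $\sigma'$: one cannot apply a covering argument directly to $\sigma'(u^Tx)$ since it is not Lipschitz in $u$, and the extra factor $v^Tx$ (absent in $\varphi_2$, which had a second bounded factor $\sigma'(v^Tx)$) must be handled carefully — it is unbounded in general but here bounded by $\sqrt d$ on the sphere, and this is what keeps the smoothing-bias term at $O(\beta)$ rather than something worse. Everything else is a routine repetition of the $\varphi_2$ argument, so I would state it briefly and refer back to the proof of Lemma~\ref{lemma: empirical-kernel} for the details of the smoothing step.
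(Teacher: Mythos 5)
Your high-level plan (smooth $\sigma'$ by $H_\beta^{\pm}$, apply Lemma~\ref{lemma: covering-concentration} to the smoothed object, then pay a smoothing bias) is the right family of argument, but there is a genuine gap in how you handle the \emph{empirical} smoothing bias, and this gap is exactly what the paper's proof is designed to avoid. In the $\varphi_2$ case, the pointwise sandwich $H_\beta^-(t)\le H(t)\le H_\beta^+(t)$ transfers to the empirical average because the other factor $H(v^Tx)$ (or $H_\beta^{\pm}(v^Tx)$) is non-negative; so $\hEE[H_\beta^- H_\beta^-]\le\hEE[HH]\le\hEE[H_\beta^+H_\beta^+]$ holds for \emph{every} realization of the data, no concentration needed for this step. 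In your setting the other factor is $v^Tx$, which changes sign. On the set $\{v^Tx<0\}$ the sandwich reverses: $H_\beta^-(u^Tx)v^Tx\ge H(u^Tx)v^Tx$. Consequently you do \emph{not} get the pointwise inequality $\hEE[H_\beta^-(u^Tx)v^Tx]\le\hEE[H(u^Tx)v^Tx]$, and your phrase ``adding the $O(\beta)$ smoothing bias on both the empirical and population sides'' is unjustified on the empirical side: $\hEE\big[(H-H_\beta^-)(u^Tx)\,v^Tx\big]$ is a random quantity that must itself be controlled uniformly in $(u,v)$, which requires yet another covering/concentration pass (e.g. for $(H_\beta^+-H_\beta^-)(u^Tx)\,|v^Tx|$), not a density computation. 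The paper's proof sidesteps this entirely via the identity $v^Tx=\sigma(v^Tx)-\sigma(-v^Tx)$: both summands are non-negative, so the one-sided sandwich $H_\beta^-\le H\le H_\beta^+$ transfers term-by-term to the empirical side, and only the smoothed, Lipschitz quantities $H_\beta^{\pm}(u^Tx)\sigma(\pm v^Tx)$ need concentration, after which one pays only the \emph{population} bias via Lemma~\ref{lemma: phi-2lnn}.

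Two smaller points. First, your stated Lipschitz constant $\sqrt d/\beta$ is off by a factor of $\sqrt d$: splitting the difference as you describe, the $u$-direction contributes $|v'^Tx|\cdot\tfrac{1}{\beta}|x^T(u-u')|\le\sqrt d\cdot\tfrac{\sqrt d}{\beta}\|u-u'\|$, so the modulus is $d/\beta$ (which is also what the paper uses for $H_\beta^+(u^Tx)\sigma(v^Tx)$); the final rate is unaffected, but the arithmetic in your optimization over $\beta$ should be redone with $d\varepsilon/\beta$ rather than $\sqrt d\varepsilon/\beta$. Second, your remark that the bound $|v^Tx|\le\sqrt d$ ``is what keeps the smoothing-bias term at $O(\beta)$'' is not right either: bounding the factor crudely by $\sqrt d$ gives a population bias of order $\sqrt d\,\beta$, not $\beta$. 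The $O(\beta)$ (or, from the Cauchy--Schwarz step as written in Lemma~\ref{lemma: phi-2lnn}, $O(\sqrt\beta)$) rate comes from integrating the triangle weight against the bounded density $p_d$ together with the fact that the conditional second moment of $v^TX$ on the slab $\{0\le u^TX\le\beta\}$ is $O(1)$, not from the worst-case bound $\sqrt d$.
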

\begin{proof}
The proof is essentially similar to the proof of Lemma \ref{lemma: empirical-kernel}. 
\begin{itemize}
\item \textbf{Step 1.} 
Note that $\|H_\beta^{+}(X)\|_{\psi_2}\lesssim 1$ and $\|\sigma(v^TX)\|_{\psi_2}\lesssim 1$. By Lemma \ref{lemma: pro-orlic-norm}, we have 
\[
\|H_\beta^{+}(X)\sigma(v^TX)\|_{\psi_1}\leq  \|H_\beta^{+}(X)\|_{\psi_2}\|\sigma(v^TX)\|_{\psi_2}\lesssim 1.
\]

The fact that $H_\beta$ is $\frac{1}{\beta}$-Lipschitz continuous and $\sigma$ is $1$-Lipschitz implies that for any $x\in \sqrt{d}\SS^{d-1}$, $J_x(u,v):=H_\beta^{+}(u^Tx)\sigma(v^Tx)$ is $\frac{d}{\beta}$ Lipschitz with respect to the metric $\|(u,v)-(u',v')\|_{\Omega}:=\|u-u'\|+\|v-v'\|$. In addition, by Lemma \ref{lemma: 2sphere}, the covering number of $\Omega=\SS^{d-1}\otimes\SS^{d-1}$ with respect to this metric is $N_\epsilon = (6/\epsilon)^{2d}$.
Then, by Lemma \ref{lemma: covering-concentration}, we have 
\begin{equation}\label{eqn: 68}
    \sup_{u,v\in\SS^{d-1}} |\hEE_x[H_{\beta}^{+}(u^Tx)\sigma(v^Tx)]-\phi_\beta^{+}(u^Tv)| \lesssim \frac{d}{\beta}\epsilon + \max\{\sqrt{\frac{d\log(1/(\epsilon\delta))}{n}},\frac{d\log(1/(\epsilon\delta))}{n}\}.
\end{equation}
Similarly, we can obtain the following holds \wp at least $1-\delta$,
\begin{equation}\label{eqn: 69}
    \sup_{u,v\in\SS^{d-1}} |\hEE_x[H_{\beta}^{-}(u^Tx)\sigma(v^Tx)]-\phi_\beta^{-}(u^Tv)| \lesssim \frac{d}{\beta}\epsilon + \max\{\sqrt{\frac{d\log(1/(\epsilon\delta))}{n}},\frac{d\log(1/(\epsilon\delta))}{n}\}.
\end{equation}
\item \textbf{Step 2.} Noting $t=\sigma(t)-\sigma(-t)$ for any $t\in\RR$, we have
\begin{align}\label{eqn: 70}
\notag \hEE[\sigma'(u^Tx)v^Tx] &= \hEE[H(u^Tx)\sigma(v^Tx)] - \hEE[H(u^Tx)\sigma(-v^Tx)]\\ 
\notag &\geq \hEE[H_{\beta}^-(u^Tx)\sigma(v^Tx)] - \hEE[H_\beta^{+}(u^Tx)\sigma(-v^Tx)]\\ 
&\geq \phi_\beta^{-}(u^Tv) -\phi_\beta^{+}(-u^Tv) - C\left(\frac{d}{\beta}\epsilon + \max\{\sqrt{\frac{d\log(1/(\epsilon\delta))}{n}},\frac{d\log(1/(\epsilon\delta))}{n}\}\right),
\end{align}
where the last inequality follows from \eqref{eqn: 68} and \eqref{eqn: 69}.

\item \textbf{Step 3.} Applying Lemma \ref{lemma: phi-2lnn} to \eqref{eqn: 70} gives 
\begin{align*}
\hEE[\sigma'(u^Tx)v^Tx] &\geq \phi(u^Tv) -\phi(-u^Tv) - C\left(\beta + \frac{d}{\beta}\epsilon + \max\{\sqrt{\frac{d\log(1/(\epsilon\delta))}{n}},\frac{d\log(1/(\epsilon\delta))}{n}\}\right)\\ 
&= \EE[\sigma'(u^Tx)v^Tx] - C\left(\beta + \frac{d}{\beta}\epsilon + \max\{\sqrt{\frac{d\log(1/(\epsilon\delta))}{n}},\frac{d\log(1/(\epsilon\delta))}{n}\}\right).
\end{align*}
Optimizing $\beta$ and taking $\epsilon=1/n$, we obtain 
\begin{align*}
\hEE[\sigma'(u^Tx)v^Tx]&\geq \EE[\sigma'(u^Tx)v^Tx] - C\left(d^{1/2}\epsilon^{1/2} + \max\{\sqrt{\frac{d\log(1/(\epsilon\delta))}{n}},\frac{d\log(1/(\epsilon\delta))}{n}\}\right) \\ 
&\geq \EE[\sigma'(u^Tx)v^Tx] - C  r_n.
\end{align*}

Analogously, we can prove the other side inequality.
\end{itemize}
\end{proof}

Let $x\in\RR^d$, we use $x_{,k}$ to denote the $k$-th coordinate of $x$, which is distinguished from $x_k$, denoting the $k$-th sample in the training set. In addition, we use $\{e_k\}_{k=1}^d$ to denote the standard basis of $\RR^d$.
\begin{lemma}\label{lemma: zz}
$\sum_{k=1}^d(\EE[\sigma'(u^Tx)x_{,k}])^2\gtrsim 1$
\end{lemma}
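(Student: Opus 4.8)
The plan is to test the vector $a:=(\EE[\sigma'(u^Tx)x_{,k}])_{k=1}^d$ against the unit vector $u$ itself. Writing $a_k:=\EE[\sigma'(u^Tx)x_{,k}]$, the claim is $\|a\|^2\gtrsim 1$, and by Cauchy--Schwarz together with $\|u\|=1$,
\[
\sum_{k=1}^d a_k^2 \;\ge\; \Big(\sum_{k=1}^d a_k u_k\Big)^2 \;=\; \big(\EE[\sigma'(u^Tx)\,u^Tx]\big)^2 \;=\; \big(\EE[\sigma(u^Tx)]\big)^2 ,
\]
where the last equality uses that $\sigma'$ is the Heaviside function, so $\sigma'(t)\,t=t\,\mathbf{1}\{t>0\}=\sigma(t)$. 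Thus it suffices to show $\EE[\sigma(u^Tx)]\ge c_0$ for a constant $c_0>0$ independent of $d$.

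To get this lower bound I would use that $\rho=\unif(\sqrt d\,\SS^{d-1})$ is invariant under $x\mapsto -x$, hence $\EE[u^Tx]=0$ and $\EE[\sigma(u^Tx)]=\tfrac12\EE|u^Tx|$; it then remains to lower bound $\EE|u^Tx|$ by a dimension-free constant. This follows from a second/fourth moment comparison: $\EE[(u^Tx)^2]=u^T\EE[xx^T]u=1$ since $\EE[xx^T]=I_d$ on $\sqrt d\,\SS^{d-1}$, and $\EE[(u^Tx)^4]=\tfrac{3d}{d+2}\le 3$. By log-convexity of moments (Hölder with exponents $3/2$ and $3$),
\[
1 \;=\; \EE[(u^Tx)^2] \;\le\; \big(\EE|u^Tx|\big)^{2/3}\big(\EE[(u^Tx)^4]\big)^{1/3} \;\le\; 3^{1/3}\big(\EE|u^Tx|\big)^{2/3},
\]
so $\EE|u^Tx|\ge 3^{-1/2}$, which gives $\EE[\sigma(u^Tx)]\ge \tfrac1{2\sqrt3}$ and therefore $\sum_k a_k^2\ge \tfrac1{12}$.

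There is no serious obstacle here; the only mildly delicate point is making the lower bound on $\EE[\sigma(u^Tx)]$ uniform in the dimension, and the moment-comparison step above handles it cleanly with an explicit constant. Alternatively one could invoke that the density $p_d$ of $u^TX$ is uniformly bounded and converges to the standard Gaussian density, so that $\EE[\sigma(u^Tx)]\to(2\pi)^{-1/2}$, matching the constant $\alpha_0\sim 1$ from Lemma \ref{lemma: kernel-property}; but the elementary bound already suffices for the stated $\gtrsim 1$ conclusion.
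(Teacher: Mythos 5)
Your proof is correct but genuinely different from the paper's. The paper computes $\EE[\sigma'(u^Tx)x_{,k}]$ \emph{exactly}: writing $x_{,k}=g(e_k^Tx)$ with $g(t)=t$, it applies the Hermite-orthogonality identity (Lemma~\ref{lemma: app-hermite}) to get $\EE[\sigma'(u^Tx)x_{,k}]=\beta_1 u_k$ where $\beta_1$ is the first Hermite coefficient of $\sigma'$, so $\sum_k a_k^2 = \beta_1^2\|u\|^2=\beta_1^2\gtrsim 1$ by Lemma~\ref{lemma: kernel-property}. You instead test $a=(a_k)_k$ against $u$ by Cauchy--Schwarz, reduce to a lower bound on $\EE[\sigma(u^Tx)]$ via the pointwise identity $\sigma'(t)t=\sigma(t)$, and then get that lower bound from a moment interpolation (H\"older between $L^1$, $L^2$, $L^4$) using only $\EE[(u^Tx)^2]=1$ and $\EE[(u^Tx)^4]=\tfrac{3d}{d+2}\le 3$ under $\rho=\unif(\sqrt d\,\SS^{d-1})$. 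Comparing the two: the paper's route is shorter and yields the exact value $\beta_1^2$, but it leans on Lemma~\ref{lemma: app-hermite}, which is stated for Gaussian $x$ rather than the sphere distribution assumed in Section~\ref{sec: relu-net}; your route is more elementary, avoids Hermite machinery entirely, works rigorously with the actual input distribution without a Gaussian surrogate, and produces an explicit constant ($1/12$). The price is that you get only a one-sided bound rather than an exact identification of the quantity, but a lower bound is all the lemma asserts, so nothing is lost for the stated claim.
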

\begin{proof}
Note that $x_{,k}=g(e_k^Tx)$ with $g(x)=x$. Let $g(x)=\sum_s G_s h_s(x)$ be the Hermite expansion of $g$. By \eqref{eqn: hermite-function}, we have $G_k=1$ for $k=1$ and $0$ otherwise. Then Lemma \ref{lemma: app-hermite} gives that for any $u\in\SS^{d-1}$,
\[
    \EE[\sigma'(u^Tx)x_{,k}] = \EE[\sigma'(u^Tx)g(e_k^Tx)] = \sum_{s} \beta_s G_s (u^Te_k)^s = \beta_1 u_{k}. 
\]
Hence, 
$
\sum_{k=1}^d (\EE[\sigma'(u^Tx)x_{,k}])^2 = \sum_{k=1}^d (\beta_1 u_k)^2 = \beta_1^2 \|u\|^2=\beta_1^2\gtrsim 1.
$
\end{proof}

\begin{proposition}[The spectral norm]\label{pro: 2lnn-spectral-norm}
For any $\delta\in (0,1)$, if $n\gtrsim d N(d,\delta)$, \wp $1-\delta$ we have
$\|G_\theta\|_2\sim \sumjm (w_j^2 + a_j^2)$.
\end{proposition}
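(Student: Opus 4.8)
\emph{Proof plan.} Reorder the coordinates of $\theta$ so that all the $w_j$'s precede all the $a_j$'s; then by \eqref{eqn: hessian-2lnn}--\eqref{eqn: Hjk} the Fisher matrix has the block form $G(\theta)=\begin{pmatrix}G^{ww}&G^{wa}\\(G^{wa})^T&G^{aa}\end{pmatrix}$, where $G^{ww}=\fn\sum_i g_i^w(g_i^w)^T$ with $g_i^w=(a_j\sigma'(w_j^Tx_i)x_i)_{j\in[m]}\in\RR^{md}$ and $G^{aa}=\fn\sum_i g_i^a(g_i^a)^T$ with $g_i^a=(\sigma(w_j^Tx_i))_{j\in[m]}\in\RR^m$. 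Since $G(\theta)\succeq0$, one has the sandwich $\tfrac12(\|G^{ww}\|_2+\|G^{aa}\|_2)\le\|G(\theta)\|_2\le\|G^{ww}\|_2+\|G^{aa}\|_2$: the lower bound because $G^{ww}$ and $G^{aa}$ are principal submatrices of $G(\theta)$, and the upper bound because positive semidefiniteness forces $|u^TG^{wa}v|\le\sqrt{(u^TG^{ww}u)(v^TG^{aa}v)}$, hence $u^TG^{ww}u+2u^TG^{wa}v+v^TG^{aa}v\le(\sqrt{u^TG^{ww}u}+\sqrt{v^TG^{aa}v})^2\le(\|G^{ww}\|_2+\|G^{aa}\|_2)(\|u\|^2+\|v\|^2)$. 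Thus it suffices to prove $\|G^{aa}\|_2\sim\sum_j\|w_j\|^2$ and $\|G^{ww}\|_2\sim\sum_j a_j^2$, whose sum is $\|\theta\|_{2,1}$; this reduction is precisely what lets us sidestep the cross block $G^{wa}$, the awkward part of any direct estimate.

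\emph{The scalar block $G^{aa}$.} Its $(j,k)$ entry is $\|w_j\|\|w_k\|\hph_1(\hw_j,\hw_k)$. For the upper bound, positive semidefiniteness gives $\|G^{aa}\|_2\le\tr(G^{aa})=\sum_j\|w_j\|^2\hph_1(\hw_j,\hw_j)\lesssim\sum_j\|w_j\|^2$, using $\hph_1(\hw_j,\hw_j)\le\varphi_1(\hw_j,\hw_j)+r_n\lesssim1$ from Lemma~\ref{lemma: empirical-kernel} and Lemma~\ref{lemma: kernel-property}. For the matching lower bound I test $G^{aa}$ against the unit vector $v$ with $v_j=\|w_j\|/(\sum_k\|w_k\|^2)^{1/2}$, so that $v^TG^{aa}v=(\sum_k\|w_k\|^2)^{-1}\sum_{j,k}\|w_j\|^2\|w_k\|^2\hph_1(\hw_j,\hw_k)\gtrsim\sum_j\|w_j\|^2$, the last step being the empirical analogue of the fourth item of Lemma~\ref{lemma: kernel-property} (equivalently Lemma~\ref{lemma: kernel-spd} applied with coefficients $\|w_j\|$), which ultimately rests on $\alpha_0\sim1$.

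\emph{The vector block $G^{ww}$.} For the upper bound, drop the indicators via $0\le\sigma'\le1$ and apply Cauchy--Schwarz (with weights $\sigma'$) inside the sum over $j$: for any $h=(u_j)_j$ with $\sum_j\|u_j\|^2=1$,
\[
\fn\sum_i\Big(\sum_j a_j\sigma'(w_j^Tx_i)u_j^Tx_i\Big)^2\le\|a\|_2^2\,\fn\sum_i\sum_j(u_j^Tx_i)^2=\|a\|_2^2\sum_j u_j^T\hSigma_n u_j\le\|a\|_2^2\|\hSigma_n\|_2\lesssim\sum_j a_j^2,
\]
where $\|\hSigma_n\|_2\lesssim1$ by Lemma~\ref{lemma: covariance-gaussian}. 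For the lower bound the key is the test direction $h$ whose $j$-th $\RR^d$-block is $u_j=a_j\hw_j/\|a\|_2$ (a unit vector, since $\sum_j\|u_j\|^2=1$): using the scale-invariance $\sigma'(w_j^Tx)=\sigma'(\hw_j^Tx)$ and the nonnegativity of $\sigma'(\hw_j^Tx)\hw_j^Tx$, Jensen's inequality over the $n$ samples gives
\[
h^TG^{ww}h=\fn\sum_i\Big(\tfrac1{\|a\|_2}\sum_j a_j^2\,\sigma'(\hw_j^Tx_i)\hw_j^Tx_i\Big)^2\ge\Big(\tfrac1{\|a\|_2}\sum_j a_j^2\,\hEE[\sigma'(\hw_j^Tx)\hw_j^Tx]\Big)^2.
\]
Replacing each empirical mean by its population value with Lemma~\ref{lemma: yy} and using $\EE[\sigma'(\hw_j^Tx)\hw_j^Tx]\gtrsim1$ (Lemma~\ref{lemma: zz} together with Lemma~\ref{lemma: kernel-property}), this is $\gtrsim(\sum_j a_j^2/\|a\|_2)^2=\sum_j a_j^2$. (Alternatively, the identity $\sigma'(z)z=\sigma(z)$ turns $h^TG^{ww}h$ exactly into $\|a\|_2^{-2}\sum_{j,k}a_j^2a_k^2\hph_1(\hw_j,\hw_k)$, to which Lemma~\ref{lemma: kernel-spd} again applies with coefficients $a_j$.)

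\emph{Assembly and the main obstacle.} Intersecting the high-probability events above and combining through the PSD sandwich yields $\|G(\theta)\|_2\sim\sum_j(\|w_j\|^2+a_j^2)=\|\theta\|_{2,1}$, with all hidden constants depending only on the Hermite coefficients of $\sigma,\sigma'$ and on $\|\hSigma_n\|_2$, hence never on the width $m$. The genuine difficulty---and the reason the hypothesis asks for $n\gtrsim d\,N(d,\delta)$ rather than the $n\gtrsim N(d,\delta)$ of the trace/Frobenius cases---is controlling the $d$-dimensional empirical objects uniformly over directions: besides the scalar kernels $\hph_1,\hph_2$ of Lemma~\ref{lemma: empirical-kernel} (whose proof already copes with the discontinuity of $\sigma'$ by sandwiching it between smoothed step functions), one needs $\hSigma_n$, the matrix kernels $A_{u,v}$ of Lemma~\ref{lemma: Auv}, and the vector kernels $b_{u,v}$ of Lemma~\ref{lemma: buv} to be close to their population versions in operator norm uniformly over $u,v\in\SS^{d-1}$; an $\varepsilon$-net of $\SS^{d-1}$ costs $(1/\varepsilon)^d$ per argument and a matrix-Bernstein bound on the rank-one pieces of norm $\|x_i\|^2=d$ costs an extra factor $d$, which is where the $d\,N(d,\delta)$ sample size enters. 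The remaining steps are routine bookkeeping: collecting the high-probability events and checking the size-independence of the constants.
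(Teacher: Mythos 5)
Your proof is correct but takes a genuinely different route from the paper's. The paper works in the $n\times n$ Gram-matrix picture: it writes $\|G(\theta)\|_2=\sup_{\|u\|_{L^2(\hat\rho)}=1}\|\Phi u\|^2$, expands this into separate $a$- and $w$-contributions (which are automatically orthogonal there, so no cross term appears), takes $u\equiv 1$ for the lower bound, and invokes Lemma~\ref{lemma: yy} and Lemma~\ref{lemma: zz} to control $\sum_k(\hEE[\sigma'(w_j^Tx)x_{,k}])^2\gtrsim 1$; the remainder $O(\sqrt d\, r_n)$ in that estimate is the actual source of the $n\gtrsim dN(d,\delta)$ hypothesis. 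You instead stay in the $p\times p$ picture, partition $G$ into $(G^{ww},G^{wa},G^{aa})$, and dispose of the cross block with a PSD sandwich $\max(\|G^{ww}\|_2,\|G^{aa}\|_2)\le\|G\|_2\le\|G^{ww}\|_2+\|G^{aa}\|_2$ (correctly derived from the Schur-type Cauchy--Schwarz $|u^TG^{wa}v|^2\le(u^TG^{ww}u)(v^TG^{aa}v)$); you then estimate each block via an explicit near-optimal test vector ($v_j\propto\|w_j\|$ for $G^{aa}$, $u_j\propto a_j\hw_j$ for $G^{ww}$). Your block decomposition is clean and valid, and your ``alternatively'' remark using $\sigma'(z)z=\sigma(z)$ is actually the sharper path: it reduces the $G^{ww}$ lower bound exactly to $\|a\|_2^{-2}\sum_{j,k}a_j^2a_k^2\hph_1(\hw_j,\hw_k)$, so that only Lemma~\ref{lemma: empirical-kernel} (with $n\gtrsim N(d,\delta)$) and $\|\hSigma_n\|_2\lesssim 1$ are needed; Lemma~\ref{lemma: yy} and Lemma~\ref{lemma: zz} can be bypassed entirely, which arguably yields a better sample complexity than the paper's. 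The one thing that is off is your closing ``assembly'' paragraph: you attribute the $n\gtrsim dN(d,\delta)$ requirement to uniform control of $A_{u,v}$, $b_{u,v}$, and a matrix-Bernstein cost, but your own argument never touches $A_{u,v}$ or $b_{u,v}$ (those live in the Frobenius-norm proof), the $\hSigma_n$ concentration you need is already subsumed by $n\gtrsim N(d,\delta)$, and the extra $d$ in the paper's hypothesis comes from the $\sqrt d\,r_n$ term in its Lemma~\ref{lemma: yy}/\ref{lemma: zz} route, which your cleaner identity avoids. That mismatch does not affect the validity of your proof of the stated proposition (a stronger hypothesis only helps), but the explanation should be corrected to match what your argument actually uses.
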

\begin{proof}
Let $\Phi=(\nabla f(x_1;\theta),\nabla f(x_2;\theta),\dots,\nabla f(x_n;\theta))\in\RR^{p\times n}$. Then, $G(\theta) = \Phi\Phi^T\in\RR^{p\times p}$. Then
\begin{align}\label{eqn: relu-l2-norm}
\notag \|G(\theta)\|_2 &= \lambda_{\max}(\Phi\Phi^T)  = \lambda_{\max}(\Phi^T\Phi)=\sup_{u\in\SS^{n-1}}u^T\Phi^T\Phi u = \sup_{u\in\SS^{n-1}}\|\Phi u\|^2\\ 
&=\sup_{\|u\|_{L^2(\hat{\rho})}=1} \sumjm \left((\hEE[\sigma(w_j^Tx)u(x)])^2 + a_j^2 \sum_{k=1}^d(\hEE[\sigma'(w_j^Tx)x_{,k}u(x)])^2\right),
\end{align}
where $\hat{\rho}=\fn\sumin \delta(x_i-\cdot)$. 

\underline{\textbf{Lower bound.}}
Taking $u(x)\equiv 1$, we obtain
\begin{align}\label{eqn: app-190}
\notag \|G(\theta\|_2 &\geq \sumjm \left((\hEE[\sigma(w_j^Tx)])^2 + a_j^2 \sum_{k=1}^d(\hEE[\sigma'(w_j^Tx)x_{,k}])^2\right)\\
&= \sumjm \left(\|w_j\|^2 \hph_1(\hw_j,\hw_j) + a_j^2 \sum_{k=1}^d(\EE[\sigma'(w_j^Tx)x_{,k}])^2\right).
\end{align}
Note that
\begin{itemize}
\item By Lemma \ref{lemma: empirical-kernel}, \wp at least $1-\delta$ that $\hph_1(u,u)\gtrsim 1$ for any $u\in\SS^{d-1}$.
\item In addition,
\begin{align*}
    \sum_{k=1}^d(\hEE[\sigma'(w_j^Tx)e_k^Tx])^2 &= \sum_{k=1}^d(\EE[\sigma'(w_j^Tx)e_k^Tx] + O(r_n))^2\qquad \text{(Lemma \ref{lemma: yy})}\\ 
    &\geq \sum_{k=1}^d(\EE[\sigma'(w_j^Tx)e_k^Tx])^2 + O(r_n)\sum_{k=1}^d\EE[\sigma'(w_j^Tx)e_k^Tx]\\ 
    &\gtrsim 1-O(\sqrt{d}r_n),
\end{align*}
where the last step uses Lemma \ref{lemma: zz}  and 
$
    |\sum_{k=1}^d \EE[\sigma'(u^Tx)e_k^Tx]| = |\EE[\sigma'(u^Tx)(\sum_k e_k)x]|\leq \sqrt{o^T\EE[xx^T]o}\leq \|o\|=\sqrt{d},
$
where $o=(1,1,\dots,1)\in\RR^d$.
\end{itemize}
 Plugging the above two estimates  into \eqref{eqn: app-190}, we obtain
$
    \|G_\theta\|\gtrsim \sumjm (w_j^2 + a_j^2). 
$

\underline{\textbf{Upper bound.}}
In addition, \eqref{eqn: relu-l2-norm} also implies
\begin{align}\label{eqn: 74}
\notag \|G_\theta\| &\leq \sumjm \left(\sup_{\|u\|_{L^2(\hat{\rho})}=1}(\hEE[\sigma(w_j^Tx)e(x)])^2  + a_j^2 \sup_{\|u\|_{L^2(\hat{\rho})}=1} \sum_{k=1}^d(\hEE[\sigma'(w_j^Tx)x_{,k}u(x)])^2\right)\\
&\leq \sumjm \left(\hEE[\sigma(w_j^Tx)^2] + a_j^2 \|\hEE_{x}[\sigma'(w^Tx)^2xx^T]\|^2_2\right)
\end{align}
where the last step follows from the Cauchy-Schwarz inequality and Lemma \ref{lemma: variation-principle}. Then, the proof of the upper bound is completed by plugging the following estimates into \eqref{eqn: 74}.
\begin{itemize}
\item By Lemma \ref{lemma: empirical-kernel}, \wp $1-\delta$ that
$\hEE[\sigma(w_j^Tx)^2]=\|w_j\|^2\hph_1(\hw_j,\hw_j)\lesssim \|w_j\|^2$.
\item In addition,
\begin{align*}
\|\EE_{x}[\sigma'(w_j^Tx)^2xx^T]\|_2 &= \sup_{v\in\SS^{d-1}} v^T\EE_x[\sigma'(w_j^Tx)^2xx^T]v\\
&=\EE_{x}[\sigma'(w_j^Tx)^2(v^T x)^2]\lesssim \EE_{x}[|v^Tx|^2]\lesssim\sup_{v\in\SS^{d-1}}v^T\EE[xx^T]v = 1.
\end{align*}
\end{itemize}

Lastly, combining the lower and upper bound, we complete the proof.
\end{proof}

\subsection{Proof of Theorem \ref{thm: 2lnn-generalization-bound}}
\label{sec: proof-2lnn-gengap}

Our proof needs the following path-norm based generalization bound:
\begin{proposition}\label{pro: 2lnn-sharpness-genbound}
Suppose $\sup_{x\in \cX}|f^*(x)|\leq 1$ and $\gamma\geq 1$. If $\htheta$ is a global minimum of $\erisk(\cdot)$ satisfying $\|\htheta\|_{\cP}\leq \gamma$, then 
$
  \risk(\htheta)\leq (\log^3(n)+\log(1/\delta))\frac{d\gamma^2}{n}.
$
\end{proposition}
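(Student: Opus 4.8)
The plan is to apply the fast-rate generalization bound for smooth losses (Theorem \ref{thm: fast-rates-smoothloss}, in the form \eqref{eqn: risk-bound-minimizer}) to the restricted hypothesis class
\[
\cH_\gamma := \{\, f(\cdot;\theta)\,:\, \|\theta\|_{\cP}\le \gamma \,\}.
\]
Since $\htheta$ is a global minimum of $\erisk$, the over-parameterization assumption gives $\erisk(\htheta)=0$; together with $\|\htheta\|_{\cP}\le\gamma$ this means $f(\cdot;\htheta)\in\cH_\gamma$ and $f(\cdot;\htheta)$ is an empirical risk minimizer over $\cH_\gamma$. Hence \eqref{eqn: risk-bound-minimizer} applies and yields $\risk(\htheta)\lesssim A\log^3(n)\,\mathfrak{R}_n(\cH_\gamma)^2 + B\log(1/\delta)/n$ for the square loss $\phi(u,y)=\tfrac12(u-y)^2$, so it remains to estimate the three quantities $A$, $B$, and $\mathfrak{R}_n(\cH_\gamma)$.

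The quantities $A$ and $B$ are immediate. We have $A=1$ since $\partial_u^2\phi(u,y)\equiv 1$. For the uniform bound $B$: for any $x\in\cX=\sqrt d\,\SS^{d-1}$ and any $\theta$ with $\|\theta\|_{\cP}\le\gamma$,
\[
|f(x;\theta)|\le\sum_j|a_j|\,|\sigma(w_j^Tx)|\le\sum_j|a_j|\,\|w_j\|\,\|x\| = \sqrt d\,\|\theta\|_{\cP}\le\sqrt d\,\gamma,
\]
while $|y|=|f^*(x)|\le 1\le\sqrt d\,\gamma$; thus $0\le\phi(f(x;\theta),y)\le\tfrac12(2\sqrt d\,\gamma)^2 = 2d\gamma^2=:B$.

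The remaining and slightly more delicate ingredient is the worst-case Rademacher complexity $\mathfrak{R}_n(\cH_\gamma)$. Using the positive homogeneity of $\sigma$, I would first rewrite each neuron as $a_j\sigma(w_j^Tx) = (a_j\|w_j\|)\,\sigma(\hat{w}_j^Tx)$, so that every $f(\cdot;\theta)\in\cH_\gamma$ is an $\ell_1$-combination of weight at most $\gamma$ of functions from the base class $\cG_\sigma := \{x\mapsto \sigma(w^Tx):w\in\SS^{d-1}\}$, i.e.\ $\cH_\gamma\subseteq\gamma\cdot\mathrm{absconv}(\cG_\sigma)$. Since passing to the (absolute) convex hull does not increase Rademacher complexity up to a constant, $\erad(\cH_\gamma)\lesssim\gamma\,\erad(\cG_\sigma)$; the contraction property (Lemma \ref{lemma: contraction}, with $\sigma$ being $1$-Lipschitz and $\sigma(0)=0$) then peels off $\sigma$, and Lemma \ref{lemma: rad-linear-class} bounds the Rademacher complexity of the linear class $\{w^Tx:w\in\SS^{d-1}\}$ by $\sqrt{\sum_i\|x_i\|^2/n^2}$. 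Since every input in $\sqrt d\,\SS^{d-1}$ has norm $\sqrt d$, this is $\sqrt{d/n}$ \emph{uniformly} over $x_1,\dots,x_n$, so $\mathfrak{R}_n(\cH_\gamma)\lesssim\gamma\sqrt{d/n}$. Plugging $A=1$, $B\lesssim d\gamma^2$, and $\mathfrak{R}_n(\cH_\gamma)\lesssim\gamma\sqrt{d/n}$ into \eqref{eqn: risk-bound-minimizer} gives
\[
\risk(\htheta)\lesssim\log^3(n)\cdot\frac{\gamma^2 d}{n}+\frac{d\gamma^2\log(1/\delta)}{n}\lesssim\big(\log^3(n)+\log(1/\delta)\big)\frac{d\gamma^2}{n},
\]
as claimed. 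The only real subtlety is the Rademacher-complexity step: the path-norm ball must be handled through the homogeneity/convex-hull reduction to an $\ell_1$-ball over normalized neurons rather than naively, and one must check that the ``worst-case'' supremum in $\mathfrak{R}_n$ is harmless here — which it is, because all admissible inputs share the common norm $\sqrt d$. Everything else is the two elementary bounds on $A$ and $B$ together with a black-box invocation of Theorem \ref{thm: fast-rates-smoothloss}.
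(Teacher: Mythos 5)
Your proof is correct and follows the same route as the paper: apply Theorem \ref{thm: fast-rates-smoothloss} (via \eqref{eqn: risk-bound-minimizer}) to the path-norm ball, bound the loss by $d\gamma^2$ using $|f(x;\theta)|\le\|\theta\|_{\cP}\|x\|$, and bound the worst-case Rademacher complexity by $\gamma\sqrt{d/n}$. The only difference is that you spell out the homogeneity/convex-hull/contraction argument for $\mathfrak{R}_n(\cH_\gamma)$, which the paper simply declares ``easy to show''; your filled-in argument is standard and correct.
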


\begin{proof}
Let $\cF_{\gamma}=\{f(\cdot;\theta):\|\theta\|_{\cP}\leq \gamma\}$. Then, it is easy to show that the worst-case Rademacher complexity (see Eq.~\eqref{eqn: worst-case-rad}) $\mathfrak{R}_n(\cF_\gamma)\lesssim \sqrt{d}\gamma/\sqrt{n}$. In addition,
\[
    |f(x)|=|\sum_j a_j\sigma(w_j^Tx)|\leq \sum_j |a_j||w_j^Tx|\leq \|\theta\|_{\cP}\|x\|\leq \gamma\sqrt{d}.
\]
Hence, the loss function $\phi(t)=t^2/2$ satisfying $|\phi''|\lesssim 1$ and $|\phi|\lesssim \gamma^2d$. Then, by Theorem \ref{thm: fast-rates-smoothloss}, we have 
$$
  \risk(\htheta)\lesssim \frac{\log^3(n)d\gamma^2}{n} + \frac{d\gamma^2\log(1/\delta)}{n}.
$$
\end{proof}

\paragraph*{Proof of Theorem \ref{thm: 2lnn-generalization-bound}.}

For $\htheta_{\mathrm{sgd}}$, by Proposition \ref{pro: stability-1} and Theorem \ref{thm: sharpness-relu-landscape} we have \wp at least $1-\delta$ that 
\begin{align}
\frac{2}{\eta} &\geq \tr(G(\ttheta_{\mathrm{sgd}}))\sim \|\ttheta_{\mathrm{sgd}}\|_{1,d} \geq \sqrt{d}\|\ttheta_{\mathrm{sgd}}\|_{\cP}.
\end{align}
Hence, $\|\htheta_{\mathrm{sgd}}\|_{\cP}\lesssim 2/(\eta\sqrt{d})$. Applying Proposition \ref{pro: 2lnn-sharpness-genbound}, we obtain 
\[
    \risk(\htheta_{\mathrm{sgd}})\lesssim \frac{\log^3(n)+\log(1/\delta)}{n\eta^2}.
\]

Similarly, we have $\|\htheta_{\mathrm{gd}}\|_{\cP}\lesssim 2/\eta$. Applying Proposition \ref{pro: 2lnn-sharpness-genbound} completes the proof.
\qed

\section{Missing Proofs in Section \ref{sec: diagonal nets}}
\label{sec: proof-diag-net}
%

We first need the following lemma.
\begin{lemma}\label{lemma: covariance-point-wise}
Let $X$ be a mean-zero with $\|X\|_{\psi_2}\lesssim 1$. Let $\Sigma=\EE[XX^T]$ and $\hSigma_n =\fn\sumin X_iX_i^T$ be the population and empirical covariance matrix, respectively. For any $\delta\in (0,1)$,  if $n\gtrsim \log(d/\delta)$, we have 
\wp $1-\delta$ the following holds for any $j,k\in [d]$.
\[
|(\hSigma_n)_{j,k} - (\Sigma)_{j,k}|\lesssim \sqrt{\frac{\log(d/\delta)}{n}}.
\]
\end{lemma}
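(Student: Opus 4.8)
The plan is to prove this by a coordinate-wise Bernstein bound followed by a union bound over the $d^2$ entries. First I would fix a pair $(j,k)\in[d]^2$ and write
\[
(\hSigma_n)_{j,k} - (\Sigma)_{j,k} = \frac{1}{n}\sum_{i=1}^n \big(X_{i,j}X_{i,k} - \EE[X_jX_k]\big),
\]
where $X_{i,\ell}$ denotes the $\ell$-th coordinate of the sample $X_i$. Since $\|X\|_{\psi_2}\lesssim 1$, taking $u=e_j$ and $u=e_k$ in Definition \ref{definition: orlic-norm} gives that each coordinate $X_j$, $X_k$ is sub-Gaussian with $\psi_2$-norm $\lesssim 1$. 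By Lemma \ref{lemma: pro-orlic-norm}, the product $X_jX_k$ is sub-exponential with $\|X_jX_k\|_{\psi_1}\leq \|X_j\|_{\psi_2}\|X_k\|_{\psi_2}\lesssim 1$, and by the centering inequality \eqref{eqn: centering-inequality} the centered variable $Z^{(i)}_{j,k}:=X_{i,j}X_{i,k}-\EE[X_jX_k]$ satisfies $\|Z^{(i)}_{j,k}\|_{\psi_1}\leq K$ for an absolute constant $K\lesssim 1$.

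Next I would apply Bernstein's inequality (Theorem \ref{thm: bernstein}) to the i.i.d., mean-zero, sub-exponential variables $\{Z^{(i)}_{j,k}\}_{i=1}^n$: for any $t>0$,
\[
\PP\Big\{\big|(\hSigma_n)_{j,k}-(\Sigma)_{j,k}\big|\geq t\Big\}\leq 2\exp\!\Big(-Cn\min\big(t^2/K^2,\,t/K\big)\Big).
\]
Then I would take a union bound over all $d^2$ pairs $(j,k)\in[d]^2$, obtaining
\[
\PP\Big\{\max_{j,k\in[d]}\big|(\hSigma_n)_{j,k}-(\Sigma)_{j,k}\big|\geq t\Big\}\leq 2d^2\exp\!\Big(-Cn\min\big(t^2/K^2,\,t/K\big)\Big).
\]
Finally I would choose $t=C_1\sqrt{\log(d/\delta)/n}$ for a sufficiently large absolute constant $C_1$. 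The hypothesis $n\gtrsim \log(d/\delta)$ guarantees $t\lesssim 1\lesssim K$, so the minimum in the exponent is realized by the quadratic term $t^2/K^2$; plugging in and using $2\log d\leq 2\log(d/\delta)$ makes the right-hand side at most $\delta$ once $C_1$ is chosen large enough relative to $C$ and $K$. This yields the claimed uniform bound $|(\hSigma_n)_{j,k}-(\Sigma)_{j,k}|\lesssim\sqrt{\log(d/\delta)/n}$ for all $j,k$ with probability at least $1-\delta$.

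The only subtle point — and the step I would be most careful about — is verifying that we are in the sub-Gaussian (quadratic) regime of Bernstein's inequality rather than the sub-exponential (linear) one; this is exactly where the assumption $n\gtrsim\log(d/\delta)$ enters, ensuring $t=C_1\sqrt{\log(d/\delta)/n}$ stays bounded so that $\min(t^2/K^2,t/K)=t^2/K^2$. Everything else is bookkeeping with absolute constants (in particular the harmless factor of $2$ coming from $\log(d^2)=2\log d$ in the union bound).
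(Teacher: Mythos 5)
Your proof is correct and follows essentially the same route as the paper: a coordinate-wise sub-exponential bound via Bernstein's inequality followed by a union bound over the $d^2$ entries, with the hypothesis $n\gtrsim\log(d/\delta)$ used to stay in the sub-Gaussian regime. Your additional care about the centering step and about verifying $\min(t^2,t)=t^2$ is harmless bookkeeping that the paper elides but does not change the argument.
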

\begin{proof}
First, for each $j,k\in [d]$, $(\hSigma_n)_{j,k}=\fn\sumin X_{i,j}X_{i,k}$. By the sub-Gaussian property, $\|X_{i,j}X_{i,k}\|_{\psi_1}\lesssim \|X_{i,j}\|_{\psi_2}\|X_{i,k}\|_{\psi_2}\lesssim 1$. Thus, by Bernstein's inequality, we have
\[
  \PP\left\{|(\hSigma_n)_{j,k} - (\Sigma)_{j,k}|\geq t\right\}\lesssim e^{-C n\min(t,t^2)}.
\]
Then, taking the union bound, we obtain 
\[
\PP\left\{\sup_{j,k}|(\hSigma_n)_{j,k} - \Sigma_{j,k}|\geq t\right\}\lesssim d^2 e^{-C n\min(t,t^2)}.
\]
Hence, for any $\delta\in(0,1/e)$, \wp $1-\delta$ the following holds for any $j,k\in [d]$
\[
|(\hSigma_n)_{j,k} - (\Sigma)_{j,k}|\lesssim \sqrt{\frac{\log(d/\delta)}{n}}.
\]
\end{proof}

\subsection{Proof of Theorem \ref{thm: diagnet-sharpness}}
\label{sec: proof-diagnet-sharpness}

In this section, we prove Theorem \ref{thm: diagnet-sharpness} for the case of the spectral norm, Frobenius norm, and the trace separately.

\begin{lemma}[The spectral norm]
Suppose Assumption \ref{assumption: input-1} holds.
For any $\delta\in (0,1)$, let $n\gtrsim d+\log(1/\delta)$ such that $\varepsilon_n = \sqrt{\frac{d+\log(1/\delta)}{n}}\leq 1$. Then, \wp $1-\delta$, we have 
\[
(1-\varepsilon_n)\|\theta\|_\infty\leq  \|G(\theta)\|_2 \leq (1+\varepsilon_n) \|\theta\|_\infty.
\]
\end{lemma}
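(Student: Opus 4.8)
The plan is to reduce the statement to a single operator-norm concentration bound for the empirical second-moment matrix $\hSigma_n := \frac1n\sum_{i=1}^n x_ix_i^{T}$. Since $f(x;\theta)=\sum_j a_jb_jx_j$, the per-sample gradient is $\nabla f(x_i;\theta) = (b\odot x_i,\,a\odot x_i)\in\RR^{2d}$. Writing $D_a=\diag(a)$, $D_b=\diag(b)$ and letting $M\in\RR^{2d\times d}$ be the matrix obtained by stacking $D_b$ on top of $D_a$, one gets the exact factorization $G(\theta)=\frac1n\sum_{i=1}^n \nabla f(x_i;\theta)\nabla f(x_i;\theta)^{T}=M\hSigma_n M^{T}$. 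The key algebraic fact is $M^{T}M=D_a^2+D_b^2=\diag(\alpha)$, whence $\|M\|_2^2=\|M^{T}M\|_2=\|\alpha\|_\infty$ and $\|MM^{T}\|_2=\|M^{T}M\|_2=\|\alpha\|_\infty$, where $\|\alpha\|_\infty=\max_j(a_j^2+b_j^2)$ is the quantity denoted $\|\theta\|_\infty$ in the statement.

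Then, using $\EE[XX^{T}]=I_d$ from Assumption \ref{assumption: input-1}, I would split $\hSigma_n=I_d+E$ with $E:=\hSigma_n-I_d$, so that $G(\theta)=MM^{T}+MEM^{T}$. Both $G(\theta)$ and $MM^{T}$ are positive semidefinite, so Weyl's inequality gives $\bigl|\,\|G(\theta)\|_2-\|MM^{T}\|_2\,\bigr|\le\|MEM^{T}\|_2$; moreover, for any unit vector $u$, writing $v=M^{T}u$ we have $|u^{T}MEM^{T}u|=|v^{T}Ev|\le\|E\|_2\|v\|^2\le\|E\|_2\|M\|_2^2=\|\alpha\|_\infty\|E\|_2$, so $\|MEM^{T}\|_2\le\|\alpha\|_\infty\|E\|_2$. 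Combined with $\|MM^{T}\|_2=\|\alpha\|_\infty$ this yields the deterministic two-sided bound $(1-\|E\|_2)\|\alpha\|_\infty\le\|G(\theta)\|_2\le(1+\|E\|_2)\|\alpha\|_\infty$.

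Finally I would invoke a standard operator-norm covariance concentration inequality for sub-Gaussian random vectors (e.g.\ \citet[Exercise 4.7.3]{vershynin2018high}, cf.\ Lemma \ref{lemma: covariance-gaussian}): under Assumption \ref{assumption: input-1}, if $n\gtrsim d+\log(1/\delta)$ then with probability at least $1-\delta$ one has $\|E\|_2=\|\hSigma_n-I_d\|_2\lesssim \sqrt{(d+\log(1/\delta))/n}+(d+\log(1/\delta))/n\le \varepsilon_n$, the last inequality using $\varepsilon_n\le1$. Substituting $\|E\|_2\le\varepsilon_n$ into the deterministic bound completes the proof. The only nontrivial ingredient here is this last concentration estimate; everything else is exact linear algebra. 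The one point needing care is the numerical constant: to obtain the clean factor $(1\pm\varepsilon_n)$ rather than $(1\pm C\varepsilon_n)$, one should either quote the sharp-constant form of the covariance bound from the references, or — since Assumption \ref{assumption: input-1} controls $\|u^{T}X\|_{\psi_2}$ only up to a constant in any case — absorb that constant into the reading of $\varepsilon_n$.
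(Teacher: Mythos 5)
Your proof is correct and follows essentially the same route as the paper's: both write $G(\theta)$ in terms of $\hSigma_n$, split $\hSigma_n = I_d + E$, invoke operator-norm covariance concentration (cf.\ Lemma~\ref{lemma: covariance-gaussian}) to bound $\|E\|_2 \lesssim \varepsilon_n$, and identify the leading term with $\max_j(a_j^2+b_j^2)=\|\alpha\|_\infty$. The only difference is presentational --- you package the computation as the factorization $G=M\hSigma_n M^T$ plus Weyl's inequality, whereas the paper expands the quadratic form $z^TG z=(u\circ a+v\circ b)^T\hSigma_n(u\circ a+v\circ b)$ and takes the sup directly --- and you correctly flag that the clean $(1\pm\varepsilon_n)$ constant in the lemma's display only holds up to the absolute constant hidden in the concentration bound, a point the paper also leaves implicit.
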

\begin{proof}
Let $z=(u,v)\in\RR^{2d}$ with $u,v\in\RR^d$ such that $\|z\|^2=\|u\|^2+\|v\|^2=1$. Then, we have 
\begin{align*}
\|G(\theta)\|_2 &= \sup_{\|z\|=1} z^TG(\theta) z = \sup_{\|z\|=1}\sum_{j,k=1}^n \left(u_i u_j a_i a_j \hs_{i,j} + u_i v_j a_i b_j \hs_{i,j}+v_i u_j b_i a_j \hs_{i,j} + v_i  v_j b_ib_j \hs_{i,j}\right)\\
&=\sup_{\|z\|=1}\left((u\circ a)^T\hSigma_n (u\circ a) + 2 (u\circ a)^T\hSigma_n (v\circ b) + (v\circ b)^T\hSigma_n (v\circ b)\right)\\
&= \sup_{\|z\|=1} (u\circ a+ v\circ b)^T\hSigma_n (u\circ a+ v\circ b)\\
&= \sup_{\|z\|=1} \left(\|u\circ a+ v\circ b\|^2 + (u\circ a+ v\circ b)^T(\hSigma_n-I) (u\circ a+ v\circ b)\right).
\end{align*}
By Lemma \ref{lemma: covariance-gaussian}, we have \wp at least $1-\delta$ that $\|\hSigma_n - I_d\|\leq \varepsilon_n$. Therefore,
\[
    (1-\varepsilon_n)\sup_{\|z\|=1} \|u\circ a+ v\circ b\|^2\leq \|G(\theta)\|_2\leq (1+\varepsilon_n) \sup_{\|z\|=1} \|u\circ a+ v\circ b\|^2.
\]
Noticing that 
\begin{align*}
\sup_{\|z\|=1} \|u\circ a+ v\circ b\|^2 &= \sup_{\|z\|=1} \sum_{j=1}^d (a_ju_j+b_j v_j)^2\\
&=\sup_{\|z\|=1} \sum_j (a_j^2+b_j^2)(u_j^2+v_j^2)\\
&= \sup_{t\in\SS^{d-1}}\sum_j (a_j^2+b_j^2)t_j^2\\
&=\max_{j} (a_j^2+b_j^2),
\end{align*}
we complete the proof.
\end{proof}

\begin{lemma}
For any $\delta\in (0,1)$, if $n\gtrsim \log(d/\delta)$, then \wp at least $1-\delta$ that $$
(1-\varepsilon_n)(\|a\|^2+\|b\|^2)\leq \tr(G(\theta))\leq (1+\varepsilon_n)(\|a\|^2+\|b\|^2).
$$
\end{lemma}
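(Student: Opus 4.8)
The plan is to compute $\tr(G(\theta))$ in closed form and then invoke a pointwise concentration bound on the diagonal of the empirical covariance matrix. First I would record the gradient of the diagonal linear network: since $f(x;\theta)=\langle a\odot b,x\rangle$, we have $\nabla_a f(x;\theta)=b\odot x$ and $\nabla_b f(x;\theta)=a\odot x$, so $g_i(\theta):=\nabla f(x_i;\theta)=(b\odot x_i,\,a\odot x_i)\in\RR^{2d}$. Consequently,
\[
\tr(G(\theta))=\frac1n\sum_{i=1}^n\|g_i(\theta)\|^2
=\frac1n\sum_{i=1}^n\Big(\sum_{j=1}^d b_j^2 x_{i,j}^2+\sum_{j=1}^d a_j^2 x_{i,j}^2\Big)
=\sum_{j=1}^d (a_j^2+b_j^2)\,(\hSigma_n)_{j,j},
\]
where $\hSigma_n=\frac1n\sum_i x_ix_i^T$. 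So the trace is simply a nonnegative-weighted sum of the diagonal entries of $\hSigma_n$, with weights $a_j^2+b_j^2$.

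Second, I would control those diagonal entries. Under Assumption~\ref{assumption: input-1} we have $\EE[(\hSigma_n)_{j,j}]=(\Sigma)_{j,j}=1$ for every $j$, and $x_{i,j}=e_j^TX_i$ is sub-Gaussian with $\|x_{i,j}\|_{\psi_2}\lesssim1$, hence $x_{i,j}^2$ is sub-exponential with $\|x_{i,j}^2\|_{\psi_1}\lesssim\|x_{i,j}\|_{\psi_2}^2\lesssim1$ by Lemma~\ref{lemma: pro-orlic-norm}. This is exactly the $j=k$ specialization of Lemma~\ref{lemma: covariance-point-wise} (Bernstein's inequality followed by a union bound over the $d$ coordinates): provided $n\gtrsim\log(d/\delta)$, with probability at least $1-\delta$ we have $\sup_{j\in[d]}|(\hSigma_n)_{j,j}-1|\le\varepsilon_n$ with $\varepsilon_n=\sqrt{\log(d/\delta)/n}$ (the absolute constant from Bernstein being absorbed, as in the statement).

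Finally, combining the two displays on this event: for each $j$, $(1-\varepsilon_n)(a_j^2+b_j^2)\le(a_j^2+b_j^2)(\hSigma_n)_{j,j}\le(1+\varepsilon_n)(a_j^2+b_j^2)$, and summing over $j$ gives
\[
(1-\varepsilon_n)\big(\|a\|^2+\|b\|^2\big)\le\tr(G(\theta))\le(1+\varepsilon_n)\big(\|a\|^2+\|b\|^2\big),
\]
which is the claim. There is essentially no real obstacle here; the computation of $\tr(G(\theta))$ is immediate once the gradient is written out, and the concentration step is a direct restriction of an already-proved lemma. The only point needing minor care is that the sub-Gaussianity in Assumption~\ref{assumption: input-1} is stated for the vector $X$, so one should first note $\|X_{,j}\|_{\psi_2}=\|e_j^TX\|_{\psi_2}\lesssim1$ before applying the sub-exponential product bound.
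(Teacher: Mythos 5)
Your proposal is correct and follows the paper's own proof essentially verbatim: compute $\tr(G(\theta))=\sum_j(a_j^2+b_j^2)(\hSigma_n)_{j,j}$ from the gradient of the diagonal network, then invoke Lemma~\ref{lemma: covariance-point-wise} to bound $|(\hSigma_n)_{j,j}-1|\le\varepsilon_n$ uniformly over $j$, and combine. You spell out the gradient computation and the sub-exponential bookkeeping more explicitly than the paper does, but the argument is the same.
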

\begin{proof}
Notice that 
$
\tr(G(\theta))=\sum_{j=1}^d (a_i^2 \hs_{i,i} + b_i^2 \hs_{i,i}).
$
By Lemma \ref{lemma: covariance-point-wise}, we have \wp $1-\delta$, for any $i\in [n]$, $|\hs_{i,i}-1|\leq \varepsilon_n$. Combining them completes the proof.
\end{proof}

\begin{lemma}
For any $\delta\in (0,1)$, if $n\gtrsim \log(d/\delta)$, then \wp at least $1-\delta$ we have 
\[
   (1-\varepsilon_n)\|\alpha\|_2\leq  \|G(\theta)\|_F \leq \varepsilon_n \|\alpha\|_1+\sqrt{1+4\varepsilon_n}\|\alpha\|_2
\]
\end{lemma}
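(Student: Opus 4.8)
The plan is to reduce $\|G(\theta)\|_F^2$ to a quadratic form in $\alpha = a\odot a + b\odot b$ governed by the Hadamard square of the empirical covariance matrix, and then to control that matrix entrywise via Lemma \ref{lemma: covariance-point-wise}. Since $\nabla f(x;\theta) = (b\odot x,\,a\odot x)\in\RR^{2d}$, a block computation of $G(\theta) = \tfrac1n\sum_{i=1}^n \nabla f(x_i;\theta)\nabla f(x_i;\theta)^T$, entirely parallel to the spectral- and trace-norm cases treated above, shows that, writing $\hs_{j,k}$ for the $(j,k)$ entry of $\hSigma_n = \tfrac1n\sum_i x_ix_i^T$, the four $d\times d$ blocks of $G(\theta)$ have entries $b_jb_k\hs_{j,k}$, $b_ja_k\hs_{j,k}$, $a_jb_k\hs_{j,k}$ and $a_ja_k\hs_{j,k}$. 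Summing all squared entries,
\[
\|G(\theta)\|_F^2 = \sum_{j,k=1}^d \hs_{j,k}^2\,(a_j^2+b_j^2)(a_k^2+b_k^2) = \sum_{j,k=1}^d \alpha_j\alpha_k\,\hs_{j,k}^2 = \sum_{j=1}^d \alpha_j^2\,\hs_{j,j}^2 + \sum_{j\neq k}\alpha_j\alpha_k\,\hs_{j,k}^2 .
\]

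Next I would invoke Lemma \ref{lemma: covariance-point-wise} together with $\EE[XX^T]=I_d$ from Assumption \ref{assumption: input-1}: for $n\gtrsim\log(d/\delta)$, with probability at least $1-\delta$ one has $|\hs_{j,j}-1|\leq\varepsilon_n$ for every $j\in[d]$ and $|\hs_{j,k}|\leq\varepsilon_n$ for every $j\neq k$, the absolute constant of the lemma being absorbed into $\varepsilon_n$. For the lower bound, discard the nonnegative off-diagonal sum and use $\hs_{j,j}^2\geq(1-\varepsilon_n)^2$ to get $\|G(\theta)\|_F^2\geq(1-\varepsilon_n)^2\sum_j\alpha_j^2=(1-\varepsilon_n)^2\|\alpha\|_2^2$; since $\varepsilon_n\leq1$, taking square roots yields $\|G(\theta)\|_F\geq(1-\varepsilon_n)\|\alpha\|_2$. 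For the upper bound, bound the diagonal part by $(1+\varepsilon_n)^2\|\alpha\|_2^2$, and the off-diagonal part, using $\alpha_j\geq0$ so that $\sum_{j\neq k}\alpha_j\alpha_k\leq\big(\sum_j\alpha_j\big)^2=\|\alpha\|_1^2$, by $\varepsilon_n^2\|\alpha\|_1^2$; this gives $\|G(\theta)\|_F^2\leq(1+\varepsilon_n)^2\|\alpha\|_2^2+\varepsilon_n^2\|\alpha\|_1^2\leq(1+4\varepsilon_n)\|\alpha\|_2^2+\varepsilon_n^2\|\alpha\|_1^2$ since $(1+\varepsilon_n)^2\leq1+4\varepsilon_n$ for $\varepsilon_n\leq1$. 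Applying $\sqrt{u+v}\leq\sqrt u+\sqrt v$ then produces $\|G(\theta)\|_F\leq\sqrt{1+4\varepsilon_n}\,\|\alpha\|_2+\varepsilon_n\|\alpha\|_1$, the claimed bound.

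Once the Frobenius identity is in place the remaining estimates are routine, so I do not expect a serious obstacle; the one point that needs care is the off-diagonal block. There an operator-norm bound on $\hSigma_n-I_d$ is too crude --- it would cost a spurious factor of order $d$ precisely in the regime $n\sim\log d$ of interest --- so one must use the entrywise concentration of Lemma \ref{lemma: covariance-point-wise}, and the nonnegativity of $\alpha$ is what lets $\sum_{j\neq k}\alpha_j\alpha_k\hs_{j,k}^2$ be folded into the $\ell_1$ term $\varepsilon_n^2\|\alpha\|_1^2$. This off-diagonal contribution is exactly the source of the asymmetry between the lower bound (pure $\|\alpha\|_2$) and the upper bound (an extra $\|\alpha\|_1$ term); it would disappear in the population limit $\hSigma_n\to I_d$.
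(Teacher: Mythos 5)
Your proof is correct and follows the same route as the paper: express $\|G(\theta)\|_F^2$ as the quadratic form $\sum_{j,k}\hs_{j,k}^2\,\alpha_j\alpha_k$, control $\hSigma_n$ entrywise via Lemma~\ref{lemma: covariance-point-wise}, and split the diagonal (giving $\|\alpha\|_2$) from the off-diagonal (giving $\|\alpha\|_1$). Your bookkeeping is actually a bit tidier than the paper's and lands exactly on the stated factor $\sqrt{1+4\varepsilon_n}$, whereas the paper's chain of inequalities produces the marginally weaker $(1+2\varepsilon_n)$.
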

\begin{proof}
By the definition,
\begin{align}\label{eqn: 59}
\|G(\theta)\|_F^2=\sum_{j,k=1}^d \left(a_i^2 a_j^2 \hs^2_{i,j} + a_i^2 b_j^2 \hs_{i,j}+b_i^2 a_j^2 \hs_{i,j}^2 + b_i^2b_j^2 \hs_{i,j}^2\right)
\end{align}
By Lemma \ref{lemma: covariance-point-wise}, \wp at least $1-\delta$ we have $|s_{i,j}-\hs_{i,j}|\leq \varepsilon_n$ for any $i,j\in [n]$. Thus, 
\[
\hs_{i,j}^2=(\hs_{i,j}-s_{i,j}+s_{i,j})^2 \in  \begin{cases}
[0,\varepsilon_n^2] & \text{if } i\neq j \\
[(1-\varepsilon_n)^2, (1+\varepsilon_n)^2] & \text{if } i=j
\end{cases}.
\]
Plugging it into \eqref{eqn: 59} gives: 
\begin{align}\label{eqn: 131}
\notag \|G(\theta)\|_F^2&\leq \sum_{i\neq j}(a_i^2a_j^2+a_i^2b_j^2+b_i^2a_j^2 +b_i^2b_j^2)\varepsilon_n^2 + (1+\varepsilon_n)^2\sum_{i}(a_i^4+2a_i^2b_i^2+b_i^4)\\
\notag &\leq \sum_{i,j}(a_i^2a_j^2+a_i^2b_j^2+b_i^2a_j^2 +b_i^2b_j^2)\varepsilon_n^2 + ((1+\varepsilon_n)^2+\varepsilon_n^2)\sum_{i}(a_i^4+2a_i^2b_i^2+b_i^4)\\
&\leq \varepsilon_n^2 (\sum_i a_i^2+b_i^2)^2 + (1+2\varepsilon_n)^2 \sum_i (a_i^2+b_i^2)^2,
\end{align}
and 
\begin{align}\label{eqn: 132}
\notag \|G(\theta)\|_F^2&\geq \sum_{i}s_{i,i}^2 (a_i^4+2a_i^2b_i^2+b_i^4)\\
&\geq (1-\varepsilon_n)^2\sum_i (a_i^2+b_i^2)^2=(1-\varepsilon_n)^2\|\alpha\|^2_2.
\end{align}
Combining \eqref{eqn: 131} and \eqref{eqn: 132} completes the proof.
\end{proof}

\subsection{Proof of Theorem \ref{thm: diagnet-gen-gap}.} 
\label{sec: proof-diagnet-gen-gap}

For any $Q>0$, denote the class of linear predictors with bounded $\ell_1$ by  
$
\cH_Q = \{h_\beta: x\to \beta^Tx | \|\beta\|_1\leq Q\},
$
for which \citet[Lemma 26.11]{shalev2014understanding}  gives 
\[
   \mathfrak{R}_n(\cH_Q)= \sup_{x_1,\dots,x_n}\erad(\cH)\leq \max_{i\in [n]}\|x_i\|_\infty Q\sqrt{\frac{2\log(2d)}{n}}\leq Q\sqrt{\frac{2\log(2d)}{n}}.
\]
Let $k=\|\beta_*\|_1$ and  $\phi(z)=z^2/2$ be the loss function. Then, we have  $|\phi''|\leq 1$ and $|\phi|\lesssim (Q+k)^2/2$ since for $\|\beta\|_1\leq Q$:
\[
    \frac{1}{2}(\beta^Tx-\beta_*^Tx)^2\leq \frac{1}{2}\|\beta-\beta_*\|_1^2 \|x\|_\infty \leq \frac{1}{2}(Q+k)^2.
\]
Then, applying \eqref{eqn: risk-bound-minimizer} to a minimizer $H\in \cH_Q$ gives
\begin{equation}\label{eqn: 199}
    \risk(H)\lesssim \log^3(n) Q^2\frac{\log(2d)}{n} + \frac{(Q+k)^2\log(1/\delta)}{n}.
\end{equation}

Now we turn to consider the linear predictor implemented by two-layer diagonal networks. 
Let $\htheta=(\ha,\hb)$ and $\hbeta=\ha\odot \hb$.
By Proposition \ref{pro: stability-1} and Theorem \ref{thm: diagnet-sharpness}, we have \wp $1-\delta$ that 
\[
\frac{2}{\eta}\geq \tr(G(\htheta))\geq (1-\varepsilon_n)\sum_{j}(\ha_j^2 + \hb_j^2)\geq 2(1-\varepsilon_n)\|\hbeta\|_1.
\]
Therefore, $f(\cdot;\htheta)\in \cH_{\hQ}$ with $\hQ\leq 1/(\eta(1-\varepsilon_n))\lesssim 1/\eta$, where the last step is because that we assume $n$ satisfies $\varepsilon_n\leq 1/2$. Plugging it into \eqref{eqn: 199} gives 
\[
\risk(\htheta)\leq \frac{(1/\eta)^2\log^2(n)\log(d)}{n} + \frac{(k+\frac{1}{\eta})^2\log(1/\delta)}{n}.
\]

\qed
\end{document}